\newtheorem{thm}{Theorem}
\newtheorem{lem}[theorem]{Lemma}
\newtheorem{assumption}{Assumption}
\renewcommand{\SS}{\mathcal{S}}
\renewcommand{\AA}{\mathcal{A}}
\newcommand{\PP}{\mathbbm{P}}
\newcommand{\RR}{\mathcal{R}}
\newcommand{\EE}{\mathbbm{E}}
\newcommand{\EED}{\mathbbm{E}_{\mathcal{D}}}
\newcommand{\norm}[1]{\left\|#1\right\|}
\newcommand{\tr}{\top}
\newcommand{\avg}{\bold{avg}_m(x_t)}
\newcommand{\avgx}{\bold{avg}_m(x)}
\newcommand{\oneexptd}{R1-GTD\,}
\begin{document}

\title{A new Gradient TD Algorithm with only One Step-size:\\  Convergence Rate Analysis using $L$-$\lambda$ Smoothness
}

\author{\name Hengshuai Yao\email hengshuai.yao@sony.com \\
       \addr SonyAI
       }

\editor{***}

\maketitle

\begin{abstract}
Gradient Temporal Difference (GTD) algorithms \citep{gtd,tdc} are the first $O(d)$ ($d$ is the number of features) algorithms that have convergence guarantees for off-policy learning with linear function approximation. 
\citet{bo_gtd_finite} and \citet{dalal2018finite_twotimescale} proved the convergence rates of GTD, GTD2 and TDC are $O(t^{-\alpha/2})$ for some $\alpha \in (0,1)$. This bound is tight \citep{dalal2020tale_twotimescale}, and slower than $O(1/\sqrt{t})$. 
GTD algorithms also have two step-size parameters, which are difficult to tune. In literature, there is a ``single-time-scale'' formulation of GTD. However, this formulation still has two step-size parameters.

This paper presents a truly single-time-scale GTD algorithm for minimizing the Norm of Expected (TD) Update (NEU) objective, and it has only one step-size parameter.  
We prove that the new algorithm, called Impression GTD, converges to the optimal solution in $O(1/t)$ rate with a constant step-size. Furthermore, based on a generalization of the expected smoothness \citep{gower2019sgd_general}, called $L$-$\lambda$ smoothness, we are able to prove that the new GTD with large constant step-sizes converges, in fact, with a {\em linear rate}, to a biased solution. 
Our rate actually also improves \citeauthor{gower2019sgd_general}'s result with a tighter bound under a weaker assumption.  
Besides Impression GTD, we also prove the rates of three other GTD algorithms, one by \citet{ptd_yao},  another called A$^\tr$TD \citep{gtd}, and a counterpart of A$^\tr$TD. 
It appears that these four algorithms are only different in how they build certain data structures from the buffers. 
The convergence rates of all the four GTD algorithms are proved in a {\em single} generic GTD framework to which $L$-$\lambda$ smoothness applies. Empirical results on Random walks, Boyan chain, and Baird counterexample show that Impression GTD converges much faster than existing GTD algorithms for both on-policy and off-policy learning problems, with well-performing step-sizes in a big range. 
\end{abstract}

\begin{keywords}
  Off-policy learning, Gradient-based Temporal Difference learning, The NEU objective, MSPBE, SGD, Convergence rate analysis, Batch size effect, expected smoothness, linear convergence rate 
\end{keywords}

\section{Introduction}\label{sec:introduction}
Off-policy learning is an important learning paradigm in reinforcement learning \citep{sutton2018reinforcement}. An agent selects actions according to a policy (called the {\em behavior policy}), and in the meanwhile, the algorithm evaluates another policy (the {\em target policy}).  
When the two policies are the same, it is called {\em on-policy learning} or simply policy evaluation if the context is clear. When they are different, the problem is called {\em off-policy learning}. Temporal Difference (TD) methods \citep{td} are guaranteed to converge for on-policy learning with linear function approximation \citep{dayan1994td,tsi_td,csaba_book,bertsekas2012dynamic}. However, bootstrapping, such as TD(0), is problematic for off-policy learning because the methods can diverge with function approximation even in the linear case \citep{bertsekas1995counterexample,boyan1994generalization,baird1995residual,gordon1995stable,tsi_td}. Off-policy learning, bootstrapping, and function approximation are problematic for reinforcement learning, and they are often referred to as the ``deadly triad'' \citep{sutton2018reinforcement}.

The Gradient TD family \citep*{gtd,tdc} is an important class of off-policy learning algorithms by stablizing the Ordinary Differential Equation (O.D.E.) underlying the TD update. For example, GTD \citep{gtd} is based on minimizing the NEU objective \citep{tdc}, which gives a stable O.D.E. whose underlying system is essentially a normal equation, transforming the TD update into a system with symmetry. The high computation efficiency of the GTD algorithms makes themselves appealing to learn many policies in parallel from a single stream of data, such as the Horde architecture \citep*{sutton2011horde} and universal off-policy evaluator \citep*{chandak2021universal_off_eval} (with an experiment for type-1 diabetes treatment via simulation).

The GTD algorithms do not suffer from high variances like importance sampling and Emphatic TD methods which are reviewed in Section \ref{sec:background}.  
However, GTD algorithms need further improvement as well because they are not easy to use. Also see \citep*{martha2020gradient} for the ``difficult-to-use'' problem of GTD algorithms from the authors who has used the algorithms for a long time  \citep{sutton2011horde}. GTD algorithms are usually $O(d)$ and this high computation efficiency is due to a two-time-scale formulation of the algorithm in order to mitigate the ``two-sample'' problem for approaching the O.D.E. solution in a stable manner.\footnote{Note the GTD algorithm was formulated in a two-time-scale update \citep{gtd}, but the proof works in a setting that has actually one time scale because the ratio between the two step-sizes is a constant instead of a standard diminishing rate in a strict two-time-scale framework. Also see, e.g., \citep{bo_gtd_finite}. In this constant-ratio formulation, which is called the ``single-time scale'' by literature \citep{martha2020gradient,dalal2020tale_twotimescale},  although the update rates of the two iterators are of the same order, there are still two step-size parameters. The performance of the algorithm is dependent on the step-size ratio as well. The same holds for GTD2 \citep{tdc}. However, the TDC proof was under the standard two-time-scale step-size condition \citep{tdc}. 
} The two-time-scale formulation 
comes with the price of an additional step-size parameter, and
intensive tuning efforts are required in practice in order to have a stable and fast convergence. This poses a great challenge to practitioners. A large family of step-size adaption methods in both supervised learning and reinforcement learning may mitigate the issue, which is however not the scope of this paper. In this paper, we focus on the problem formulation for off-policy learning, and re-examine the necessity of resorting to two time scales for Gradient TD algorithms.   

\citet{martha2020gradient} discussed that the saddle-point formulation of
the MSPBE \citep{bo_gtd_finite} can be utilized to view GTD2 as a single-time-scale update with the joint weight vector from the main and the helper iterators. This enables their TDRC to use $1/\eta$, the ratio between the main step-size and the helper step-size, to be one, i.e., essentially reducing to one step-size. This paper can be viewed as a further continuation of the motivation of TDRC. Furthermore, we examine a few issues that still remain to be solved. For example, the second condition of their Theorem 3.1 requires a condition for $\eta$ to be bigger than the negative of the minimum eigenvalue of some matrix, which is the maximum of some positive eigenvalue. This $\eta$ is much bigger than one in most cases. Thus this theorem still has the flavor of two time scales. 
Our motivation in this paper is to develop a policy evaluation algorithm that has the following desiderata:
\begin{enumerate}
    \item The algorithm should be convergent for off-policy learning. 
    \item It should be stable and guaranteed to converge with bootstrapping.
    \item The convergence is also guaranteed with linear function approximation.
    \item The complexity of the algorithm should be linear in the dimensionality of the weight vector.
\item 
The algorithm should have only one step-size parameter.\footnote{Our work is not to be confused with the GTD formulation in which the ratio between the two step-sizes is a constant.}
\item The algorithm should not suffer from high variances and it should converge much faster than existing GTD algorithms. 
\end{enumerate}
The first four are the continuing goals from existing GTD algorithms. The last two are additional goals that we seek in this paper.   
In the literature there is a belief that off-policy learning ``inherently'' has high variances and it is ``common'' when we conduct off-policy learning, e.g. see \citep*{mahmood2015etd,sutton2018reinforcement}. One point we make in this paper is that this is not necessarily true, at least for achieving the off-policy TD solution. 

Nonetheless, we agree that it seems a good summary of our hard lessons for off-policy learning, for which a few counterexamples for TD were proposed in 1990s \citep{boyan1994generalization,bertsekas1995counterexample,baird1995residual,tsitsiklis1996feature,bertsekas1996neuro}. This motivated importance sampling for off-policy learning \citep{precup2000eligibility,precup2001off}, which solves the divergence issue, however with high variances. Gradient TD algorithms were developed afterwards. They enjoy guaranteed convergence under mild conditions \citep{gtd,tdc},  without the issue of importance sampling TD algorithms. However, GTD algorithms have two step-sizes, which make the algorithms hard to use in practice. Emphatic TD is a recent new off-policy learning approach, which incrementally corrects the sample distribution towards a stable O.D.E underlying the update \citep{etd,mahmood2015etd}. However, the variances of ETD are huge \citep{sutton2018reinforcement}. Empirical results on simple domains showed that stable performance of ETD was only obtainable with tiny step-sizes \citep{ghiassian2017first_etd}, which gave extremely slow convergence.      
This paper continues the exploration of first-order off-policy learning and the linear-complexity triumph by GTD methods. We propose a single-time-scale formulation so that there is only one step-size to tune or adapt in practice, without the high-variance price of importance sampling or emphatic TD algorithms. A very recent work by \citet{shangtong_imgtd} has the same motivation as the present paper, which we are going to discuss in Section \ref{sec:imgtd}.  

The paper is organized as follows. In Section \ref{sec:background}, we review the background on the MDP framework, TD, GTD, GTD2 and TDC, and some latest progress. Section \ref{sec:imgtd} presents the basic version of Impression GTD, and Section \ref{sec:minibatchPE} extends the algorithm to the minibatch off-policy evaluation. Section \ref{sec:experiments} contains the empirical results of Impression GTD for on/off-policy learning. Finally, Section \ref{sec:conclusion} concludes the paper.

The main theoretical results of this paper are in Section \ref{sec:theory}. We analyze the convergence rates of Impression GTD under constant step-sizes. Our analysis is conducted on a generic GTD algorithm that includes Impression GTD, Expected GTD, A$^\tr$TD, and a counterpart of A$^\tr$TD. 
We first show that all the four GTD algorithms converge to the optimal solution at a rate of $O(1/t)$, with a constant step-size that depends on the largest Lipschitz constant of the stochastic gradient. 

Furthermore, we show that with larger constant step-sizes, the generic GTD algorithm converges to a neighborhood of the optimal solution, at a linear rate which depends on the variances in the feature transitions, $\ell_2$ norm of the mean of the feature transition matrix ($A$), and the batch sizes of two buffers. This result is achieved by first establishing a SGD rate under a condition for the loss function and the sampling distribution for the stochastic loss, which we call the  $L$-$\lambda$ smoothness. 
We show that our GTD problem formulation (i.e., the NEU objective and a novel sampling method called independent sampling) satisfies the $L$-$\lambda$ smoothness, to which our main SGD rate result applies.

\section{Background, Related Work \& Discussions}\label{sec:background}
In this section, we review MDPs for the problem setting. We also review GTD, GTD2 and TDC algorithms for off-policy learning, as well as some latest works for a discussion. 

\subsection{MDP}
Assume the problem is a Markovian Decision Process (MDP) \citep{bertsekas2012dynamic,sutton2018reinforcement,csaba_book}. The state space is $\SS$ with $N$ possible states. For simplicity, we denote the space by $\SS=\{1, 2, \ldots, N\}$, and a state in $\SS$ is denoted by the integer $i$, and a state sample by $s$. The action space is $\AA$ and an action is denoted by $a$. 
Let $\PP$ be a probability measure assigned to a state $i\in \SS$, which we denote as $\PP(\cdot|i,a)$. Define $R: \SS\times \AA \to \RR$ as the reward function, where $\RR$ is the real space. The reward from a state $i$ to state $j$ is denoted by $r(i,j)$.  
Let $\gamma \in (0,1)$ be the discount factor. 
 
We consider the general case of stochastic policies. Denote a stochastic policy by a probability measure $\pi$ applied to a state $s$: $\pi(\cdot|s) \to [0,1]$. 
At a time step $t$,
the agent observes the current state $s_t$ and takes an action $a_t$. The environment provides the agent with the next state $s_{t+1}$ and a scalar reward $r_{t+1}=R(s_t, a_t)$. The main task of the policy evaluation agent is to approximate the value function that is associated with a policy $\pi$:
\begin{gather*}
  V_\pi(s) = \EE _{\pi}\Big[\sum_{t=0}^{\infty } \gamma^t  r_{t+1} \Big],\notag
\end{gather*}
where $a_t \sim \pi(\cdot|s_t)$ and $s_{t+1} \sim \PP(\cdot|s_t,a_t)$ for all $t\ge 0$. 

\subsection{The NEU objective, Expected GTD and GTD}
Temporal Difference (TD) methods \citep{td} are a class of algorithms for approximating the value function of a policy $\pi$. Using a number of features, we represent the approximation by $\hat{V}^\pi (s)=\phi(s)^\tr  \theta$. Given a transition sample $(s_t, r_t, s_{t+1})$ following $\pi$, the TD($0$) algorithm updates the weight vector by
\[
\theta_{t+1} = \theta_t + \alpha \delta_t \phi_t; \quad \phi_t=\phi(s_t)
\]
where $\delta_t=r_t + \gamma \phi_{t+1}^\tr \theta_t - \phi_t^\tr \theta_t$, called the TD error. Under mild conditions, TD methods are guaranteed to converge to a solution to a linear system of equations, $
\EE[\delta_t\phi_t] = 0$. However, when the distribution of the states is not from policy $\pi$ (the so-called ``off-policy'' learning problem), TD methods can diverge, e.g., see \citep{baird1995residual,sutton1995generalization,bertsekas1996neuro,tsi_td}. 

The Gradient temporal difference (GTD) algorithm \citep{gtd} is guaranteed to converge for off-policy learning with linear function approximation. It minimizes the NEU objective \citep{tdc}, the $\ell_2$ norm of the expected TD update:
\begin{align}
\bold{NEU}(\theta) &= \norm{\EE[\delta\phi]}^2 \nonumber\\
&= (A\theta + b)^\top (A\theta + b), \label{eq:Ab}
\end{align}
where $\EE[\delta\phi]=A\theta + b$, with $A=\EE[\phi(\gamma \phi' - \phi)^\top]$ and $b=\EE[\phi r]$.\footnote{We focus on TD(0) in this paper.} The expectation operator is taken for a transition tuple, $(\phi, r, \phi')$, which follows policy $\pi$. Note that the distribution of the state underlying $\phi$ is not necessarily the stationary distribution of $\pi$ (which would be on-policy). We follow the convention of defining the matrix $A$ in this format as in \citep{tsi_td,bertsekas1996neuro}, which is negative definite in the on-policy case.  

The NEU objective function first appeared in \citep{ptd_yao}. For off-policy learning, the matrix $A$ is not necessarily negative definite. This is the source of the instability and divergence troubles with TD methods. The nice property of NEU is that it introduces a stable O.D.E., by bringing a symmetry into the underlying system. The gradient of the NEU objective is $A^\tr (A\theta +b)=0$, in contrast to the original O.D.E. of TD(0), which is, $A\theta +b=0$. As long as $A$ is non-singular (but it is not necessarily negative definite), the gradient descent update, $\theta_{\tau+1} = \theta_\tau -\alpha A^\tr (A\theta_\tau +b)$ is stable and convergent for some positive step-size $\alpha$. It gives the same solution as the original one to $A\theta+b=0$, i.e., the so-called ``LSTD solution'' \citep{lstd} or ``TD solution''.  Note that this reinforcement learning approach is very special in that this method is usually not used for solving linear system of equations in iterative methods, because the system $A^\tr (A\theta_\tau +b)=0$, called the {\em normal equation}, induces slower iterations than the original one. This is due to that $A^\tr A$ has worse conditioning than $A$ (the condition number is squared). The normal equation is usually only used in iterative methods when $A$ is not a square matrix, e.g., in an over-determined system. The context of off-policy learning makes this method meaningful because the necessity of a stable O.D.E. even though matrix $A$ is a square matrix here.        

\citeauthor{ptd_yao} has a 
 gradient descent algorithm for TD too. It is $O(d^2)$ because it builds data structures in the form of a matrix and a vector from the samples \citep{bradtke1996linear,lspe96,lstd,lspe03} or the inversion of the matrix \citep*{xu2002efficient}. GTD \citep{gtd} reduces the complexity of \citeauthor{ptd_yao}'s algorithm to $O(d)$ by a two-time scale stochastic approximation trick. 

To understand how GTD makes $O(d)$ computation possible, let's start with the least-mean-squares (LMS) algorithm. Let $\phi$ be the feature vector we observe every time step, and $y$ is the output. Then the LMS update, $\Delta \theta =-(\phi^\tr \theta - y) \phi$, converges to the solution to the linear system, $A\theta = b$ where $A =  \EE[\phi\phi^\tr ]$, and $b= \EE[\phi y]$. LMS is a stochastic gradient method that is $O(d)$ per time step. TD methods are similar in this regard. Let $\phi$ be the feature vector we observe every time step, $r$ is the reward and $\phi'$ is the next feature vector. 
For TD(0), see equation \ref{eq:Ab} for the definition of $A$ and $b$. 

Now we aim for an $O(d)$ method that approximates $\Delta \theta = A^\tr (A\theta +b)$, which is just the stationary form of \citeauthor{ptd_yao}'s gradient descent algorithm. 
What would be a good sample of $A^\tr $? That would be $(\gamma\phi' - \phi)\phi^\tr $. How to get an estimate of $A\theta +b$? We already have this in the TD algorithm. That is $\delta\phi$. So it's a question of putting these two estimations together. 
Note that here just putting them together by multiplication does not work, because the expectation of the two terms cannot be taken individually: they are dependent on each other.\footnote{ \citet{shangtong_imgtd} started with this same observation too (independent work). \citet{gtd} also said that if we sample both
of the terms to form a product, then the result will be biased by their correlation. This arises from the well-known double-sampling issue in reinforcement learning. \citet*{mengdi_stochastic_2exp} considered a general class of optimization problems that involve the composition of two expectation operators, for which SGD does not apply. }  
GTD's idea is to slow down the second estimation, $A\theta +b$, in such a way that we don't use the latest transition. 
This was done by estimating the TD update separately, introducing a helper vector $u$:\footnote{The GTD paper \citep{gtd} has a typo in their equation (4). With $\EE[\delta \phi]$ defined therein, the algorithm updates in equation 8 and equation 10 would be unstable. 
}
\begin{align}
\theta_{t+1} &= \theta_t - \alpha_t (\gamma \phi_t' - \phi_t)\phi_t^\tr u_t, \nonumber \\
u_{t+1} &= u_t + \beta_t (\delta_t \phi_t - u_t), \label{eq:gtd}
\end{align}
where $\alpha_t, \beta_t>0$ are two step-size parameters.
Note that $u_t$ can be viewed as a historical average of the TD update. 
In \citet{ptd_yao}, the following gradient descent algorithm was proposed:
\begin{equation}\label{eq:expectedGTD}
\theta_{t+1} = \theta_t - \alpha_t A_t^\tr (A_t\theta_t + b_t), 
\end{equation}
where $A_t$ and $b_t$ are consistent estimations of $A$ and $b$, which are guaranteed to converge due to the law of large numbers \citep{tadic_td}. We call the algorithm in equation \ref{eq:expectedGTD} the {\em Expected GTD} algorithm because it is a GTD algorithm from the expected update under the empirical distribution. 

In \cite{gtd}, they discussed an alternative algorithm that applies $A_t^\tr $ to the sample of $A \theta_t + b$ (which is the TD update, $\delta_t\phi_t$). It is a hybrid between TD and Expected GTD:
\begin{equation}\label{eq:attd}
\theta_{t+1} = \theta_t -\alpha_tA_t^\tr \delta_t\phi_t, 
\end{equation}
which was called $\mbox{A}^\tr$TD \citep{gtd}. There is another hybrid 
\begin{equation}\label{eq:R1-GTD}
\theta_{t+1} = \theta_t -\alpha_t (\gamma \phi_{t+1} - \phi_t)\phi_t^\tr (A_t\theta_t + b_t), 
\end{equation}
which we call {\em Rank-1 GTD} or {\em R1-GTD} for short. $\mbox{A}^\tr$TD  and R1-GTD are counterparts that apply sampling either to the TD update or to the preconditioner. The rank-1 matrix applies for the purpose of stabilizing the TD update on average.   

About the signs in the updates of $\theta$ and $u$: $u$ is the averaged TD update, which gives $\EE[\delta \phi]= A\theta + b$ in the long run. In the update for $\theta$, the rank-1 matrix is a sample of $A^\tr $. Together with $u$, the expected update provides an unbiased estimate for $A^\tr (A\theta+b)$. It thus makes sense to use the minus sign in the update of $\theta$, instead of the positive sign in the GTD paper \citep{gtd}, following the convention of gradient descent.   

The complexity reduction with GTD is nicely delivered. It is $O(d)$ and guaranteed to converge for off-policy learning. However, there is a non-trivial practical problem. In particular, to decouple the two terms in the expectation, the price we paid is an additional update, requiring us to tune two step-size parameters when using the GTD algorithm in practice. 

\cite*{maei2009convergent_nonlinear} generalized the MSPBE to the nonlinear function approximation, and proved the convergence of the generalized GTD2 and TDC to a local minima. The LMS2 algorithm is an extension of the idea of GTD to supervised learning \citep*{lms2}. 

The neural GTD algorithm \citep*{neural_gtd2} is a projected primal-dual gradient method. It has only one step-size, however, with the addition of a projection ball operation (probably needed to keep the algorithm bounded). The algorithm is actually more similar to GTD2, because the helper iterator reduces to the same as that of GTD2 in the linear case.

\subsection{The MSPBE objective, GTD2 and TDC}\label{tdc}
The algorithms of GTD2 and TDC (TD with a correction) were derived using the MSPBE (mean-squared Projected Bellman Error) \citep{tdc}. They showed that for on-policy evaluation, they are faster than GTD. However, the reason of the speedup was not explained by the paper and not known to the literature. Here we provide a simpler derivation for the two algorithms and it also explains why they are faster.  

{\bfseries GTD2}. 
In the first iterator of the GTD update, we used the averaged TD update, $u_t$. Now let's see if we can speed up $u_t$. This can be done by applying a preconditioner to the update of $u_t$. In particular, if we replace the second iterator $u_t$ with 
\[
\bar{w}_{t+1} =  C^{-1}u_t. 
\]
The expected behavior of $\bar{w}$ is then described by the following iteration:
\[
\bar{w}_{\tau+1} = C^{-1} (A \theta_\tau +b ). 
\]
Note that $h =  C^{-1} (A \theta_\tau +b ) $ is just the O.D.E. underlying LSPE \citep*{lspe96,lspe03,lspe04}. \citet{ptd_yao} also showed that LSPE is a preconditioning technique. 
GTD2 and TDC can be derived using two ways of writing the O.D.E. 

Using the form of $h$ above, we can solve $\bar{w}$ with stochastic approximation (which is just the LMS algorithm), treating the TD error, $\delta_t$, as the target signal and predicting it with the feature vector. This leads to GTD2:
\begin{align}\label{eq:gtd2}
    \theta_{t+1} &= \theta_t - \alpha_t (\gamma\phi_{t}' - \phi_t)\phi_t^\tr w_t, \nonumber \\
    w_{t+1} &= w_t - \beta_t(  \phi_t^\tr  w_t-\delta_t)\phi_t. 
\end{align}
The O.D.E. for the $\theta$ update is $h'= A^\tr  C^{-1} (A \theta_{\tau} + b)$. 
Thus the underlying matrix is symmetric and the stability of the system can be achieved provided that (1) $A$ is non-singular (but not necessarily negative definite, which is the case of general off-policy learning); and (2) $C$ is symmetric and positive definite.
This O.D.E. is just the gradient of the MSPBE objective \citep{tdc}. In the matrix-vector form, it can be written as
\[
\bold{MSPBE}(\theta)= (A \theta + b)^\tr  C^{-1} (A \theta + b).
\]

{\bfseries TDC}. 
Let's write $h$ in another form. Note that $A$ can be split into two parts, $A = D - C$, where $D=\gamma \EE[\phi \phi'^\tr ]$. Thus $h= C^{-1}((D-C)\theta +b)=-\theta + C^{-1}(D\theta + b)$. 
Thus we can simply apply stochastic approximation again to solve $h$ incrementally. This is a LMS procedure too. This time we treat the stochastic sample given by $\gamma \phi_t'^\tr \theta + r$, as the target for regression. This leads to the following update:
\begin{align}
    \theta_{t+1} &= \theta_t - \alpha_t (\gamma\phi_{t}' - \phi_t)\phi_t^\tr w_t, \nonumber \\
w_{t+1} &= w_t - \beta_t \theta_t + \beta_t(\gamma\phi_t'^\tr \theta_t +r_t - \phi_t^\tr w_t)\phi_t.  
\end{align}
This is another form of TDC \citep{tdc}. In the original form of TDC, $\theta$ has a different iteration from GTD2 while $w$ is the same as that in GTD2. This form is a ``transposed'' version of TDC: the $\theta$ update is the same as GTD2, while the $w$ update is different. 

To derive the original TDC, we start with the same transformation but this time to $h'$:
\begin{align*}
h' & =(A^\tr  C^{-1}) (A \theta_{\tau} + b) \\
& = (D^\tr  -C)C^{-1} (A \theta_{\tau} + b)\\
&= -(A \theta_{\tau} + b) + D^\tr  \left(C^{-1} (A \theta_{\tau} + b)\right).
\end{align*}
The first term is just the expected update of TD. 
The second term can be approximated by breaking the rank-1 matrix vector product, and not forming the matrix explicitly \citep{tdc}. 
Note that $\gamma \phi_t'\phi_t^\tr$ is a sample of $D^\tr$. 
This O.D.E. derives the $\theta$ update of the TDC algorithm, while the helper update remains the same as GTD2:
\begin{align}\label{eq:tdc}
    \theta_{t+1} &= \theta_t - \alpha_t \left[- \delta_t(\theta_t)\phi_t + \gamma \phi_t' (\phi_t^\tr  w_t)\right], \nonumber \\
    w_{t+1} &= w_t - \beta_t(\phi_t^\tr  w_t-\delta_t)\phi_t. 
\end{align}

These are real-world applications of the preconditioning technique from iterative algorithms and numerical analysis \citep{saad03:IMS,horn2012matrix,golub2013matrix} to reinforcement learning. Accelerated learning experiments can be found in \citep{ptd_yao}, covering TD, iLSTD \citep*{ilstd} and LSPE, which shows the spectral radius of the preconditioned iterations improves over expected GTD and TD. 
GTD2 and TDC were shown to converge faster than GTD, and TDC is slightly faster than GTD2, for on-policy learning problems \citep{tdc}. 

\citet{baird1995residual}, 
\citet*{scherrer2010should_bellerr}, 
\citet*{sutton2018reinforcement}, \citet*{zhang2019_rg} and \citet*{patterson2021investigating_etd} had good discussions on the learning
objectives for off-policy learning. In some sense, the Bellman error is indeed a tricky objective to minimize because it involves two expectation operators. In particular, take the mean squared Bellman error for example, where the transition follows the policy $\pi$ and the dynamics of the MDP, $\EE[r+\gamma V(s') - V(s)]^2= \EE[\EE(r+\gamma V(s') - V(s))^2 |s]$. 
In the nonlinear and off-policy i.i.d. case, the inside, the conditional expectation is problematic. One either needs access to a simulator, by resetting it to the same state we just proceeded from there, or hopes the environment is deterministic. This needs {\em two independently sampled successor states}, which is the so-called {\em double-sampling} problem, a well-known challenge in reinforcement learning. Sampling two independent successors is not practical in online learning and other scenarios, because we cannot go back in time. \citet{ran_gtd} circumvents this issue with a two-time scale approach, by generalizing the ``waiting'' idea of GTD and GTD2/TDC. They proposed a few algorithms that are based on residual gradient \citep{baird1995residual}, regularization \citep*{hastie2009elements}, and momentum \citep{polyak1992acceleration}. It is also possible to extend GTD, TDC and our method to add momentum. One of their algorithms performs much faster than GTD2. However, the algorithm has four hyper-parameters, and the two step-sizes used in experiments do not satisfy the two-time scale requirement and thus their empirical results are not covered by their theory. The helper iterator was actually updated much slower than the main iterator. In this paper, we focus on momentum-free algorithms and our method has convergence guarantees with only one step-size parameter.  

Policy evaluation algorithms are generally in the dimensions of first-/second-order and on-/-off policy learning. In particular, the second-order TD method is on-policy and off-policy {\em invariant}, in contrast to the diverse forms of first-order off-policy TD algorithms, which all have different updates from the online TD methods.  
To be concrete, let's consider an off-policy learning algorithm that minimizes a generic loss of the form, 
\[
E(\theta) = \EE[\delta\phi] ^\top U^{-1}\EE[\delta\phi],
\]
where $U$ is any S.P.D matrix.

The derivation by \citet*{pan2017accelerated_atd} follows through without any problem. In particular, let $H$ be the Hessian matrix of $E$. Then the Newton method minimizing $E$ takes the form of (in the expectation)
\begin{align*}
\theta_{\tau+1} &= \theta_\tau - \alpha_\tau H^{-1} \nabla_\theta E|_{\theta=\theta_\tau}\\
&=\theta_\tau -  \left(A^\top U^{-1}A  \right)^{-1} \nabla_\theta E |_{\theta=\theta_\tau}\\
& =\theta_\tau - \alpha_\tau  \left(A^\top U^{-1}A  \right)^{-1} A^\top U^{-1}(A\theta_\tau+b) \\
& = \theta_\tau -  \alpha_\tau A^{-1}  U (A^\top)^{-1}   A^\top U^{-1}(A\theta_\tau+b) \\
& = \theta_\tau -  \alpha_\tau A^{-1} (A\theta_\tau+b) \\
& = \theta_\tau -  \alpha_\tau A^{-1} \EE[\delta(\theta_\tau) \phi ].
\end{align*}
Using the stochastic approximation trick, 
this means for such a generic function $E$, Newton method has a form that is {\em invariant} in $U$:
\begin{equation}\label{eq:2ndordertd}
\theta_{\tau+1} = \theta_\tau -  \alpha_\tau A^{-1} \delta \phi. 
\end{equation}
Why is it invariant in $U$? At a high level, this is because $U$ is an artifact, in particular, a preconditioner that improves the conditioning of the underlying O.D.E. of GTD. Preconditioning, by definition, is to accelerate convergence without changing the solution.

The update \ref{eq:2ndordertd} is exactly the Newton TD method proposed and analyzed by \citeauthor{Yao_direct_preconditioning} for policy evaluation, by using an estimation of matrix $A$. 
\citet{pan2017accelerated_atd} rediscovered this algorithm by minimizing MSPBE for off-policy learning.   
In fact, their derivation will hold for minimizing NEU as well. The Newton TD method minimizing the NEU objective also leads to this update. This can be shown from $E$ by setting $U=I$, the identity matrix. This means while there are a number of diverse first-order off-policy TD algorithms, the second-order TD is invariant both in the sense of on-policy or off-policy, and a generic loss in the form of $E$. 
Thus probably we don't have to differentiate between on-policy or off-policy TD for the second-order methods, especially for Newton. No changes (like the case of the first-order TD methods) are required to make the algorithm in equation \ref{eq:2ndordertd} in order for it to converge for off-policy learning.  

Most of second order TD methods are $O(d^2)$ per time step in computation. In certain problems, when $A$ is sparse or low-rank, one can gain acceleration by taking advantage of the structure, e.g., sparse transitions \citep{ptd_yao} and low-rank approximation \citep{pan2017accelerated_atd}. However, in general, the second-order methods are not as efficient as the first-order methods in computation when deployed online. Readers are referred to a linear-complexity approximate Newton method \citep{ran_gtd}, which accelerates gradient-based TD algorithms for minimizing MSBE.
Our work in this paper is in the thread of $O(d)$, first-order TD for off-policy learning, for which there is no such invariance like that holds for the second-order TD methods. 

\subsection{The Saddle-Point Formulation}
The saddle-point or mini-max formulation of GTD, GTD2 and TDC \citep{bo_gtd} can be derived by observing that the helper iterator in the three algorithms is expected to give a good estimation of the expected TD update (GTD), or a least-squares solution (GTD2 and TDC). Take the helper iterator in GTD (equation \ref{eq:gtd}) for example. We want the helper iterator to get to $A\theta+b$ as close as possible. Thus the following loss containing the inner product will be maximized if $u=A\theta +b$:
\[
L(u|\theta) = u^\tr (A\theta + b) - \frac{1}{2} u^\tr u,
\]
which can be seen from $\nabla_u L(u|\theta)=0$. That is, $\arg\max_u L(u|\theta)=u^*=A\theta+b$. Intuitively, $u$ should be along the direction of $A\theta+b$ (from the inner product), and the magnitudes should be the same too (from the $\ell_2$-norm, which gives the length requirement). Therefore, $L(u^*|\theta) = \frac{1}{2}\norm{A\theta+b}^2=\frac{1}{2}\bold{NEU}(\theta)$. 
This gives a mini-max formulation of GTD:
\[
\theta^* = \arg\min_{\theta} \max_u L(u|\theta). 
\]
This is a very interesting formulation of GTD. Seeking the saddle-point solution is an important class of problems in optimization, e.g., see \citep{saddle_point_Nemirovski}. The problem also dates back to game theory from the beginning \citep{gametheory_Neumann}. Many later works on GTD are built on this formulation, e.g., see \citep*{saddle_point_du,mengdi_wang_gtd_like_nasa,saddle_doina_svrg,csaba_gtd_22}. Here we briefly review \citeauthor{mengdi_wang_gtd_like_nasa}'s Nested Averaged Stochastic Approximation (NASA) algorithm.    

The NASA algorithm aims to minimize a nested loss function of the form,\footnote{This nested function is a further extension of the stochastic composition problem \citep{mengdi_stochastic_2exp}.} $\min_\theta f_1(f_2(\theta))$, in a stochastic fashion.
For example, in the GTD setting, $f_1(x)=\norm{x}^2$, and $f_2=A\theta+b$. NASA features in the use of the averaging technique. Let's interpret their algorithm in the GTD setting. For minimizing the NEU objective, we can write their algorithm by the following:
\begin{align*}
 g_t &= \arg\max_g \left\{g^\tr z_t -\frac{\beta_t}{2}\norm{g}^2  \right\}\\
 \theta_{t+1} &= \theta_t - \tau_t g_t\\
 z_{t+1} &= (1-a \tau_t)z_t + a\tau_t (\gamma \phi_t' - \phi_t)\phi_t ^\tr u_t\\
 u_{t+1} & = (1-b\tau_t)u_t + b\tau_t \phi_t \delta_t(\theta_{t+1}),
\end{align*}
where $\delta_t(\theta_{t+1})$ is the TD error realized with the weight vector $\theta_{t+1}$. 
We can start understanding NASA with the simplest connection. The iteration $u_t$ is similar to GTD (equation \ref{eq:gtd}), hereby using $b\tau_t$ to smooth the TD updates. The vector $z$ provides another layer of averaging over the GTD update (i.e., the change in $\theta$ in equation \ref{eq:gtd}), using $a\tau_t$ to smooth. That is, $z$ is expected to get close to the gradient of NEU. For $g_t$, we use here the equivalent $\arg\max$ formulation instead of the original $\arg\min$, to see the saddle-point formulation clearly. The update of $\theta$, as a result, switches to the negative sign, which is the gradient descent style. The major update 2.6 in their Algorithm 1 is an averaging style.

Therefore, in the context of minimizing NEU, the major improvement of NASA over GTD is that there is an additional averaging over the GTD update, and an introduction of $\ell_2$ regularization. NASA also generalizes to minimize other nested loss functions than NEU, which include Stochastic Variational Inequality, and low rank approximation. They proved the almost sure convergence of NASA under the diminishing step-size for $\tau_k$ and constant $a, b$ and $\beta$, for the class of functions of $f_1$ and $f_2$ with Lipschitz continuity in their gradients. Algorithms and analysis of averaged updates over GTD algorithms can also be found in, e.g., \citep{csaba_lin_stochastic18,csaba_gtd_22}. For  analysis on more general averaging algorithms, one can refer to, e.g.,  \citep{polyak1992acceleration,averaging_sgd_lin,svrg}.

\subsection{Related work}
\citet*{bo_gtd_finite} performed the first finite-sample analysis for GTD algorithms, and showed that GTD and TDC/GTD2 are SGD algorithms in the formulation of minimizing a primal-dual saddle-point objective function, with a convergence rate of about $t^{-1/4}$ in terms of value function approximation. 
\citet*{dalal2018finite_twotimescale} established the convergence rates of GTD, GTD2 and TDC under diminishing step-sizes. Later they showed that, with the step-sizes scheduled by $1/t^\alpha$ and $1/t^\beta$ where $0<\beta<\alpha<1$, the convergence rates are $O(t^{-\alpha/2})$ and $O(t^{-\beta/2})$ for the two iterators, and the bounds are tight \citep*{dalal2020tale_twotimescale}. \citet*{xu2019two_time_gtd} had the first non-asymptotic convergence analysis for
TDC under Markovian sampling. 
\citet*{xu2021sample_twotimescale} analyzed the convergence rate of linear and nonlinear TDC with constant step-sizes. \citet*{xu2020finite_q_learning} analyzed the convergence rate of a Q-learning algorithm with a deep ReLU network. Their algorithm also has a projection ball applied to the TD update. \citet{yu2018convergence_gtd} had a comprehensive convergence analysis of GTD and mirror-descent GTD, with an extensive treatment of the eligibility trace under both constant and diminishing step-sizes.

\citet*{gupta2019finite} gave an error bound for stochastic linear two-time scale algorithms with fixed step-sizes. They also derived an adaptive learning rate for the faster iteration. 
The convergence rate of general two-time scale stochastic approximation is studied by \citet*{hong2020two_bilevel_optimization} and \citet*{doan2021finite}. 

An important early off-policy learning exploration is based on importance sampling \citep*{precup2000eligibility,precup2001off}. However, importance sampling algorithms have an inherent problem in the reinforcement learning context. The variance is high due to small probabilities of taking certain actions in the behavior policy, because their products appear in the denominator(s) of certain quantities. The variance of importance sampling ratios may grow exponentially with respect to the time horizon, e.g., see \citep*{xie2019towards_IS}.  
Weighted importance sampling \citep*{mahmood2014weighted} and clipped ratios \citep*{vtrace} can mitigate the issue and reduce the high variances, however, at the price of providing a biased solution. Return-conditioned importance sampling  \citep*{IS_conditioned_return} reduces variances by ruling out the actions that have no effect on the return.  
Some methods are based on the importance sampling over the stationary distributions of behavior and target policies \citep*{hallak2017consistent,liu2018breaking_IS_stationary,xie2019towards_IS,gelada2019off_IS_stationary}, instead of the product of policy ratios. 

Emphatic TD (ETD) \citep*{etd,yu2015convergence_etd}, a non-gradient-based method, has only one step-size in its update rule. However, ETD does not converge to the TD solution and it suffers from high variances. One has to use small step-sizes for ETD, which results in slow convergence \citep*{ghiassian2017first_etd,ghiassian2021empirical_etd}. ETD is still problematic on Baird counterexample due to high variances even though very small step-sizes were used \citep{sutton2018reinforcement}. Interested readers may refer to \citep{hallak2015generalized_etd,gelada2019offpolicy_etd} for bias-variance analysis, and variance reduction \citep{lyu2020variancereduced_etd} on ETD.

 \section{Impression GTD}\label{sec:imgtd}
GTD has two step-sizes. In this section, we introduce a new Gradient TD algorithm that has only one step-size, e.g., see the six design desiderata as discussed in Section \ref{sec:introduction}. 
 
 Our idea is to decouple the two estimations in GTD by a special sampling method that is going to be detailed later. To do this, we use a buffer that stores transitions. At a time step $t$, we sample two i.i.d. transitions from the buffer,   
$(\phi_1, r_1, \phi_1')$ and $(\phi_2, r_2, \phi_2')$. Note the shorthand $\phi_1=\phi(s_1)$ is for some state $s_1 \in \SS$, and $\phi_2=\phi(s_2)$ for some $s_2\in \SS $. 

Our algorithm updates the parameter vector by
\begin{equation}\label{eq:one_update}
\Delta\theta_{t} = - \alpha_t (\gamma\phi'_1 - \phi_1) \bold{sim}(\phi_1,\phi_2)\left[(\gamma\phi_2' - \phi_2)^\tr\theta_t + r_2\right],
\end{equation}
where $\alpha_t$ is a positive step-size and $\bold{sim}$ is some similarity measure for the two input feature vectors. The update is interesting that the similarity seemingly ``pairs'' the gradient of a TD error on a transition with the TD error on another transition.  

Let's understand this update. If $r_2$ is a big reward, it likely creates a large TD error (the last term in the bracket). This TD error is bridged to adjust $V(s_1)$ and $V(s_1')$. That is, a TD error {\em impresses} another (independent) sample, based on which the parameters are adjusted. 
The bigger is the similarity between the two feature vectors, the larger impression of the TD error from one sample is going to make on the other. We call this new algorithm the {\em Impression} GTD.  

In this paper, we focus on the similarity measure being the correlation between the two feature vectors. 
Let us define $\phi = (\gamma\phi'_1 - \phi_1) \phi_1^\tr\phi_2$. The update can be rewritten into
\begin{align*}
\theta_{t+1} &=  \theta_t - \alpha_t \left[\gamma{\phi_2'}^\tr\theta_t +  r_2 - \phi_2^\tr\theta_t \right]\phi\\
 &=  \theta_t - \alpha_t \left[\gamma V(s_2')+  r_2 - V(s_2) \right]\nabla_{\theta} J.
\end{align*}
where $\alpha$ is the step-size and $\nabla_{\theta} J=\phi$. The overloading notation $\nabla_{\theta} J$ will be explained shortly. 

Interestingly, most incremental $O(d)$ TD algorithms known to the authors update the parameter based on one sample. This algorithm use two independent transitions for the update. It looks like the TD update, but not exactly so (because the transposed term has $\phi_2$ in the first line instead of $\phi$). In fact, it is a modification of the TD(0) update (or the so-called bootstrapping), whose key idea is to treat $V(s_{t+1})$ as a constant target in taking the gradient of the TD error, by combing the two sample transitions to form {\em truly an SGD} algorithm that minimizes the NEU objective.\footnote{\citet{gtd} had a comment that GTD is a SGD method. This is not very precise. GTD is two-time scale, and it is not the standard, single-time-scale SGD. We noted in literature this interpretation of GTD (and GTD2 and TDC) is not rare, e.g., see \citep{td_survey}. See also the discussions on page 35 by \citet{csaba_book}. A better terminology for GTD, GTD2 and TDC may be that they are pseudo-gradient methods as suggested. The exception is when GTD uses exactly the same step-size for the two iterators in the saddle-point formulation \citep{bo_gtd_finite}. Empirical results show that in order for good convergence across domains, one has to use different ratios for the two step-sizes, e.g., see \citep{tdc,martha2020gradient}. For example, Figure 2 of \citep{martha2020gradient} shows that GTD2 generally prefers a larger step-size for the helper iterator in four out of five domains.  However, for TDC, in three domains, it prefers actually slower update for the helper iterator. This is not covered by the theory of two-time scale stochastic approximation. }

There has been a mystery about the function $J$ for decades. In particular, what form should $J$ take for the convergence guarantee of TD methods? The TD methods were developed by treating $V(s_2')$ as the target and taking just $-\nabla_\theta V(s_2)$ as $\nabla_\theta J$. For example, it is common in literature to call $V(s_2')$ (or $\gamma V(s_2')+r_2$ ) the ``TD target'', the essential quantity for bootstrapping \citep{sutton2018reinforcement}.
Treating $V(s_2')$ as the target is also the essential idea for using neural networks for TD methods. For example, \citet{tdgammon}'s TD-gammon is the first such successful example. In DQN, \citet{mnih2015human} used the target network that is a historical snapshot of the network to generate relatively stable targets.  
Counterexamples show that TD can diverge if (1) nonlinear function approximation is used (even for on-policy learning); (2) learning is off-policy (even in the linear case); and (3) bootstrapping (TD methods with the eligibility trace factor smaller than one). This is referred to as the deadly triad \citep{sutton2018reinforcement,zhang2021breaking}.
Historical efforts that research into what form of $J$ guarantees convergence include re-weighted least-squares \citep{bertsekas1995counterexample}, residual gradient \citep{baird1995residual}, and Grow-Support \citep{boyan1994generalization}, etc. These algorithms attempted to derive an algorithm that is either a contraction mapping or a stochastic gradient with the current transition. See also \citep{bo_gtd_finite} for a good discussion and the long history of seeking gradient descent methods for temporal difference learning.  

Our algorithm may imply that this cannot be done with a single sample, if one wants to achieve the TD solution. In order to achieve that, we have to use two samples, in particular, 
\[
\nabla_\theta J = \left[\gamma \nabla_\theta V(s_1') - \nabla_\theta V(s_1)\right] \nabla_\theta V(s_1)^\tr \nabla_\theta V(s_2).
\]
In contrast, residual gradient takes $\nabla_\theta J=\gamma \nabla_\theta V(s_2') - \nabla_\theta V(s_2)$,  calculated on the same transition as where the TD error is computed. 
This shows why the residual gradient algorithm does not converge to the TD solution as discussed by \citet{tdc}. In order to converge to the TD solution, one needs to {\em compute the TD error and the gradient on two different (and independent) samples, also with a similarity measure to bridge them, instead of computing the TD error and the gradient on a single sample}. While the resulting algorithm is indeed an SGD algorithm, the independence sampling mechanism of two samples is different from supervised learning. That is, in supervised learning, one i.i.d. sample suffices for a well-defined SGD update. It has guaranteed convergence (with probability one) to the correct optimum. However, in the reinforcement learning setting, only one sample is not enough for ensuring this unless for deterministic environments.  
Although this still requires (at least) two i.i.d. samples at a time, note that the two samples do not need to be the i.i.d. transitions from the same state, because it is not practical to reset our state to the previous state to start over from there, ``passed is passed''.

Note the above update does not use the reward signal $r_1$. To take advantage of the two transitions, we also perform
\[
\theta_{t+1} =  \theta_t - \alpha_t \left[\gamma{\phi_1'}^\tr\theta_t + r_1 - \phi_1 ^\tr\theta_t\right]\phi,
\]
in which $\phi = (\gamma\phi'_2 - \phi_2) \phi_2^\tr\phi_1$ this time. This is due to that in using the two samples, the operation is symmetric.  
To ensure the two transitions are independent, in sampling we also require that they are from two different episodes. This can be done by an adding 
the episode index for each transition. Ours uses this special and novel sampling method to the best of our knowledge.\footnote{\citet*{shangtong_averagereward_two_iid} considered two i.i.d. samples from a given distribution in an average-reward off-policy learning algorithm, but not in a buffer setting like our method.} To differentiate from the uniform random sampling and prioritized sampling methods widely practised in literature, we call it the {\em independence sampling} method.

The merit of independence sampling and Impression GTD is that together they remove the two steps-sizes and the resulting  tuning efforts and slow convergence. They achieve the decoupling of the two terms in GTD in a novel way. 
From a practical view, carrying a buffer is acceptable. Similar ideas appear in experience replay \citep{lin1992experiencereplay}, 
and deep reinforcement learning \citep{mnih2015human,schaul2015prioritized}. 

Recently, \citet{shangtong_imgtd} developed a GTD algorithm that is very similar to ours as in equation \ref{eq:one_update}. They started with the same observation as ours, in that the gradient $A^\tr$ and the expected TD error in GTD's O.D.E. can be estimated separately. Their algorithm has a buffer as well, but the buffer length does not need to grow linearly as learning proceeds, while our analysis does have such a limitation.  
They also focused on the infinite-horizon setting, and the analysis is very much involved in the discretization of the underlying O.D.E. Our analysis is focused on the episodic problems though occasionally there are also discussions about infinite horizon problems as well. Our independence sampling is also an important ingredient, which facilitates an SGD analysis framework.  
In general, their direct GTD and our Impression GTD can be viewed as algorithms in the same family, with the same motivation and similar algorithmic flavour.   

In a summary, Impression GTD is guaranteed to converge under the same conditions on the MDP and linear features as GTD. Together with direct GTD \citep{shangtong_imgtd}, ours is the first theoretically sound, truly single-time-scale SGD off-policy learning algorithm, with $O(d)$ complexity and one step-size. In Section \ref{sec:theory} and Section \ref{sec:experiments}, we conduct theoretical analysis and empirical studies to show that the new algorithm converges much faster than GTD, GTD2 and TDC. 

We will detail the sampling process in the next section, which also introduces a more general form of this algorithm. 

\section{Mini-batch Policy Evaluation}\label{sec:minibatchPE}
This section further extends the Impression GTD. It is common to use mini-batch training in deep learning and deep reinforcement learning. There the mini-batch training paradigm is necessary mostly because the size of the data sets and the high dimensional inputs. Here we show that it also makes sense to use mini-batch training for off-policy learning, even in the linear case and even the problem size is not big. The motivation of using a buffer here has a different motivation from in deep learning and deep reinforcement learning though it also has the merit of improving sample efficiency and scaling to large problems. In short, the buffer is a tool for decoupling the error and gradient estimations in GTD. 

Let's start with on-policy learning. Suppose we maintain a buffer that is large enough. At each time step, we take an action according to the policy that is evaluated, observing a transition, $(\phi_t, \phi_t', r_t)$. We put the sample into the buffer. Next we sample a mini-batch of samples, $\{(\phi_i, \phi_i', r_i)\}, i=1, 2, \ldots, m$, where $m$ is the batch size.  We then update the parameter vector by the averaged TD update:
\[
\theta_{t+1} = \theta_t + \alpha_t \frac{1}{m}\sum_{i=1}^m \left(\gamma {\phi_i'}^\tr\theta_t +r_t - \phi_i^\tr\theta_t\right)\phi_i.
\]
We call this algorithm the {\em mini-batch TD}.

We follow by extending the Impression GTD for off-policy learning to work with mini-batch sampling. The buffer saves for each sample also the episode index within which a sample is encountered. At a time step, we sample two batches of samples,
\begin{equation}\label{eq:buffer1}
b_1=\{(\phi_i, \phi_i', r_i, e_i)|i=1, 2, \ldots, m_1\}, \quad b_2=\{(\phi_j, \phi_j', r_j, e_j)|j=1, 2, \ldots, m_2\}
\end{equation}
where $e_k$ is the episode index for the $k$th sample. In order for the samples in $b_1$ and $b_2$ to be independent, for any sample index pair, $i$ of $b_1$ and $j$ of $b_2$, we require that they are from different episodes:
\begin{equation}\label{eq:buffer2}
e_i\neq e_j, \quad for \quad \forall i=1, 2, \ldots, m_1; j=1, 2, \ldots, m_2.
\end{equation}
We first generate the averaged TD update from the samples in $b_2$, just like in the mini-batch TD:
\[
\bar{u}_{t} = \frac{1}{m_2} \sum_{j=1}^{m_2} \left(\gamma \phi_j'^\tr\theta_t +r_j - \phi_j^\tr\theta_t\right)\phi_j.
\]
Then for each sample in $b_1$, we compute $\bar{\delta}_t(i)=\phi_i^\tr \bar{u}_t$. 
Finally, the {\em mini-batch Impression GTD} update is
\begin{equation}\label{eq:imgtd}
\theta_{t+1} = \theta_t - \alpha_t \frac{1}{m_1}\sum_{i=1}^{m_1}(\gamma \phi_i' - \phi_i) \bar{\delta}_t(i).
\end{equation}

In the lookup table case,\footnote{In this case, with batch sizes $m_1=m_2=1$, the algorithm is a variant of Baird's RG, equipped with double-sampling. The algorithm converges to the true value function, while RG does not because RG only converges to the correct value function for deterministic MDPs. 
In fact, this is the place where double-sampling and independence sampling meet. Update to the weights happens only when $\phi_i$ and $\phi_j$ are the same, or, the two i.i.d. transitions are from the same state of the MDP. This is rare though, which also shows why mini-batch sampling leads to faster convergence than using batch sizes equal one. This observation was due to James MacGlashan.} this means the bigger is this $\bar{\delta}_t(i)$, the more eligible is this sample for a big update. Thus the update for $\theta(s_i)$ (or $V(s_i)$), and $\theta(s_i')$ (or $V(s_i')$) is big if $\bar{\delta}_t(i)$ is large. Note because $\bar{\delta}_t(i)= u_t(s_i)$ in this case, this largely agrees with prioritized sweeping \citep{moore1993prioritized}. Consider the table lookup case. When $|u_t(s_i)|$ is large, it means the TD update for the the component, $\theta(s_i)$, is big. Thus we can view Impression GTD as a way of adjusting the magnitude of the TD update in the original TD(0) algorithm and update based on the adjusted.

Consider for batch $b_1$, we have only one sample, e.g., the latest online sample, and $b_2$ has $m$ samples. This in fact is the standard online learning paradigm, hereby aided with some historical samples:\footnote{This is actually a ``shrinked'' version of R1-GTD. }
\begin{align*}
\Delta \theta_t &=  - \alpha_t (\gamma \phi_t' - \phi_t) \phi_t^\tr\frac{1}{m} \sum_{j=1}^m \left(\gamma \phi_j'^\tr\theta_t +r_j - \phi_j^\tr\theta_t\right)\phi_j\\
&=  - \alpha_t (\gamma \phi_t' - \phi_t) \frac{1}{m} \sum_{j=1}^m \left(\gamma \phi_j'^\tr\theta_t +r_j - \phi_j^\tr\theta_t\right)\phi_t^\tr\phi_j\\
&=  - \alpha_t (\gamma \phi_t' - \phi_t) \frac{1}{m} \sum_{j=1}^m \delta_j(\theta_t) \phi_t^\tr\phi_j=  - \alpha_t (\gamma \phi_t' - \phi_t) \frac{1}{m} \sum_{j=1}^m \delta_j(\theta_t) \mbox{sim}(s_t, s_j).
\end{align*}
The first line means, if the current feature vector is greatly correlated the averaged TD update from historical samples, the update for $\theta$ is likely to be big for the current transition, to reduce the difference between  $V(s_t)$ and $\gamma V(s_t')$. We could also say that it reduces the difference between $V(s_t)$ and $\gamma V(s_t')+r_t$ because the reward is a constant bias whose gradient is zero. The reward does not appear in $(\gamma \phi_t' - \phi_t)$ because it is already taken care of in the averaged TD update, which will be driven to zero as the update proceeds. The effect is that we use the averaged TD update (estimated independently) {\em projected} on the current feature vector for the parameter update.  

The second and third lines give a different interpretation of the algorithm. The algorithm replaces the TD error in the standard TD with an average TD error, {\em similarity weighted}. In particular, instead of using the current TD error, $\delta_t$, calculated on the latest transition, to trigger learning, as in the standard TD(0), it uses an average of the TD errors that are computed on independent samples, weighted by the similarity of the sampled historical feature vectors to the current feature vector, for learning. Thus our algorithm takes an approach that comes with an improved estimation for the error signal to prevent the divergence of TD(0) for off-policy learning, for which using the latest TD error is problematic.

Notably, this interpretation gives a connection to \citet{baird1995residual}'s Residual Gradient (RG) algorithm. If we replace the weighted averaged TD error in the  third line with the latest TD error, it becomes exactly RG. RG is guaranteed to converge, however, not to the TD solution, e.g., see \citep{tdc}. TD(0) uses the latest TD error in another way, however, it suffers from divergence for off-policy learning. This update is guaranteed to converge to the TD solution under general and the same conditions as GTD. Furthermore, the convergence is orders faster than GTD, as we will show in Section \ref{sec:theory}.

The complexity of mini-batch Impression GTD is $O((m_1+m_2)d)$ per step, where $m_1$ and $m_2$ are the batch sizes. It is more complex than the Impression GTD in Section \ref{sec:imgtd} and GTD. However, it is still a linear complexity that is scalable to large problems.

An easier implementation for independence sampling is to have two buffers. Before the start of an episode, we can choose a random number that is either zero or one with equal probability. If it’s zero, then all the samples in this episode will be saved to the first buffer; otherwise, they will be saved to the second buffer. At sampling time, we just sample a batch from the first buffer and another batch from the second. In this way, we can also save extra memory for the episode index in each sample. Using the odd-even episode number for switching the buffers also works. This two-buffer implementation is shown in Algorithm \ref{alg:imGTD}. The similarity computation is also consumed so as to vectorize. 

\begin{algorithm}[t]
\caption{Impression GTD for off-policy learning, with independence sampling.}\label{alg:imGTD}
\begin{algorithmic}
\Require $\gamma \in (0, 1)$, the discount factor; $\alpha>0$, the step-size; $\phi(\cdot): \SS \to \RR^d$, the features
\State $\theta \gets \theta_0$
\Comment{Initialize the parameter vector}
\State buffer $B_1 \gets []$
\State buffer $B_2 \gets []$
\Comment{Initialize the buffers}
\For{episode $e=0, 1, \ldots$}

\State Environment resets to an initial state, $s_0$, drawn i.i.d. from some distribution 

\State $s\gets s_0$

\For{time step $t=0, 1, \ldots$}

    \State Observe $\phi(s)=\phi$, and take an action according to the behavior policy $\pi_b$

    \State Observe the next feature vector $\phi(s')=\phi'$ and reward $r$
    
\If{$e$ is odd} 
\Comment{Append the data in the same episode to same buffer}
    \State $B_1.\mbox{append}((\phi, \phi', r))$
\Else
    \State $B_2.\mbox{append}((\phi, \phi', r))$
\EndIf

\If{len$(B_2)>M$}
\State  
Sample a batch of $m_2$ samples from $B_2$, $\{(\phi_j, \phi_j', r_j)\}$, and compute
\[
\bar{u} \gets \frac{1}{m_2} \sum_{j=1}^{m_2} \left(\gamma \phi_j'^\tr\theta +r_j - \phi_j^\tr\theta\right)\phi_j.
\]

\State Sample $\{(\phi_i, \phi_i', r_i), i=1, \ldots, m_1\}$ from $B_1$ 

\State Form a feature matrix $\Phi$ with $\Phi[i,:] = \phi_i^\tr$
\Comment{The $i$th row of the matrix is $\phi_i^\tr$}

\State Compute $\bar{\bold{\delta}}=\Phi \bar{u}$

\State Update the parameters by
\[
\theta \gets \theta - \alpha \frac{1}{m_1}\sum_{i=1}^{m_1}(\gamma \phi_i' - \phi_i) \bar{\bold{\delta}}(i).
\]

\EndIf

\State $s\gets s'$
\EndFor
\EndFor
\end{algorithmic}
\end{algorithm}

In terms of the similarity measure used by our algorithm, the most relevant work is a recent new loss,  called the ``K-loss'' function \citep*{lihong_kernel_sim}, 
defined by the product of the Bellman errors calculated on two i.i.d. transition samples, weighted by a kernel encoding of the similarity between the two samples. They are probably the first to find that considering the similarity interplay between i.i.d. transitions can circumvent the double-sampling problem for reinforcement learning. Using their method, we can actually derive our algorithm in a second way. In particular, the NEU objective is
\begin{align}
\bold{NEU} &= \norm{\EE\delta\phi}^2\label{eq:neu_L2}\\
&=\EE[\delta\phi]^\top \EE[\delta\phi] \nonumber\\ 
&= \EE[\delta_1\phi_1]^\top \EE[\delta_2\phi_2] \Longleftrightarrow \EE[\delta_1\phi_1^\top \delta_2\phi_2] \nonumber \\
&= \EE[\phi_1^\top \phi_2 \delta_1 \delta_2] \nonumber\\
&= \EE[\bold{sim}(s_1, s_2) \delta_1 \delta_2].\label{eq:neu_2deltas}
\end{align}
This is exactly the place where \citeauthor*{lihong_kernel_sim} and we converge to. The  double-sampling problem arises when one aims to optimize using the single, online sample (the first line), which has held back the off-policy learning field for decades. 
The third line means we are realizing the $\ell_2$ norm on two independent transitions instead of on a single transition. The arrow annotated equality is due to the independence sampling.\footnote{Strictly speaking, the K-loss was defined using the Bellman errors, e.g., equation (3) of \citep{lihong_kernel_sim} uses the Bellman operator. The proof of their Corollary 3.5 mentioned "TD error". However the proof was done for deterministic MDPs for which TD error is the same as Bellman error.  Writing the loss in terms of  the weighted independent TD errors is more direct, also easier to interpret (without a model) and it entails direct optimization for practitioners. This is also interesting because minimizing the usual, online TD error via gradient descent has pitfalls \citep{tdc}, in particular the way represented by residual gradient \citep{baird1995residual}.} 
Therefore besides minimizing NEU, another way of interpreting our method is that it is a SGD method for minimizing the {\em expected product of two i.i.d. TD errors}, weighted by the similarity between the two feature vectors where the TD errors happened.

\begin{proposition}
Using independence sampling, we sample two independent transitions, $(s_1, r_1, s_1')$ and $(s_2, r_2, s_2')$ from the two buffers that have an infinity length. 
Consider a generic loss $\bold{N}(\theta) = \EE[\bold{sim}(s_1, s_2) \delta_1 \delta_2]$, where $\bold{sim}(s_1, s_2)$ is some similarity measure. Define $C= \EE[\phi(s)\phi(s)^\tr]$, where the expectation is taken with respect to the behavior policy (i.e., the distribution of $s$). Assume $C$ is non-singular. If $\bold{sim}(s_1, s_2)= \phi(s_1)^\tr C^{-1} \phi(s_2)$, then we have, $\bold{N}(\theta) = \bold{MSPBE}(\theta)$.  
\end{proposition}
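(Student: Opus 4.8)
The plan is to substitute the prescribed similarity measure $\mathbf{sim}(s_1,s_2)=\phi(s_1)^\tr C^{-1}\phi(s_2)$ into the loss $\mathbf{N}(\theta)$ and then exploit the independence of the two sampled transitions to factor the expectation into a $C^{-1}$-weighted quadratic form that coincides exactly with $\mathbf{MSPBE}$. Since $\theta$ is held fixed, each TD error $\delta_i = r_i + \gamma\phi_i'^\tr\theta - \phi_i^\tr\theta$ is a deterministic function of transition $i$ alone, so under independence sampling the random vectors $\delta_1\phi_1$ and $\delta_2\phi_2$ are independent.

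First I would plug in the similarity and write
\[
\mathbf{N}(\theta) = \EE\!\left[\phi_1^\tr C^{-1}\phi_2\,\delta_1\delta_2\right].
\]
Because $\delta_1$, $\delta_2$, and $\phi_1^\tr C^{-1}\phi_2$ are all scalars, I can regroup the integrand into the bilinear form $(\delta_1\phi_1)^\tr C^{-1}(\delta_2\phi_2)$, the two expressions being equal as scalars.

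The key step is the factorization of this expectation. Writing the bilinear form coordinatewise as $\sum_{k,\ell}(C^{-1})_{k\ell}(\delta_1\phi_1)_k(\delta_2\phi_2)_\ell$ and applying $\EE[XY]=\EE[X]\EE[Y]$ termwise to the independent coordinates, the constant matrix $C^{-1}$ passes outside the expectation and the sum collapses to
\[
\mathbf{N}(\theta) = \EE[\delta_1\phi_1]^\tr\, C^{-1}\, \EE[\delta_2\phi_2].
\]
Non-singularity of $C$ is exactly what makes $C^{-1}$, and hence the whole expression, well defined.

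Finally I would identify both marginal expectations with the expected TD update. Since both transitions are drawn from the same behavior distribution, $\EE[\delta_1\phi_1]=\EE[\delta_2\phi_2]=\EE[\delta\phi]=A\theta+b$ by the definition in equation \ref{eq:Ab}, and $C=\EE[\phi\phi^\tr]$ is precisely the S.P.D.\ matrix appearing in $\mathbf{MSPBE}$. Substituting yields $\mathbf{N}(\theta)=(A\theta+b)^\tr C^{-1}(A\theta+b)=\mathbf{MSPBE}(\theta)$, as claimed. I expect no deep difficulty here; the only place demanding care is the factorization step, where independence must be invoked legitimately to split the $C^{-1}$-weighted inner product, rather than naively splitting a product of correlated terms — which is the very double-sampling pitfall that independence sampling is designed to circumvent.
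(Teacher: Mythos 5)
Your proposal is correct and follows essentially the same route as the paper's own proof: substitute the similarity, regroup the scalar product into the bilinear form $(\delta_1\phi_1)^\tr C^{-1}(\delta_2\phi_2)$, factor the expectation using independence sampling, and identify $\EE[\delta\phi]^\tr C^{-1}\EE[\delta\phi]$ with $\bold{MSPBE}(\theta)$. Your coordinatewise justification of the factorization step is in fact slightly more explicit than the paper's one-line appeal to independence.
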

\begin{proof}
We have 
\begin{align*}
\bold{N}(\theta) &= \EE[\bold{sim}(s_1, s_2) \delta_1 \delta_2] \\
&= \EE[\phi_1^\tr C^{-1} \phi_2 \delta_1\delta_2]\\
&= \EE[\delta_1 \phi_1^\tr C^{-1} \delta_2\phi_2 
]\\
&= \EE[\delta_1 \phi_1^\tr]  C^{-1} \EE[\delta_2\phi_2 
]\\
&= \bold{MSPBE}(\theta), 
\end{align*}
where the last second line is because of independence sampling, and $C^{-1}$ is a constant. The last line is because the buffers are sufficiently long so that the empirical distribution is the true data distribution. 
\end{proof}
Thus this shows that NEU and MSPBE belong to the same family of objective functions that are only different in a similarity measure, under independence sampling. Note this observation actually holds for any S.P.D matrix $U$ besides $C$. In particular, the generic loss $E(\theta)$ discussed in Section \ref{tdc} is also a special case of $\bold{N}(\theta)$.  
While these observations are interesting, we focus on minimizing NEU in this paper. 

Our method of deriving the Impression GTD algorithm by decoupling the estimations of $A^\top$ and $A\theta+b$ in GTD also entails an empirical form of the NEU loss, given multiple samples:
\[
\widehat{\bold{NEU}}(\theta|B_1, B_2) =\sum_{s_1 \in B_1}\sum_{s_2 \in B_2}\bold{sim}(s_1, s_2) \delta_1(\theta) \delta_2(\theta).
\]
For episodic problems, $B_1$ and $B_2$ are from our two-buffer implementation, for which samples in $B_1$ are always independent from the samples in $B_2$. 

For infinite horizon problems, $B_1$ and $B_2$ can be collected such that samples in them have a sufficiently large time window. For example, every $10000$ steps, we switch the collection buffer. In the first $T_0$ time steps, all the samples are inserted into $B_1$ and for the next $T_0$ time steps, the samples go into $B_2$; etc. A large $T_0$ ensures that no samples, for which the similarities are computed, happened close to each other in time, thus controlling their dependence strength at sampling time.    

Writing the NEU loss in terms of multiple samples is more reminiscent of the general machine learning problem where one minimizes an empirical loss on data sets. This is especially interesting because it transforms off-policy learning, an important problem of reinforcement learning, into a supervised learning problem, except that the data still needs to be collected for which there is the issue of exploration, etc. Nonetheless, we think it is an important connection to establish between reinforcement learning and supervised learning. This view is also interesting because the $\bold{sim}$ is a matrix form now, which measures inter-similarity between independent samples across the two buffers.\footnote{Note that \citeauthor{lihong_kernel_sim} did not have this form of the loss. Instead, they estimated the loss and the gradient using V-statistics. It has a problem that is discussed later in this section.}
Suppose the buffers keep adding the data and never drop any sample. Taking the gradient descent for the empirical NEU gives
\begin{align*}
\theta_{t+1} &= \theta_t - \frac{\alpha}{2} \nabla \widehat{\bold{NEU}} \\
&= \theta_t - \frac{\alpha}{2} \sum_{s_1 \in B_1}\sum_{s_2 \in B_2}\bold{sim}(s_1, s_2) \nabla(\delta_1 \delta_2) \\
&= \theta_t - \frac{\alpha}{2} \sum_{s_1 \in B_1}\sum_{s_2 \in B_2}\phi_1^\tr\phi_2 (\delta_1\nabla\delta_2 + \nabla\delta_1 \delta_2 )\\
&= \theta_t - \frac{\alpha}{2} \sum_{s_1 \in B_1}\sum_{s_2 \in B_2}\phi_1^\tr\phi_2 \left[ (r_1+\gamma\phi_1'^\tr \theta_t - \phi_1^\tr \theta_t) \nabla\delta_2 + (r_2+\gamma\phi_2'^\tr \theta_t - \phi_2^\tr \theta_t) \nabla\delta_1 \right]
\end{align*}
The two terms in the bracket is similar. For example, the first one is  (dropping the subscript of $\theta$ for simplicity)
\begin{align*}
 \phi_1^\tr \phi_2(r_1+\gamma\phi_1'^\tr \theta - \phi_1'^\tr \theta) \nabla\delta_2
&=\phi_2^\tr \phi_1(r_1+\gamma\phi_1'^\tr \theta - \phi_1^\tr \theta) (\gamma \phi'_2 - \phi_2). 
\end{align*}
Suppose $T_1$ samples are stored in buffer $B_1$ and $T_2$ samples are in buffer $B_2$. We have 
\begin{align*}
&\sum_{s_1 \in B_1}\sum_{s_2 \in B_2} 
    \phi_2^\tr \phi_1(r_1+\gamma\phi_1'^\tr \theta - \phi_1^\tr \theta) (\gamma \phi'_2 - \phi_2) \\
    =& \sum_{t_2=1}^{T_2} \sum_{t_1=1}^{T_1}\phi_2^\tr\phi_1(r_1+\gamma\phi_1'^\tr \theta - \phi_1^\tr \theta)
     (\gamma \phi'_2 - \phi_2) \\
    =& \sum_{t_2=1}^{T_2} \phi_2^\tr(\tilde{A}_1 \theta+\tilde{b}_1)
  (\gamma \phi'_2 - \phi_2) \\
      =& \sum_{t_2=1}^{T_2}   (\gamma \phi'_2 - \phi_2)\phi_2^\tr(\tilde{A}_1 \theta+\tilde{b}_1)
 \\
      =& \tilde{A}_2^\tr (\tilde{A}_1 \theta+\tilde{b}_1),
\end{align*}
in which we define $\tilde{A}_1 = \sum_{t_1=1}^{T_1} \phi_1(\gamma \phi_1' - \phi_1)^\tr$, and $\tilde{A}_{2} = \sum_{t_2=1}^{T_2} \phi_2(\gamma \phi_2' - \phi_2)^\tr$. 
The normalized matrices, i.e., $\tilde{A}_1/T_1$ and $\tilde{A}_2/T_2$, are both consistent estimations of the matrix, $A=E[\phi(\gamma \phi' - \phi)^\tr]$. Note this algorithm can be implemented in a complexity that is linear in the number of samples ($n$), i.e., $O(d^2)$ per sample, where $d$ is the number of features, by forming the matrices explicitly.

Therefore, if we go for a direct approach of minimizing the empirical NEU, it ends up with a variant of the expected GTD \citep{ptd_yao}, 
\[
  \theta_{t+1}   = \theta_t -\frac{\alpha}{2} \left[\tilde{A}_1^\tr (\tilde{A}_2 \theta_t+\tilde{b}_2) + \tilde{A}_2^\tr (\tilde{A}_1 \theta_t+\tilde{b}_1) \right],
\]
which is $O(d^2)$ per step (the two matrices can be aggregated incrementally).
Though an interesting variant of the expected GTD, this algorithm is presented purely for the understanding of Impression GTD. Our convergence and convergence rate analysis apply to this variant in a straightforward way. 

 The Impression GTD applies the successful mini-batch training in deep learning to off-policy learning and reduces to a linear complexity in the number of features, without resorting to two-time scale stochastic approximation. This observation was also made by \citet{lihong_kernel_sim}. They noted that their loss function ``coincides'' with NEU in the linear case (see their Section 3.3). However, the reason was not well understood or explained. Hopefully it is clear that our derivation above showed that this is not an coincidence. In matrix notation, the minibatch Impression TD uses the batch samples to build two matrices, $\tilde{A}_1^\tr = \sum_{b=1}^m (\gamma\phi_1' - \phi_1)\phi_1^\tr $, from the batch samples in $B_1$, and  $\tilde{A}_2 = \sum_{b=1}^m \phi_2  (\gamma\phi_2' - \phi_2)^\tr$ (and $\tilde{b}_2 = \sum_{b=1}^m \phi_2 r_2$), from the minibatch samples of $B_2$. The terms were transformed equivalently using the $\bold{sim}$ measure such that these matrices do not form explicitly and thus avoid the $O(d^2)$ complexity, e.g., see 
equation \ref{eq:imgtd}.\footnote{We found it's interesting that the two implementations, one that forms the matrices explicitly, and the other that doesn't (instead using $\bold{sim}$), gives the flexibility of switching for the higher computation efficiency given different numbers of samples (e.g., the batch sizes). }

\citeauthor{lihong_kernel_sim} also had a batch version of their algorithm, e.g., see their equation 4 and Section B.1 therein. However, the implementation is not technically sound because the independence of samples would break.  
The nature is a bit tricky.\footnote{We also refer the readers to \citep{shangtong_imgtd} for more detailed discussions about this problem.} Random sampling from the buffer does not necessarily means the samples in the buffer are i.i.d. 
Let's say we have two samples, $(s_1, s_1', r_1)$ and $(s_2, s_2', r_2)$. They are sampled i.i.d. from the buffer. However, what if they occurred in the same episode when we inserted them? Let's say $(s_2, s_2', r_2)$ was inserted into the buffer right after $(s_1, s_1', r_1)$. That is, $s_2=s_1'$. The second sample is dependent on the first one. In general, as long as the two samples are from the same episode, the one that happens at a later time depends on the former one and they are not independent.
It may be easier to understand in the infinite horizon setting. Suppose the Markov chain is irreducible and aperiodic and there exists a unique stationary distribution under the behavior policy. Then the samples are only independent of each other if the empirical distribution of the states in the buffer gets sufficiently close to the stationary distribution. Before this happens, the samples in the buffer are all dependent on each other. This depends on how fast the chain is mixing. It can take a very long time to reach the stationary distribution for slowly mixing chains. After the stationary distribution is reached, the Markovian argument for the above two samples still holds. However, because the distribution of states becomes stationary, the samples from the chain exhibit independence: the distribution of a state is just a property on its own. 
Consider a simple example. Assume that $s_2$ can only be reached from $s_1$. Note, however, from $s_1$ one can reach other state(s) than $s_2$. 
Let $\mu$ be the empirical distribution of the states in the buffer (a single buffer that stores all the samples up to the current time step). 
Then we have 
\[
\mbox{Prob}(s_2|s_1) = \mu(s_1) \mbox{Prob}(s1\to s_2)
\]
As long as $\mu(s_1) \mbox{Prob}(s1\to s_2)\neq \mu(s_2)$, $\mbox{Prob}(s_2|s_1)$ is not equal to $\mbox{Prob}(s_2)$, and thus the dependence between the two states holds. Before the chain reaches the stationary distribution, $\pi_0$, we don't have the equality. Only after $\mu$ gets sufficiently close to $\pi_0$, we have $\mu(s_1)\mbox{Prob}(s_1\to s_2)\approx \mu(s_2)$, and the independence between the states starts to exhibit. 

For episodic problems, one can define a similar chain from the distribution of the initial states (where the episodes are started), the behavior policy and the transition dynamics of the MDPs. If a unique stationary distribution exists, similar argument holds for the episodic problems. Most reinforcement learning problems in practice are episodic. Luckily, our independence sampling ensures the samples are independent even when the underlying chain has not reached the stationary distribution yet. This is shown by Lemma \ref{lem:independence} in Section \ref{sec:theory}.

\section{Analysis}\label{sec:theory}
This section contains convergence rate analysis of Impression GTD 
 with constant step-sizes. The first result is an $O(1/t)$ rate. For the second result, we first give a new condition of smoothness, called $L$-$\lambda$ smoothness. Under this weaker smoothness condition, we establish a tighter convergence rate for SGD than Theorem 3.1 of \citet{gower2019sgd_general}. 
Then by showing that the NEU objective and the independence sampling satisfies $L$-$\lambda$ smoothness, we prove that Impression GTD converges at a linear rate.

Our algorithm analysis is conducted in a generic GTD algorithmic framework. The $O(1/t)$ rate and the linear rate are both applicable to Expected GTD, A$^\tr$TD, and R1-GTD.  

Both the $O(1/t)$ rate and the linear rate depend on the i.i.d. sampling ensured by our independence sampling method. Thus we first introduce a lemma for that. 

\begin{lem}[Independence Sampling]\label{lem:independence}
For episodic problems, our sampling method according to equations \ref{eq:buffer1} and \ref{eq:buffer2} ensures that the transition samples from the two mini-batches are independent: 
\[
Pr(i_{t_1}  =s_1 \cap j_{t_2} = s_2) =Pr(i_{t_1}  =s_1 ) Pr(j_{t_2} = s_2),
\]
where $i_{t_1}$ and $j_{t_2}$ are the time steps that we insert the two samples into buffer $B_1$ and buffer $B_2$, respectively. 
\end{lem}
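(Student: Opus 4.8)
The plan is to reduce the claimed factorization to the elementary fact that functions of independent random variables are independent, where the independence comes from the episodic structure of the problem together with the disjoint-episode constraint \ref{eq:buffer2}. First I would fix the underlying probability space: for an episodic MDP, each episode $e$ produces a trajectory $\tau_e = (s_0^{(e)}, a_0^{(e)}, s_1^{(e)}, \ldots)$ generated by drawing an initial state i.i.d.\ from the start-state distribution and then rolling out the behavior policy $\pi_b$ against the transition kernel $\PP$ with fresh randomness. The workhorse observation I would state explicitly is that distinct episodes are \emph{mutually independent} stochastic processes: the only probabilistic coupling inside a trajectory is the Markovian dependence \emph{within} a single episode, and that dependence never crosses the episode boundary because each episode is re-initialized from an independent draw of $s_0$ and evolved with independent noise.

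Next I would invoke the two-buffer construction. By equations \ref{eq:buffer1}--\ref{eq:buffer2}, every sample that ever enters $B_1$ carries an episode index in some set $\mathcal{E}_1$, every sample in $B_2$ an index in a set $\mathcal{E}_2$, and the constraint $e_i \neq e_j$ forces $\mathcal{E}_1 \cap \mathcal{E}_2 = \emptyset$; in the odd/even or coin-flip implementation the partition of episodes into the two buffers is itself random, but it is determined by coins tossed \emph{before} each episode, hence independent of all trajectory contents. Conditioning on this partition, the sample drawn from $B_1$ at insertion time $i_{t_1}$ is a measurable function of $\{\tau_e : e \in \mathcal{E}_1\}$ together with the independent randomness used to pick an index out of $B_1$; symmetrically, the sample at $j_{t_2}$ is a measurable function of $\{\tau_e : e \in \mathcal{E}_2\}$ together with the independent sampling randomness for $B_2$.

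I would then conclude by independence of the two input families. Since $\mathcal{E}_1$ and $\mathcal{E}_2$ are disjoint and episodes are mutually independent, the $\sigma$-algebras generated by $\{\tau_e : e \in \mathcal{E}_1\}$ and $\{\tau_e : e \in \mathcal{E}_2\}$ are independent, and adjoining the independent sampling coins preserves this. Because $i_{t_1}$ (equivalently, the state $s_1$ it carries) and $j_{t_2}$ are functions of these two independent families, they are independent random variables, which is exactly
\[
Pr(i_{t_1} = s_1 \cap j_{t_2} = s_2) = Pr(i_{t_1} = s_1)\,Pr(j_{t_2} = s_2).
\]
The main obstacle I anticipate is bookkeeping rather than conceptual: I must handle the randomness of the episode-to-buffer partition and of the insertion times $i_{t_1}, j_{t_2}$ with care, arguing that these randomizing variables (pre-episode coin flips and post-hoc uniform sampling from the buffers) are independent of the trajectory noise, so that conditioning on them never reintroduces coupling across buffers. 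Once that is granted the factorization is immediate, and — as the surrounding discussion stresses — this is precisely what lets independence sampling sidestep the mixing-time requirement that a naive single-buffer scheme would otherwise need.
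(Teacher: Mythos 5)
Your proof is correct and rests on the same key fact as the paper's: distinct episodes are mutually independent because each is re-initialized from an i.i.d.\ draw of the start state and evolved with fresh transition noise, and the constraint in equation \ref{eq:buffer2} guarantees the two sampled transitions come from disjoint episode sets. Where you differ is in execution. The paper proves the factorization by an explicit computation: it applies the law of total probability to sum over all trajectories leading to the two observed states, factors the joint trajectory probability using episode independence, and then marginalizes each factor back down to $Pr(i_{t_1}=s_1)$ by telescoping the conditional-probability decomposition to the initial distribution $d_0$. You instead package the same fact abstractly: the two samples are measurable functions of disjoint, independent families of trajectories (plus auxiliary sampling randomness), so they are independent by the standard closure of independence under measurable maps. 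Your version is arguably more careful on two points the paper leaves implicit --- the randomness of the episode-to-buffer partition (the pre-episode coin flips) and the uniform sampling from within each buffer --- and it extends without modification to batch sizes larger than one, whereas the paper reduces to batch size one ``without loss of generality.'' The paper's computation, on the other hand, is self-contained and elementary, requiring no appeal to $\sigma$-algebra machinery. Both arguments are sound; yours buys generality and explicit bookkeeping, the paper's buys concreteness.
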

\begin{proof}
Without loss of generality, let us consider the batch size equal to 1. 
Let $(s_1, r_1, s_1')$ and $(s_2, r_2, s_2')$ be two samples drawn at time step $t$ by the sampling method. Then it suffices to prove that $s_1$ and $s_2$ are independent. For notation convenience, let $i_{t_1}$ be $s_1$, and the state sequence up to $s_1$ is $\{i_0, i_1, \ldots, i_{t_1}\}$, in the episode where we put $s_1$ into the buffer. Similarly, $j_{t_2}$ is for aliasing $s_2$. 
We just need to prove that $Pr(i_{t_1}=s_1 \cap j_{t_2} = s_2) = Pr(i_{t_1}=s_1) Pr(j_{t_2} = s_2)$. To see this, we first have
\begin{align*}
&Pr(i_{t_1}  =s_1 \cap j_{t_2} = s_2) \\
=& \sum_{i_0, i_1, \ldots, i_{t_1}-1}  \quad \sum_{j_0, j_1, \ldots, j_{t_2}-1}
Pr(\underline{i_0, i_1, \ldots, i_{t_1}  =s_1 } \cap \underline{j_0, j_1, \ldots, j_{t_2} = s_2})  \\
=&\sum_{i_0, i_1, \ldots, i_{t_1}-1}  Pr(i_0, i_1, \ldots, i_{t_1}  =s_1 )  \sum_{j_0, j_1, \ldots, j_{t_2}-1} Pr( j_0, j_1, \ldots, j_{t_2} = s_2)  
\end{align*}
The first equality is according to the law of total probability, which sums over all possible trajectories that lead to these two observations.  
The second equality is because the two episodes are independent due to that $i_0$ and $j_0$ are i.i.d. samples (which is ensured by the environment). 

It suffices to focus on the first term in the second equality. The second term  can be calculated similarly. We have 
\begin{align*}
&\sum_{i_0, i_1, \ldots, i_{t_1}-1}  Pr(i_0, i_1, \ldots, i_{t_1}  =s_1 )  \\
=& \sum_{i_0, i_1, \ldots, i_{t_1}-1}  Pr(i_{t_1}  =s_1 ) Pr(i_0, \ldots, i_{t_1}-1| i_{t_1}  =s_1 )   \\
=&   Pr(i_{t_1}  =s_1 ) \sum_{i_0, i_1, \ldots, i_{t_1}-1} Pr(i_0, \ldots, i_{t_1}-1)    \\
=&   Pr(i_{t_1}  =s_1 ) \sum_{i_0, i_1, \ldots, i_{t_1}-1} Pr(i_{t_1}-1)Pr(i_0, \ldots, i_{t_1}-2| i_{t_1}-1)    \\
=&   Pr(i_{t_1}  =s_1 ) \sum_{i_0, i_1, \ldots, i_{t_1}-2}\sum_{i_{t_1}-1}  Pr(i_{t_1}-1)Pr(i_0, \ldots, i_{t_1}-2| i_{t_1}-1)   \\
=&   Pr(i_{t_1}  =s_1 ) \sum_{i_0, i_1, \ldots, i_{t_1}-2}Pr(i_0, \ldots, i_{t_1}-2)   \\
=&   Pr(i_{t_1}  =s_1 ) \sum_{i_0} d_0(i_0)    \\
=& Pr(i_{t_1}  =s_1 ).
\end{align*}
The first equality is according to the conditional probability formula. The next equality is because historical observations are independent of later ones.

The remaining of the derivation breaks down according to the conditional probability formula. The third equality applies this one step,   
Then the next equality splits the sum over $i_{t_1}-1$, and the law of total probability follows. 
We recursively apply to the beginning to get the last second equality. Note $d_0$ is the sampling distribution of the initial state, and $\sum_{i_0}d_0(i_0)=1$.

\end{proof}

We analyze the convergence rates of Impression GTD under constant step-sizes. 
Many SGD analysis is conducted in the setting of the finite-sum loss function, e.g., see \citep{gower2019sgd_general,sps_stepsize}. In that setting, the function is of the form, $f(x)=\frac{1}{n}\sum_{i=1}^nf_i(x)$. This setting covers important applications in machine learning, especially supervised learning problems, where there are $n$ training samples and each $f_i$ is the  loss on sample $i$. However, it does not cover the application we consider in this paper, because the NEU objective is not a finite-sum form in a straightforward sense. Towards this end, we consider the ``expected form'' of the loss. That is, the loss function can be sampled via simulation, in particular, 
\[
f(x)=\EE[f_t(x)],
\]
where $f_t$ is the loss on the sample drawn at simulation step $t$,  according to a distribution $\mathcal{D}$. This covers the finite-sum loss and it is general enough to cover our GTD setting. Let $x^*$ be the optimum and $f(x^*)=\min_x f(x)$.   

We assume the gradient of the loss can be queried for each stochastic sample. Thus equivalently, we can also say that our simulation process keeps drawing the stochastic gradient. In particular, we draw $f_t'(x)$ to get a random sample for the true gradient $f'(x)$. 
At drawing step $t$, denote the gradient sample as $g_t(x)=f_t'(x)$ for a given $x\in \RR^d$.
\begin{lem}\label{lem:ED_avg}
Let us draw a batch of $m$ i.i.d. samples according to $\mathcal{D}$. 
Let $\avgx$ be the average of the sampled gradients in this batch, i.e.,  
\[
\avgx=\frac{1}{m}\sum_{t=1}^m g_t(x).
\]
We have, for any distribution $\mathcal{D}$ that satisfies $\EED [g_t(x)]=f'(x)$, the following holds:
    \[
    \EED \norm{\avgx}^2 = \frac{1}{m}\EED\norm{g_t(x)}^2+ \left(1-\frac{1}{m}\right) \norm{f'(x)}^2.
    \]
\end{lem}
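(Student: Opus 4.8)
The plan is to expand the squared norm of the average into a double sum of inner products, push the expectation inside the finite sum by linearity, and then separate the diagonal terms from the off-diagonal terms. Writing $\avgx=\frac{1}{m}\sum_{t=1}^m g_t(x)$ and using the bilinearity of the inner product, I would first record the identity
\[
\norm{\avgx}^2 = \frac{1}{m^2}\sum_{t=1}^m\sum_{s=1}^m g_t(x)^\tr g_s(x),
\]
so that taking $\EED$ and invoking linearity reduces the problem to evaluating $\EED[g_t(x)^\tr g_s(x)]$ for each pair $(t,s)$.

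Next I would split the $m^2$ ordered pairs into the $m$ diagonal pairs with $t=s$ and the $m(m-1)$ off-diagonal pairs with $t\neq s$. For the diagonal pairs, $g_t(x)^\tr g_t(x)=\normg$, and since the $m$ draws are identically distributed each such term contributes the common value $\EED\normg$; the diagonal therefore sums to $m\,\EED\normg$. For the off-diagonal pairs, the key step --- and the only one that is not pure bookkeeping --- is to invoke independence: because the samples are i.i.d.\ (which in our setting is exactly what Lemma \ref{lem:independence} guarantees for the transitions feeding the stochastic gradients), for $t\neq s$ the expectation of the inner product factors entrywise, $\EED[g_t(x)^\tr g_s(x)]=\EED[g_t(x)]^\tr \EED[g_s(x)]$. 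Applying the unbiasedness hypothesis $\EED[g_t(x)]=f'(x)$ to both factors gives $\norm{f'(x)}^2$ for every off-diagonal pair, so those pairs sum to $m(m-1)\norm{f'(x)}^2$.

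Finally I would assemble the two contributions and cancel the factor $1/m^2$:
\[
\EED\norm{\avgx}^2 = \frac{1}{m^2}\Big(m\,\EED\normg + m(m-1)\norm{f'(x)}^2\Big) = \frac{1}{m}\EED\normg + \frac{m-1}{m}\norm{f'(x)}^2,
\]
and then rewrite $\frac{m-1}{m}=1-\frac{1}{m}$ to reach the claimed identity. I expect no genuine difficulty here; the whole argument is a second-moment computation, and the only place requiring care is the factorization of the off-diagonal cross terms, where both the independence of the draws and the unbiasedness assumption $\EED[g_t(x)]=f'(x)$ must be used together. It is worth noting that no assumption on the variance or on the distribution $\mathcal{D}$ beyond these two is needed, which is precisely why the identity holds for \emph{any} such $\mathcal{D}$.
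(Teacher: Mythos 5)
Your proof is correct and takes essentially the same approach as the paper's: both compute the second moment of the sample mean via the standard bias--variance decomposition, using the i.i.d.\ assumption and the unbiasedness $\EED[g_t(x)]=f'(x)$. The only cosmetic difference is that the paper works coordinate-wise and invokes $Var(\bar{X})=\frac{1}{m}Var(X)$ as a known identity, whereas you re-derive that fact by splitting the double sum $\frac{1}{m^2}\sum_{t,s} \EED[g_t(x)^\tr g_s(x)]$ into its diagonal and off-diagonal pairs.
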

The proof is in Appendix \ref{appendix:ED_avg}. 

Instead of analyzing Impression GTD and each of the three GTD algorithms that are discussed in Section \ref{sec:background} individually, 
we use a generic algorithmic framework that enables us to study their convergence rates at one time. 
Define $\tilde{A}_m=\frac{1}{m}\sum_{i=1}^m \phi_i (\gamma \phi_{i+1}- \phi_i)^\tr$ as a normalized matrix from $m$ samples.
Consider this algorithm:
\begin{equation}\label{alg:generic_gtd}
\theta_{t+1} = \theta_t - \alpha_t\tilde{A}_{m_1}^\tr (\tilde{A}_{m_2} \theta_t + \tilde{b}_{m_2}). 
\end{equation}
The above the algorithm is for mathematical definition only.  
Note the matrix and the transpose may not be explicitly formed or computed using matrix-vector product for certain algorithms. 
We compute $\tilde{A}_{m_1}$ and $\tilde{A}_{m_2}$ for different algorithms as follows:
\begin{itemize}
\item Impression GTD. 
$\tilde{A}_{m_1}$ and $\tilde{A}_{m_2}$ are computed
from buffer $B_1$ and buffer $B_2$, respectively. 

\item Expected GTD. For the algorithm that is discussed in Section \ref{sec:background} (equation \ref{eq:expectedGTD}), a single matrix is built from all the samples in the two buffers. To fit into the independence sampling and the generic TD framework, we consider here the version described in Algorithm \ref{alg:imGTD}. Thus $m_1=|B_1|$ and $m_2=|B_2|$. For simplicity of argument and without loss of generality, we assume $|B_1|=|B_2|=t/2$.

\item A$^\tr$TD. $\tilde{A}_{m_1}$ is computed from both buffers and $\tilde{A}_{m_2}$ is the rank-1 matrix from the latest transition, $\phi_t(\gamma \phi_{t+1} - \phi_t)^\tr$. Thus  
$m_1=t$ and $m_2=1$ for A$^\tr$TD.  

\item \oneexptd. $\tilde{A}_{m_1}$ is the rank-1 matrix from the latest transition, and $\tilde{A}_{m_2}$ is computed from both buffers instead. Thus $m_1=1$ and $m_2=t$ for \oneexptd.

\end{itemize}

We first introduce an assumption that is fairly general in the analysis of TD methods, e.g., see \citep*{tsi_td,bertsekas1996neuro,gtd,zhang2020gradientdice,zhang2020provably}.

\begin{assumption}\label{assumption:phir}
The feature functions in $\phi(\cdot): \SS\to \RR^d$, are independent. All the feature vectors and rewards are finite. 
\end{assumption}

We show that all the four discussed GTD algorithms are faster than GTD2 and TDC, even though the latter two were developed to improve the convergence rate of the GTD algorithm. Note that the above four GTD algorithms all have the same O.D.E. as the GTD algorithm. Thus the convergence is accelerated by them not by improving the conditioning of the problem (like GTD2 and TDC do). Instead, improvement is achieved by a single-time scale formulation of minimizing NEU. 

First consider this term, $\delta_i \phi_i^\tr\phi_j \delta_j$, which is a sample of NEU using two independent sample transitions from $\phi_i$ and $\phi_j$. Its gradient is $(\gamma \phi_{i+1}-\phi_i) \phi_i^\tr \phi_j \delta_j + (\gamma \phi_{j+1}-\phi_j) \phi_j^\tr \phi_i \delta_i $. For simplicity, we only consider the first term in the following analysis. The second term is symmetric and our proof can be extended to including it in a straightforward way. Define $f'_{i,j}=(\gamma \phi_{i+1}-\phi_i) \phi_i^\tr \phi_j \delta_j$. One can show that $\EE[f'_{i,j}]=\EE\nabla[\delta_i \phi_i^\tr\phi_j \delta_j]$. 
We have
\begin{align*}
\norm{f'_{i,j}(x) - f'_{i,j}(y)} &= \norm{(\gamma \phi_{i+1}-\phi_i) \phi_i^\tr \phi_j (\gamma \phi_{j+1}-\phi_j)^\tr (x-y)}  \\
&\le \underbrace{\norm{(\gamma \phi_{i+1}-\phi_i) \phi_i^\tr \phi_j (\gamma \phi_{j+1}-\phi_j)^\tr}}_\text{$L_{i,j}$}\norm{x-y}   = L_{i,j} \norm{x-y}.
\end{align*}
That is, each $f_{i,j}(x)$ is $L_{i,j}$ smooth. Given that all the feature vectors are finite according to Assumption \ref{assumption:phir}, $L_{i,j}$ must be finite.  

We are now ready to give the $O(1/t)$ rate. 
\begin{thm}\label{thm:rates_1_over_t}
Let Assumption \ref{assumption:phir} hold. Also assume matrix $A$ is non-singular. 
Impression GTD, Expected GTD, A$^\tr$TD and \oneexptd converge at a rate of $O(1/t)$ with $\alpha \le \frac{2}{L_{\max}}$, 
where $L_{\max}=\max_{i,j}L_{i,j}$. 
In particular, 
\[
\min_{k=0, \ldots, t-1} 
 f(\theta_k) \le \max\left\{\frac{2}{t \alpha \left(2- \alpha L_{\max} \right)\sigma^2_{\min}(A)}f (\theta_{0}) - \frac{1}{m_1m_2\sigma^2_{\min}(A)} \sigma_v^2, \, 0\right\}.
\]
\end{thm}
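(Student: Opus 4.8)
The plan is to read the generic update \eqref{alg:generic_gtd} as an honest single-time-scale SGD step on the convex objective $h(\theta):=\tfrac12 f(\theta)$, where $f(\theta)=\bold{NEU}(\theta)=\norm{A\theta+b}^2$, so that $\nabla h(\theta)=A^\tr(A\theta+b)$, and then to run the standard ``smooth SGD'' telescoping argument with the two problem-specific ingredients supplied by Lemmas \ref{lem:independence} and \ref{lem:ED_avg}. Writing the stochastic gradient as $g_t(\theta_t)=\tilde A_{m_1}^\tr(\tilde A_{m_2}\theta_t+\tilde b_{m_2})$, my first step is to verify unbiasedness: since the two buffers feeding $\tilde A_{m_1}$ and $(\tilde A_{m_2},\tilde b_{m_2})$ are independent by Lemma \ref{lem:independence}, the expectation factors as $\EE[g_t(\theta_t)\mid\theta_t]=\EE[\tilde A_{m_1}^\tr]\,\EE[\tilde A_{m_2}\theta_t+\tilde b_{m_2}]=A^\tr(A\theta_t+b)=\nabla h(\theta_t)$. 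This is precisely what independence sampling buys us, and it is the step that turns the product of two correlated estimators into a genuine gradient.

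Next I would fix the smoothness constant. The excerpt already shows every sample gradient $f'_{i,j}$ is $L_{i,j}$-Lipschitz; as $g_t=\tfrac{1}{m_1m_2}\sum_{i,j}f'_{i,j}$ is their average, $g_t$ is $L_{\max}$-Lipschitz, and since $A^\tr A=\EE[\tilde A_{m_1}^\tr\tilde A_{m_2}]$, Jensen gives $\sigma^2_{\max}(A)\le L_{\max}$, so the convex population objective $h$ is $L_{\max}$-smooth. Applying the descent lemma for $h$ along the stochastic step $\theta_{t+1}-\theta_t=-\alpha g_t(\theta_t)$ and taking the conditional expectation gives
\[
\EE[h(\theta_{t+1})\mid\theta_t]\le h(\theta_t)-\alpha\norm{\nabla h(\theta_t)}^2+\tfrac{L_{\max}\alpha^2}{2}\,\EE[\norm{g_t(\theta_t)}^2\mid\theta_t].
\]

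The two remaining ingredients are (i) a second-moment bound and (ii) a quadratic-growth bound. For (ii), $\norm{\nabla h(\theta)}^2=(A\theta+b)^\tr AA^\tr(A\theta+b)\ge\sigma^2_{\min}(A)\norm{A\theta+b}^2=\sigma^2_{\min}(A)\,f(\theta)$, which is the inequality (valid because $A$ is non-singular) that converts a gradient-norm decrease into a decrease of the objective and explains the $\sigma^2_{\min}(A)$ in the denominator. For (i), Lemma \ref{lem:ED_avg} lets me write $\EE[\norm{g_t(\theta_t)}^2\mid\theta_t]=\norm{\nabla h(\theta_t)}^2+\text{(variance)}$, where the variance is driven down at rate $1/(m_1m_2)$ by the two independent batch averages and is collected into $\sigma_v^2$. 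Substituting (i) and (ii), using $\alpha\le 2/L_{\max}$ to keep the coefficient $\alpha(2-\alpha L_{\max})$ nonnegative, and rearranging gives a one-step recursion of the form $\alpha(2-\alpha L_{\max})\sigma^2_{\min}(A)\,\EE f(\theta_t)\le 2\big(\EE h(\theta_t)-\EE h(\theta_{t+1})\big)+c\,\sigma_v^2/(m_1m_2)$ for a constant $c$ absorbing $\alpha$ and $L_{\max}$. Telescoping over $k=0,\dots,t-1$, discarding the nonnegative terminal term $\EE h(\theta_t)$, and using $\sum_k\EE f(\theta_k)\ge t\min_k\EE f(\theta_k)$ yields the stated bound (matching the precise coefficient and sign of the $\sigma_v^2/(m_1m_2\sigma^2_{\min}(A))$ term is part of the variance bookkeeping below); the outer $\max\{\cdot,0\}$ merely records that $f\ge 0$, and the bound specializes to each of Impression GTD, Expected GTD, A$^\tr$TD and \oneexptd by inserting the corresponding $(m_1,m_2)$ from the list preceding the theorem.

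The hard part will be ingredient (i): unlike the finite-sum setting of \citet{gower2019sgd_general}, $g_t$ is a \emph{bilinear} form in two independent empirical means rather than a single mean, so Lemma \ref{lem:ED_avg} cannot be applied once off the shelf. I expect to condition on one buffer and average over the other, use the independence of Lemma \ref{lem:independence} to factor the cross terms, and apply the variance decomposition to each average separately; the $1/m_1$ and $1/m_2$ reductions then combine multiplicatively into the $1/(m_1m_2)$ factor carried by $\sigma_v^2$. Pinning down the exact constant $\sigma_v^2$ and the precise way it enters the final inequality, together with confirming that the regime $\alpha\le 2/L_{\max}$ keeps the whole right-hand side meaningful, is the only genuinely delicate bookkeeping; everything else is the standard convex smooth-SGD telescoping.
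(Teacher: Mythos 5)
Your overall toolkit is the right one (descent lemma, $\norm{A^\tr(A\theta+b)}^2\ge\sigma^2_{\min}(A)\,f(\theta)$, Lemma \ref{lem:ED_avg}, telescoping), but the way you deploy the descent lemma cannot produce the stated inequality, and the issue is exactly the point you deferred to ``variance bookkeeping.'' You apply the descent lemma to the \emph{population} objective $h=\tfrac12 f$ and take conditional expectations, so the second moment $\EE[\norm{g_t}^2\mid\theta_t]$ enters through the $\tfrac{L_{\max}\alpha^2}{2}$ penalty term, i.e.\ as an \emph{upper} bound. Rearranging and telescoping that recursion yields a bound of the form $\min_k\EE f(\theta_k)\le O(1/t)+c\,\alpha\,\mathrm{Var}/(m_1m_2\sigma^2_{\min}(A))$ with a \emph{positive}, non-vanishing variance floor. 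The theorem instead has the variance entering with a \emph{negative} sign, $-\sigma_v^2/(m_1m_2\sigma^2_{\min}(A))$, and that sign is structural, not a constant to be tuned: it arises only if you bound $\min_k\EED\norm{\tilde A_{m_1}^\tr(\tilde A_{m_2}\theta_k+\tilde b_{m_2})}^2\le O(1/t)$ first and then \emph{lower}-bound this expected squared update norm by $\norm{f'(\theta_k)}^2+\sigma_v^2/(m_1m_2)\ge\sigma^2_{\min}(A)f(\theta_k)+\sigma_v^2/(m_1m_2)$ via Lemma \ref{lem:ED_avg}. Your scheme bounds $\min_k\EE\norm{\nabla h(\theta_k)}^2$; the paper's bounds $\min_k\EE\norm{\text{update}}^2$, and only the latter lets the variance be subtracted off.

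The paper's route gets there by running the descent lemma on the \emph{empirical} pairwise batch loss $\bar f_{b_1,b_2}(\theta)=\tfrac{1}{m_1m_2}\sum_{i\in b_1,j\in b_2}f_{i,j}(\theta)$: since $\tfrac{1}{m_1m_2}\sum_{i,j}f'_{i,j}(\theta_t)$ equals the update direction $\tilde A_{m_1}^\tr(\tilde A_{m_2}\theta_t+\tilde b_{m_2})$ exactly, the first-order term in the descent lemma becomes precisely $-\alpha\norm{\text{update}}^2$ with no expectation needed, and the per-step decrement is the squared update norm itself. This also dissolves the bilinearity obstacle you flag as the hard part: the paper never decomposes $g_t$ as a product of two independent means; it treats the $m_1m_2$ pairwise gradients $f'_{i,j}$ as the samples and the update as their average over an effective batch of size $m_1m_2$, then invokes Lemma \ref{lem:ED_avg} once with $m=m_1m_2$ (together with Lemma \ref{lem:independence} for the i.i.d.\ hypothesis). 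So to repair your argument you would need to (a) switch the descent lemma from $h$ to $\bar f_{b_1,b_2}$ so the decrement is the update norm, and (b) use Lemma \ref{lem:ED_avg} as a lower bound on that quantity rather than as an upper bound on the penalty term. As written, your proposal proves a correct but different statement (convergence of $\min_k\EE f(\theta_k)$ to an $O(\alpha)$ neighborhood at rate $O(1/t)$), not the theorem's bound.
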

\begin{proof}
Consider the generic GTD update in \ref{alg:generic_gtd}. 
Because each $f_{i,j}$ is $L_{i,j}$-smooth, we have
\begin{align*}
f_{i,j}(\theta_{t+1}) &\le f_{i,j}(\theta_t) + f_{i,j}'(\theta_t)^\tr (\theta_{t+1}-\theta_t) +\frac{L_{i,j}}{2} \norm{\theta_{t+1} - \theta_t}^2 \\
&= f_{i,j}(\theta_t) - \alpha_t f_{i,j}'(\theta_t)^\tr  \tilde{A}_{m_1}^\tr (\tilde{A}_{m_2} \theta_t + \tilde{b}_{m_2}) +\frac{\alpha_t^2 L_{i,j}}{2} \norm{\tilde{A}_{m_1}^\tr (\tilde{A}_{m_2} \theta_t + \tilde{b}_{m_2})}^2. 
\end{align*}
Summing above for all the samples $i$ in batch $b_1$ and all the samples $j$ in batch $b_2$ gives
\begin{align*}
\sum_{i,j} f_{i,j}(\theta_{t+1}) &\le \sum_{i,j}f_{i,j}(\theta_t)- \alpha_t \sum_{i,j}f_{i,j}'(\theta_t)^\tr  \tilde{A}_{m_1}^\tr (\tilde{A}_{m_2} \theta_t + \tilde{b}_{m_2}) \\
&\quad +\frac{\alpha_t^2 \sum_{i,j}L_{i,j}}{2} \norm{\tilde{A}_{m_1}^\tr (\tilde{A}_{m_2} \theta_t + \tilde{b}_{m_2})}^2.
\end{align*}
Note that 
\begin{align*}
\frac{1}{m_1m_2}\sum_{i,j}f_{i,j}'(\theta_t)
&=\frac{1}{m_1} \sum_{i=1}^{m_1} (\gamma \phi_{i+1}-\phi_i) \phi_i^\tr\frac{1}{m_2}\sum_{j=1}^{m_2}  \phi_j \delta_j\\
&= \tilde{A}_{m_1}^\tr (\tilde{A}_{m_2} \theta_t + \tilde{b}_{m_2}).
\end{align*}
Thus 
\[
\frac{1}{m_1m_2}\sum_{i,j} f_{i,j}(\theta_{t+1}) \le \frac{1}{m_1m_2} \sum_{i,j}f_{i,j}(\theta_t)- \alpha_t\left(1 -\frac{\alpha_t\sum_{i,j}L_{i,j}}{2m_1m_2}\right) \norm{\tilde{A}_{m_1}^\tr (\tilde{A}_{m_2} \theta_t + \tilde{b}_{m_2})}^2
\]
Let $L_{\max}=\max_{i,j}L_{i,j}$, then for $\alpha_t \le \frac{2}{L_{\max}}$, the averaged pair-wise loss across the two batches is guaranteed to reduce because the following also holds:
\begin{equation}\label{eq:fb1b2}
\frac{1}{m_1m_2}\sum_{i,j} f_{i,j}(\theta_{t+1}) \le \frac{1}{m_1m_2} \sum_{i,j}f_{i,j}(\theta_t)- \alpha_t\left(1 -\frac{\alpha_tL_{\max}}{2}\right) \norm{\tilde{A}_{m_1}^\tr (\tilde{A}_{m_2} \theta_t + \tilde{b}_{m_2})}^2.
\end{equation}
For notation convenience, let $\bar{f}_{b_1, b_2}(x)= \frac{1}{m_1m_2}\sum_{i\in b_1,j\in b_2} f_{i,j}(x)$. We have, for a constant step-size $\alpha \le \frac{2}{L_{\max}}$,
\begin{align}
\bar{f}_{b_1,b_2}(\theta_t) &\le  \bar{f}_{b_1,b_2}(\theta_{t-1}) - \alpha\left(1 -\frac{\alpha L_{\max}}{2}\right) \norm{\tilde{A}_{m_1}^\tr (\tilde{A}_{m_2} \theta_{t-1} + \tilde{b}_{m_2})}^2\label{eq:fbatch}\\
&= \bar{f}_{b_1,b_2} (\theta_{0}) - \alpha\left(1 -\frac{\alpha L_{\max}}{2}\right)  \sum_{k=0}^{t-1}\norm{\tilde{A}_{m_1}^\tr (\tilde{A}_{m_2} \theta_{k} + \tilde{b}_{m_2})}^2 \nonumber\\
&\le \bar{f}_{b_1,b_2} (\theta_{0}) - \alpha\left(1 -\frac{\alpha L_{\max}}{2}\right)  t \min_{k=0, \ldots, t-1} \norm{\tilde{A}_{m_1}^\tr (\tilde{A}_{m_2} \theta_{k} + \tilde{b}_{m_2})}^2. \nonumber
\end{align}
Thus 
\begin{align*}
\min_{k=0, \ldots, t-1} \norm{\tilde{A}_{m_1}^\tr (\tilde{A}_{m_2} \theta_{k} + \tilde{b}_{m_2})}^2 &\le \frac{2}{t \alpha \left(2- \alpha L_{\max} \right)}\left(\bar{f}_{b_1,b_2} (\theta_{0}) - \bar{f}_{b_1,b_2} (\theta_{t})\right)\\
&\le \frac{2}{t \alpha \left(2- \alpha L_{\max} \right)}\left(\bar{f}_{b_1,b_2} (\theta_{0}) - \bar{f}_{b_1,b_2} (\theta^*)\right), 
\end{align*}
where the second line is because the averaged loss keeps decreasing according to equation \ref{eq:fbatch}, and furthermore, as $t$ goes to infinity, the loss is bounded below and thus $\theta^* = \lim_{t\to\infty}\theta_t$. 

This proves the $\ell_2$ norm of the update of the generic GTD converges at a rate of $O(1/t)$. The above steps can also be conducted after taking expectation of equation \ref{eq:fb1b2} with respect to the sampling. This gives 
\begin{align*}
\min_{k=0, \ldots, t-1} \EED \norm{\tilde{A}_{m_1}^\tr (\tilde{A}_{m_2} \theta_{k} + \tilde{b}_{m_2})}^2 &\le \frac{2}{t \alpha \left(2- \alpha L_{\max} \right)}f (\theta_{0}). 
\end{align*}
Note that in the context of generic GTD, the stochastic gradient is $f'_{i,j}(\theta)$, and the batch size is actually $m_1m_2$ because the generic GTD  essentially uses this number of pairs of the correlated TD errors from the two buffers (one has $m_1$ samples and the other $m_2$). Thus we can use Lemma \ref{lem:ED_avg} (which depends on the i.i.d. sampling that is ensured by Lemma \ref{lem:independence}) to get
\begin{align*}
\EED \norm{\tilde{A}_{m_1}^\tr (\tilde{A}_{m_2} \theta_{k} + \tilde{b}_{m_2})}^2 &=
\frac{1}{m_1m_2}\EED\norm{f'_{i,j}(x)}^2+ \left(1-\frac{1}{m_1m_2}\right) \norm{f'(x_t)}^2\\
&\ge \norm{f'(x_t)}^2 +\frac{1}{m_1m_2} \sigma_v^2 \\
&= \norm{A^\tr (A\theta_k+b)}^2+\frac{1}{m_1m_2} \sigma_v^2\\
&\ge \sigma_{\min}^2(A) \norm{A\theta_k+b}^2 +\frac{1}{m_1m_2} \sigma_v^2,
\end{align*}
where the first inequality is because of Lemma \ref{lem:g2andf2}.

Therefore, 
\begin{align*}
\min_{k=0, \ldots, t-1} 
 f(\theta_k) &= \min_{k=0, \ldots, t-1} 
 \norm{A\theta_k+b}^2 \\
 &\le \max\left\{\frac{2}{t \alpha \left(2- \alpha L_{\max} \right)\sigma^2_{\min}(A)}f (\theta_{0}) - \frac{1}{m_1m_2\sigma^2_{\min}(A)} \sigma_v^2, \, 0\right\}.
\end{align*}

\end{proof}
Theorem \ref{thm:rates_1_over_t} shows that out of the historical learning steps, we are guaranteed to find a moment with an $O(1/t)$ reduction of the initial loss.\footnote{It also shows that the minibatch update is helpful, because it enables more reduction than $O(1/t)$. For larger batch-sizes $m_1$ and $m_2$, this benefit grows smaller, indicating that smaller batch-sizes should converge faster. 
Although this is interesting, we found this is contradictory to our empirical results which we cannot explain why.} 

\citet{bo_gtd_finite} proved that certain variants of GTD and GTD2 converge at a rate of $O(t^{-1/4})$ with a high probability. The algorithm variants apply projections to the iterators to keep them bounded. The rate was proved by applying the rate analysis of the saddle point problem \citep{saddle_point_Nemirovski}. A key condition that guarantees this rate is the use of a {\em fixed} step-size, $1/\sqrt{t}$, by knowing the total number of iteration steps before hand. For example, if we want to learn $10000$ steps, at all the learning steps, the step-size is $0.01$. 

\citet*{dalal2018finite_twotimescale} established the convergence rates of a variant of GTD that projects the update back to a ball sparsely. With diminishing step-sizes $\alpha_t=t^{-(1-\tau)}$ and $\beta_t=t^{-2/3(1-\tau)}$, where $\kappa$ is some constant in $(0,1)$, they showed that the convergence rate is $O(t^{-1/3 + \kappa/3})$, which is roughly $O(t^{-1/3})$ at best. This is slightly faster than \citet{bo_gtd_finite}'s rate if $\kappa$ is small. One can understand that this is due to the use of a bigger step-size than the fixed $1/\sqrt{t}$ step-size.  
The rate also applies to GTD2 and TDC. This result was obtained by drawing inspiration from single-time-scale stochastic approximation \citep{borkar2008book}, in bounding the distance of the two-time-scale iterations to the trajectories that are generated by the O.D.E. 
An important condition for this distance to remain bounded is that the two step-sizes are scheduled to satisfy the two-time-scale condition.     

Later with the step-sizes $t^{-\alpha}$ and $t^{-\beta}$ respectively for the two iterators, which satisfy $0<\beta<\alpha<1$, they showed that the convergence rates are $O(t^{-\alpha/2})$ and $O(t^{-\beta/2})$ for the two iterators, and the bounds are tight \citep*{dalal2020tale_twotimescale}.\footnote{Note that in this paper, analysis of the GTD algorithm was applied with a projection operator very $2^i$ ($i=0, 1, \ldots$) steps to keep the update bounded. Our analysis does not use any projection. 
}  Given that GTD learns slower than GTD2 and TDC as found by empirical studies \citep{tdc,sutton2018reinforcement}, it is probably safe to say that GTD (without projection) converges no faster than this rate. 
In short, all the three GTD algorithms converge slower than $O(1/\sqrt{t})$. In fact, $O(1/\sqrt{t})$ is the theoretical rate limit of stochastic saddle-point problem \citep{bo_gtd_finite}. This means even if one uses advanced optimizers such as Stochastic Mirror-Prox \citep*{Mirror_prox}, GTD, GTD2 and TDC will not converge any faster than $O(1/\sqrt{t})$. 

In contrast, 
our Theorem \ref{thm:rates_1_over_t}  shows that the Impression GTD algorithm together with the three other GTD algorithms converge at least as fast as $O(1/t)$, much faster than GTD, GTD2 and TDC. \cite*{shangtong_imgtd} proved an $O(\xi(t)ln(t)/t)$ rate for their Direct GTD under the diminishing step-size that scales with $1/t$, where $\xi(t)$ is some slowly growing function such as $ln^2(t)$. This rate is almost $O(1/t)$, and it does not need to take the minimum over the historical learning steps.

We further show that the four algorithms actually converge in a linear rate to a biased solution. For that purpose, we first establish a convergence rate result for SGD, under the $L$-$\lambda$ smoothness. 

\begin{definition}[$L$-$\lambda$ smoothness] \label{def:L-lambda-smooth}  
If for all $x\in \RR^d$, function $f$ and $\mathcal{D}$ satisfy
\[
\EE_{\mathcal{D}}\norm{g_t(x)}^2 \le 2L (f(x) - f(x^*)) + \lambda \norm{x-x^*}^2 +  \sigma^2, 
\] 
we say that $f$ is $L$-$\lambda$ smooth smooth under distribution $\mathcal{D}$, or simply, $(f, \mathcal{D})\sim L$-$\lambda(  \sigma^2)$,
\end{definition}
This new definition of expected smoothness has a background in our Impression GTD setting. 
Note that in this definition, $\sigma^2$ can be any positive real number. \citeauthor{gower2019sgd_general} used $\sigma^2=\EED\norm{g_t(x^*)}^2$. 
We will show that in our analysis, $\sigma^2$ is some different number.  
The new definition adds a term of $\lambda\norm{x-x^*}$ to allow for convergence analysis of GTD algorithms. This term improves the expected smoothness to be more {\em noise tolerant}, and thus the induced smoothness is more general. 

\begin{lem}\label{lem:utrongly_norm_grad}
If $f$ is $\mu$-strongly quasi-convex, then we have for any $x\in \RR^d$, 
\[
\norm{f'(x)} \ge \mu \norm{x-x^*}.  
\]
\end{lem}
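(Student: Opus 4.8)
The plan is to unwind the definition of $\mu$-strong quasi-convexity and then turn an inner-product lower bound into the desired norm lower bound through Cauchy--Schwarz. Recall that $\mu$-strong quasi-convexity requires, for every $x \in \RR^d$, that the minimizer $x^*$ satisfy
\[
f(x^*) \ge f(x) + f'(x)^\tr (x^* - x) + \mu \norm{x - x^*}^2,
\]
i.e. the defining inequality of strong convexity is only demanded when the second point is the optimum.

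First I would exploit optimality. Since $x^*$ is a global minimizer we have $f(x^*) \le f(x)$, so the displayed inequality yields
\[
0 \ge f(x^*) - f(x) \ge f'(x)^\tr (x^* - x) + \mu \norm{x - x^*}^2,
\]
and rearranging gives the key one-sided bound $f'(x)^\tr (x - x^*) \ge \mu \norm{x - x^*}^2$. Next I would apply Cauchy--Schwarz to the left-hand side, $f'(x)^\tr (x - x^*) \le \norm{f'(x)}\, \norm{x - x^*}$, to obtain $\norm{f'(x)}\, \norm{x - x^*} \ge \mu \norm{x - x^*}^2$. Dividing by $\norm{x - x^*}$ produces $\norm{f'(x)} \ge \mu \norm{x - x^*}$, which is the claim; the degenerate case $x = x^*$ is trivial since then both sides vanish.

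The step requiring the most care is simply fixing the normalization in the definition of strong quasi-convexity so that the constant emerges as $\mu$ rather than $\mu/2$ (the latter appears if one writes the quadratic term as $\tfrac{\mu}{2}\norm{x-x^*}^2$, as in \citet{gower2019sgd_general}); once the convention is fixed, the remainder is just the optimality cancellation and a single application of Cauchy--Schwarz.
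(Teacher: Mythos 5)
Your strategy --- apply the strong quasi-convexity inequality at $x$ with the second point taken to be the optimum, discard $f(x^*)-f(x)\le 0$ by optimality, and finish with Cauchy--Schwarz --- is the natural direct argument, and it is genuinely different from the paper's route. The paper instead introduces the shifted function $g(x)=f(x)-\frac{\mu}{2}\norm{x-x^*}^2$, observes that $\mu$-strong quasi-convexity of $f$ is equivalent to the ``q-convexity'' of $g$, applies Lemma \ref{lem:qconvexiif} to points $y$ on the segment joining $x$ and $x^*$ to obtain $g'(y)^\tr(y-x^*)\ge 0$, and only then expands $g'(y)=f'(y)-\mu(y-x^*)$ and uses Cauchy--Schwarz.

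This difference is not cosmetic, and it is exactly where your proposal falls short of the stated claim. The paper's operative definition of $\mu$-strong quasi-convexity carries the factor $\frac{\mu}{2}$ on the quadratic term: this is explicit in the appendix proof of this very lemma, and the same convention (with the same $\mu$) is used again in the proof of Theorem \ref{thm:sgd_rate}, where the step $\EE[\nabla f(x_t)^\tr \Delta_t]\ge \EE[f(x_t)-f(x^*)]+\frac{\mu}{2}\EE\norm{\Delta_t}^2$ appears. Under that convention your argument yields only $\norm{f'(x)}\ge \frac{\mu}{2}\norm{x-x^*}$, a factor of two short of the lemma, and you cannot ``fix the normalization'' locally: the lemma's conclusion feeds the $\mu^2$ term in the rate $q$ of Theorem \ref{thm:sgd_rate}, where $\mu$ is simultaneously pinned down by the $\frac{\mu}{2}$ convention, so redefining $\mu$ for this lemma alone silently changes those constants. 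Concretely, the term you throw away --- $f(x)-f(x^*)\ge 0$ --- is precisely where the paper recovers the second $\frac{\mu}{2}\norm{x-x^*}^2$: its argument amounts to lower-bounding $f(x)-f(x^*)$ by $\frac{\mu}{2}\norm{x-x^*}^2$ (this is the content of the step $g(y)\ge g(x^*)$, i.e.\ $x^*$ minimizing $g$, inside Lemma \ref{lem:qconvexiif}), which combined with $\nabla f(x)^\tr(x-x^*)\ge f(x)-f(x^*)+\frac{\mu}{2}\norm{x-x^*}^2$ gives the full $\mu$ before Cauchy--Schwarz. That quadratic-growth input is a genuinely stronger ingredient than the single inequality you use, and supplying it (or justifying it from the definition) is the missing idea; as written, your proof establishes a correct but strictly weaker bound.
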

Appendix \ref{appendix:u_norm_grad} has the proof. 

\begin{lem} \label{lem:ES_qLsmooth}
    If $(f, \mathcal{D})$ $\sim$ $L$-$\lambda$ $( \sigma^2)$ for some $\sigma^2\ge 0$, then for any $x\in \RR^d$, we have 
    \[
   f(x) - f(x^*) \ge \frac{\norm{f'(x)}^2- \lambda\norm{x-x^*}^2 - (\sigma^2-\sigma_v^2)}{2L}, 
    \]
    where $\sigma_v^2 = \min_{x}\EED \norm{g_t(x)-f'(x)}^2$.
\end{lem}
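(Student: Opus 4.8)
The plan is to read the desired inequality directly off the defining inequality of $L$-$\lambda$ smoothness, after converting the second moment $\EED\norm{g_t(x)}^2$ into the squared true-gradient norm $\norm{f'(x)}^2$ plus a variance term that is controlled from below by $\sigma_v^2$. The whole argument is an algebraic rearrangement glued to one variance decomposition, so I would aim for a short, self-contained proof.

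First I would rearrange Definition~\ref{def:L-lambda-smooth}. Since $(f,\mathcal{D})\sim L$-$\lambda(\sigma^2)$ means $\EED\norm{g_t(x)}^2 \le 2L(f(x)-f(x^*)) + \lambda\norm{x-x^*}^2 + \sigma^2$, isolating $f(x)-f(x^*)$ gives
\[
f(x)-f(x^*) \ge \frac{\EED\norm{g_t(x)}^2 - \lambda\norm{x-x^*}^2 - \sigma^2}{2L}.
\]

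The key step is then the bias--variance decomposition of the stochastic gradient. Because the sampling is unbiased, $\EED[g_t(x)] = f'(x)$ (the same property invoked in Lemma~\ref{lem:ED_avg}), the standard identity $\EED\norm{X}^2 = \norm{\EED X}^2 + \EED\norm{X - \EED X}^2$ applied to $X = g_t(x)$ yields
\[
\EED\norm{g_t(x)}^2 = \norm{f'(x)}^2 + \EED\norm{g_t(x) - f'(x)}^2.
\]
By the definition $\sigma_v^2 = \min_x \EED\norm{g_t(x)-f'(x)}^2$, the variance term is at least $\sigma_v^2$ at every $x$, so $\EED\norm{g_t(x)}^2 \ge \norm{f'(x)}^2 + \sigma_v^2$. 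Substituting this lower bound into the rearranged smoothness inequality and collecting $\sigma_v^2-\sigma^2$ into the single term $-(\sigma^2-\sigma_v^2)$ produces exactly the claimed bound.

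There is no genuine obstacle here; the result is a one-line rearrangement followed by the variance decomposition. The only point requiring care is to state explicitly that the unbiasedness $\EED[g_t(x)]=f'(x)$ is in force (inherited from the simulation model set up before Lemma~\ref{lem:ED_avg}), since this is precisely what turns the second-moment bound in the definition into a statement about $\norm{f'(x)}^2$; without it the decomposition would leave a residual bias term and the stated inequality would fail.
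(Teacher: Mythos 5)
Your proposal is correct and follows essentially the same route as the paper: the paper's proof chains $\sigma_v^2 + \norm{f'(x)}^2 \le \EE_{\mathcal{D}}\norm{g_t(x)}^2$ (its Lemma \ref{lem:g2andf2}, which is exactly your bias--variance decomposition under unbiased sampling) with the $L$-$\lambda$ smoothness bound and rearranges. Nothing is missing; your explicit remark about requiring $\EE_{\mathcal{D}}[g_t(x)]=f'(x)$ is the same hypothesis the paper builds into Lemma \ref{lem:g2andf2}.
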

The proof is in Appendix \ref{appendix:es_qls}.

The following theorem improves \citet*{gower2019sgd_general}'s Theorem 3.1 by removing the factor of two in the bias term because of the use of a refined definition of expected smoothness. The rate is also tightened for a faster rate with a $\mu^2$ term. Analysis on SGD usually drops $\EE \norm{\nabla f(x_t)}^2$ by relating it to $f(x_t)-f(x^*)$ first, and then drops $f(x_t)-f(x^*)$ due to $L$-smoothness and $f(x)\ge f(x^*)$, e.g., see \citep{gower2019sgd_general} and \citep*{sps_stepsize}.
This means their bounds on the convergence rate can be further tightened.  
Our proof keeps $f(x_t)-f(x^*)$, relates it to $\EE \norm{\nabla f(x_t)}^2$, and bounds the latter.  
This can be done by noting that $f(x) - f(x^*)$ can be lower bounded by the norm of the gradient together with the perturbation and the constant. 
By using Lemma \ref{lem:utrongly_norm_grad} for the strongly quasi-convexity of $f$, we have further
$\EE \norm{\nabla f(x_t) }^2\ge \mu^2 \EE \norm{x_t-x^*}^2$. 
\begin{thm}\label{thm:sgd_rate}
Assume $(f, \mathcal{D})$ $\sim$ $L\mbox{-}\lambda(\sigma^2)$ and $f$ is $\mu$-strongly quasi-convex. For SGD with batch update:
\[
x_{t+1} = x_t - \alpha_t\avg,
\]
we have 
\begin{align*}
\EE \norm{x_{t+1}-x^*}^2  &\le   \left[1-\left(\mu -\frac{\lambda}{L} \right) \alpha_t-\mu^2\alpha_t\left(\frac{1}{L}-\alpha_t  \right)\right] \EE\norm{\Delta_t}^2  + \frac{\alpha_t }{L}(\sigma^2-\sigma^2_v) + \frac{\alpha_t^2\sigma^2_v}{m},
\end{align*}
A linear convergence rate 
can be guaranteed for $\lambda\le L\mu$. 
Specifically, for a constant step-size $\alpha\le \frac{1}{L}$, we have 
\[
\EE \norm{x_{t}-x^*}^2 \le q^t\EE \norm{x_0-x^*}^2 +  \alpha\frac{   m(\sigma^2-\sigma^2_v) + L\alpha\sigma^2_v }{L m\left[\left(\mu -\frac{\lambda}{L} \right)+\mu^2\left(\frac{1}{L}-\alpha\right)\right]},
\]
where 
\[
q= 1-\left(\mu -\frac{\lambda}{L} \right) \alpha-\mu^2\alpha\left(\frac{1}{L}-\alpha  \right).
\]
\end{thm}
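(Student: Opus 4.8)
The plan is to track the squared distance to the optimum and extract a one-step contraction $\EE\norm{x_{t+1}-x^*}^2 \le q_t\,\EE\norm{x_t-x^*}^2 + C_t$, then unroll it. Writing $\Delta_t = x_t - x^*$ and expanding the update gives
\[
\norm{x_{t+1}-x^*}^2 = \norm{\Delta_t}^2 - 2\alpha_t \langle \Delta_t, \avg\rangle + \alpha_t^2\norm{\avg}^2 .
\]
Taking the conditional expectation over the minibatch draw and using that $\avg$ is unbiased for $f'(x_t)$ turns the inner product into $\langle \Delta_t, f'(x_t)\rangle$, while Lemma \ref{lem:ED_avg} evaluates the quadratic term exactly, so that
\[
\EED\norm{x_{t+1}-x^*}^2 = \norm{\Delta_t}^2 - 2\alpha_t\langle\Delta_t, f'(x_t)\rangle + \alpha_t^2\norm{f'(x_t)}^2 + \frac{\alpha_t^2}{m}\left(\EED\norm{g_t(x_t)}^2 - \norm{f'(x_t)}^2\right).
\]
The batch size $m$ thus enters only through the last (variance) term.

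Next I would bound the three deterministic ingredients. The crucial departure from \citeauthor{gower2019sgd_general} is to \emph{keep} the suboptimality gap rather than discard it: by $\mu$-strong quasi-convexity, $\langle\Delta_t, f'(x_t)\rangle \ge \big(f(x_t)-f(x^*)\big) + \frac{\mu}{2}\norm{\Delta_t}^2$, and I then replace $-2\alpha_t\big(f(x_t)-f(x^*)\big)$ using the $L$-$\lambda$ lower bound of Lemma \ref{lem:ES_qLsmooth}, namely $f(x_t)-f(x^*)\ge \frac{1}{2L}\big(\norm{f'(x_t)}^2 - \lambda\norm{\Delta_t}^2 - (\sigma^2-\sigma_v^2)\big)$. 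This single substitution produces simultaneously the $\frac{\lambda}{L}\alpha_t\norm{\Delta_t}^2$ term, the additive constant $\frac{\alpha_t}{L}(\sigma^2-\sigma_v^2)$, and a $-\frac{\alpha_t}{L}\norm{f'(x_t)}^2$ contribution. Combining the latter with the $+\alpha_t^2\norm{f'(x_t)}^2$ already present leaves a net coefficient $-\alpha_t\left(\frac{1}{L}-\alpha_t\right)$ on $\norm{f'(x_t)}^2$, which is nonpositive \emph{precisely} when $\alpha_t\le 1/L$. Under that step-size restriction I can then invoke Lemma \ref{lem:utrongly_norm_grad}, $\norm{f'(x_t)}^2\ge \mu^2\norm{\Delta_t}^2$, to turn this into $-\mu^2\alpha_t\left(\frac{1}{L}-\alpha_t\right)\norm{\Delta_t}^2$. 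Collecting all $\norm{\Delta_t}^2$ coefficients yields exactly $q_t = 1-(\mu-\frac{\lambda}{L})\alpha_t - \mu^2\alpha_t(\frac1L-\alpha_t)$.

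The step I expect to require the most care is controlling the leftover variance term $\frac{\alpha_t^2}{m}\big(\EED\norm{g_t(x_t)}^2 - \norm{f'(x_t)}^2\big)$, since its $x_t$-dependence must be reduced to the single constant $\sigma_v^2/m$ in the statement; this is where the minibatch factor $1/m$ and the minimal-variance constant $\sigma_v^2=\min_x \EED\norm{g_t(x)-f'(x)}^2$ must be reconciled, and the argument has to stay consistent with the opposite use of $\sigma_v^2$ as a \emph{lower} bound inside Lemma \ref{lem:ES_qLsmooth}. Once the per-step recursion
\[
\EE\norm{x_{t+1}-x^*}^2\le q_t\,\EE\norm{\Delta_t}^2 + \frac{\alpha_t}{L}(\sigma^2-\sigma_v^2)+\frac{\alpha_t^2\sigma_v^2}{m}
\]
is established, the linear rate follows routinely: freezing $\alpha_t=\alpha\le 1/L$ makes $q_t\equiv q$ and the constant $C$ fixed; the hypothesis $\lambda\le L\mu$ forces $\mu-\frac{\lambda}{L}\ge 0$ together with $\frac1L-\alpha\ge 0$, hence $0\le q<1$; and unrolling the geometric recursion gives $\EE\norm{x_t-x^*}^2\le q^t\,\EE\norm{x_0-x^*}^2 + \frac{C}{1-q}$, where substituting $1-q=\alpha\big[(\mu-\frac{\lambda}{L})+\mu^2(\frac1L-\alpha)\big]$ simplifies $\frac{C}{1-q}$ to the stated steady-state bias.
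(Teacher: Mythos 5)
Your skeleton matches the paper's proof: expand $\norm{\Delta_{t+1}}^2$, use unbiasedness and Lemma \ref{lem:ED_avg}, keep the suboptimality gap coming from $\mu$-strong quasi-convexity rather than discarding it, substitute the lower bound of Lemma \ref{lem:ES_qLsmooth} for $f(x_t)-f(x^*)$, and convert the resulting net coefficient $-\alpha_t\bigl(\tfrac{1}{L}-\alpha_t\bigr)$ on $\norm{f'(x_t)}^2$ into $-\mu^2\alpha_t\bigl(\tfrac{1}{L}-\alpha_t\bigr)\norm{\Delta_t}^2$ via Lemma \ref{lem:utrongly_norm_grad} once $\alpha_t\le 1/L$; the geometric unrolling is also identical. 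However, the one step you defer --- reducing the leftover term $\frac{\alpha_t^2}{m}\bigl(\EED\norm{g_t(x_t)}^2-\norm{f'(x_t)}^2\bigr)$ to the constant $\frac{\alpha_t^2\sigma_v^2}{m}$ --- is a genuine gap, and the route you gesture at cannot work: by Lemma \ref{lem:g2andf2} that quantity equals $\EED\norm{g_t(x_t)-f'(x_t)}^2\ \ge\ \sigma_v^2$, i.e.\ $\sigma_v^2$ is by definition a \emph{lower} bound on the per-iterate variance, which under $L$-$\lambda$ smoothness can grow with $\norm{\Delta_t}$ and $f(x_t)-f(x^*)$ and therefore admits no uniform constant upper bound.

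The paper closes this step differently. It applies the $L$-$\lambda$ \emph{upper} bound to the entire $\frac{1}{m}\EED\norm{g_t(x_t)}^2$ term produced by Lemma \ref{lem:ED_avg}, contributing $\frac{2L\alpha_t^2}{m}\bigl(f(x_t)-f(x^*)\bigr)+\frac{\lambda\alpha_t^2}{m}\norm{\Delta_t}^2+\frac{\alpha_t^2\sigma^2}{m}$. The coefficient multiplying the suboptimality gap thus becomes $-2\alpha_t\bigl(1-\tfrac{\alpha_t L}{m}\bigr)$, which introduces an intermediate step-size condition $\alpha_t\le m/L$ (implied by your eventual $\alpha\le 1/L$) needed so that the sign is correct before substituting the lower bound of Lemma \ref{lem:ES_qLsmooth}. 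After that substitution the $\frac{\alpha_t^2\sigma^2}{m}$ contribution cancels against $-\frac{\alpha_t^2}{m}(\sigma^2-\sigma_v^2)$, and what survives is exactly $\frac{\alpha_t}{L}(\sigma^2-\sigma_v^2)+\frac{\alpha_t^2\sigma_v^2}{m}$; likewise the $\frac{\lambda\alpha_t^2}{m}$ and $\frac{\alpha_t\lambda}{L}\bigl(1-\tfrac{\alpha_t L}{m}\bigr)$ pieces combine to the clean $\frac{\lambda\alpha_t}{L}\norm{\Delta_t}^2$. In other words, the $\sigma_v^2/m$ constant in the statement is an artifact of this cancellation, not of bounding the variance at $x_t$ by its minimum. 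With that repair, the remainder of your argument coincides with the paper's proof.
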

\begin{proof}
Let $\Delta_t = x_t - x^*$.
We have $\Delta_{t+1}= \Delta_t -  \alpha_t \avg$. 
Taking the squared $\ell_2$ norm and the conditional expectation gives 
\begin{align*}
\EED  \norm{\Delta_{t+1}}^2  & = \EED(\Delta_t -  \alpha_t \avg)^\top (\Delta_t -  \alpha_t \avg\nonumber \\
&= \norm{\Delta_t}^2  - 2 \alpha_t \EED\left[ \avg^\top \Delta_t|x_t\right] + \alpha_t^2 \EED \norm{\avg}^2\nonumber\\
&= \norm{\Delta_t}^2  - 2 \alpha_t \EED\left[ \avg\right]^\top \EED\left[\Delta_t\right] + \alpha_t^2 \EED \norm{\avg}^2\nonumber\\
&=
\norm{\Delta_t}^2  - 2 \alpha_t  \nabla f(x_t)^\top \Delta_t + \alpha_t^2 \EED \norm{\avg}^2
\end{align*}
where the third line is because $\Delta_t$ is independent of $\avg$ given $x_t$. The last line is due to the expected form of $f$, which gives $\EE[ g_t(x)]=\nabla f(x)$ for any $x$. 

Taking expectation over $x_t$ gives
\begin{align*}
\EE \norm{\Delta_{t+1}}^2 
&=\EE\norm{\Delta_t}^2  - 2 \alpha_t  \EE\left[\nabla f(x_t)^\top \Delta_t\right] + \alpha_t^2\EE \EED \norm{\avg}^2\\
&\le 
\EE \norm{\Delta_t}^2  - 2 \alpha_t  \EE\left(f(x_t)-f(x^*)+\frac{\mu}{2}\norm{\Delta_t}^2\right)  + \alpha_t^2 \EE\EED \norm{\avg}^2
\end{align*}
where the inequality is by the $\mu$-strongly quasi-convexity of $f$.

We have
\begin{align*}
\EE \norm{\Delta_{t+1}}^2 &\le (1-\mu\alpha_t) \EE \norm{\Delta_t}^2  - 2 \alpha_t   \EE\left(f(x_t)-f(x^*)\right)  + \alpha_t^2 \EE\EED \norm{\avg}^2\\
&= (1-\mu\alpha_t) \EE \norm{\Delta_t}^2  - 2\alpha_t  \EE\left(f(x_t)-f(x^*)\right) \\
& \quad + \alpha_t^2 \left( \frac{1}{m}\EE\EED\norm{g_t(x)}^2+ \left(1-\frac{1}{m}\right) \EE\norm{f'(x_t)}^2 \right)\\
&\le (1-\mu\alpha_t) \EE \norm{\Delta_t}^2  - 2\alpha_t\EE  \left(f(x_t)-f(x^*)\right) \\
& \quad + \alpha_t^2 \left( \frac{1}{m}\left( 2L\EE(f(x_t)-f(x^*)) + \EE \lambda\norm{\Delta_t}^2 + \sigma^2 \right)+ \left(1-\frac{1}{m}\right) 
\EE\norm{f'(x_t)}^2 \right)\\
&= \left(1-\mu\alpha_t +\frac{\lambda \alpha_t^2}{m} \right) \EE \norm{\Delta_t}^2 - 2\alpha_t\left(1 - \frac{\alpha_tL}{m} \right)\EE(f(x_t)-f(x^*)) \\
& \quad + \alpha_t^2\left(1-\frac{1}{m}\right) \EE\norm{f'(x_t)}^2 + \frac{\alpha_t^2\sigma^2}{m}
\end{align*}
in which line 2 is by Lemma \ref{lem:ED_avg}, and line 3 is according to $L$-$\lambda$ smoothness.
The above holds for any $\alpha_t$. 
Then with $\alpha_t\le \frac{m}{L}$,
\begin{align*}
\EE \norm{\Delta_{t+1}}^2&\le \left(1-\mu\alpha_t+ \frac{\lambda \alpha_t^2}{m}  \right) \EE \norm{\Delta_t}^2 +
\alpha_t^2\left(1-\frac{1}{m}\right) \EE\norm{f'(x_t)}^2 + \frac{\alpha_t^2\sigma^2}{m} \\
& \quad  - 2\alpha_t\left(1 - \frac{\alpha_tL}{m} \right)\frac{1}{2L}\left( \EE \norm{f'(x_t)}^2 -\EE\lambda \norm{\Delta_t}^2 -(\sigma^2 - \sigma^2_v) \right)\\
&= \left(1-\mu\alpha_t+ \frac{\lambda \alpha_t}{L}\right) \EE \norm{\Delta_t}^2  - \alpha_t\left( \frac{1}{L} -\alpha_t\right)   \EE \norm{f'(x_t)}^2 +   \frac{\alpha_t }{L}(\sigma^2-\sigma^2_v) + \frac{\alpha_t^2\sigma^2_v}{m},  
\end{align*}
where line 1 is by Lemma \ref{lem:ES_qLsmooth}.
Furthermore, if $\alpha_t\le \frac{1}{L}$, we can use Lemma \ref{lem:utrongly_norm_grad} to get 
\begin{align*}
\EE \norm{\Delta_{t+1}}^2&\le 
\left(1-\mu\alpha_t+ \frac{\lambda \alpha_t}{L}\right) \EE \norm{\Delta_t}^2  - \alpha_t\left( \frac{1}{L} -\alpha_t\right)   \mu^2 \EE\norm{\Delta_t}^2 +   \frac{\alpha_t }{L}(\sigma^2-\sigma^2_v) + \frac{\alpha_t^2\sigma^2_v}{m} \\
&= 
\left(1-\left(\mu -\frac{\lambda}{L} \right)\alpha_t  - \mu^2 \alpha_t\left( \frac{1}{L} -\alpha_t\right)  \right) \EE \norm{\Delta_t}^2    +   \frac{\alpha_t }{L}(\sigma^2-\sigma^2_v) + \frac{\alpha_t^2\sigma^2_v}{m},
\end{align*}

In the constant step-size case, with the choice of $\alpha\le \frac{1}{L}$, a linear rate is guaranteed because
\begin{align*}
0\le q&\stackrel{def}{=}1-\left(\mu -\frac{\lambda}{L} \right)\alpha  - \mu^2 \alpha\left( \frac{1}{L} -\alpha\right) \le 1- \mu^2 \alpha\left( \frac{1}{L} -\alpha\right)\le 1,
\end{align*}
due to that $\lambda\le L\mu$.

We run the recursion repeatedly until the beginning and get
\begin{align*}
\EE\norm{\Delta_{t}}^2& \le q^t \EE\norm{\Delta_0}^2 +  \left( \frac{\alpha }{L}(\sigma^2-\sigma^2_v) + \frac{\alpha^2\sigma^2_v}{m} \right)  \sum_{s=0}^\infty q^s \\
&= q^t \EE\norm{\Delta_0}^2 + \alpha\frac{   m(\sigma^2-\sigma^2_v) + L\alpha\sigma^2_v }{L m\left[\mu - \frac{\lambda}{L}+\mu^2\left(\frac{1}{L}-\alpha\right)\right]}.
\end{align*} 
\end{proof}

This theorem extends Theorem 3.1 of \citep{gower2019sgd_general} in three ways. First, the SGD rate is established under $L$-$\lambda$ smoothness, which is weaker than expected smoothness. Second, the linear rate is tightened with a $\mu^2$ term even for $\lambda=0$. Third, the bias term is more refined in the numerator too, with the difference between $\sigma^2$ and the minimum variance.

In the extreme case of $\lambda=L\mu$, although it guarantees a linear rate, for problems where $\mu$ is small, the rate can be very slow.
The factor $\lambda$ can be understood as the amount of perturbation to the expected smoothness condition. In particular, the more perturbation, the slower the rate. If the perturbation reduces, e.g.,  
if $\lambda \le (1-\rho) L\mu$ where $\rho \in [0, 1]$, then a much faster rate can be achieved:
\[
q\le 1- \rho\mu\alpha -\mu^2 \alpha\left( \frac{1}{L} -\alpha\right). 
\]
Luckily, as we will show later, in our Impression GTD setting, $\lambda$ can be reduced by increasing the batch sizes.  





Now we are ready to prove the linear rate result of Impression GTD. This is achieved by applying the SGD rate in Theorem \ref{thm:sgd_rate}.
First we introduce a lemma to show that in the Impression GTD problem, the loss function (NEU) and the independence sampling is $L$-$\lambda$ smooth, which is required by Theorem \ref{thm:sgd_rate}. 

\begin{lem}\label{thm_item:Llambdasmoooth}
Let $\mu=\sigma^2_{\min}(A)$, i.e., the minimum singular values of $A$. Assume $\mu>0$. Let Assumption \ref{assumption:phir} hold. 
Let $\Sigma_A$ be the matrix of the standard deviations of the rank-1 sample matrix $\phi(\gamma \phi'-\phi)^\tr$. That is, $\Sigma_A(i,j)= \sqrt{Var(\phi(i)(\gamma \phi(j)-\phi(j)))}$. Let $\Sigma_b$ be the vector of the standard deviations of $\phi r$, i.e., $\Sigma_b(i)=\sqrt{Var(\phi(i)r)}$.\footnote{These are all properties of the considered MDP, feature functions, the behavior policy and the target policy.} 

Define the following constants due to NEU and the independence sampling, respectively:
\[
L_1 =4\left(\frac{\norm{\Sigma_A}^2}{m_1} +  \norm{A}^2\right), \quad \sigma^2 = 16\left(\frac{\norm{\Sigma_A}^2}{m_1} +  \norm{A}^2\right) \left(\frac{\norm{\Sigma_A}^2}{m_2} \norm{\theta^*}^2 + \frac{\norm{\Sigma_b}^2}{m_2}\right).
\]
and
\[
L_2 = \frac{\norm{\Sigma_A}^2}{m_2}, \quad \lambda = \frac{2\norm{\Sigma_A}^4}{m_1m_2}; 
\]
The NEU objective function and the independence sampling satisfy the $L$-$\lambda(\sigma^2)$ smoothness with $L=L_1+L_2$, $\lambda=\lambda$, and $\sigma^2=\sigma^2$.  
\end{lem}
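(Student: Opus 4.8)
The plan is to verify the inequality of Definition \ref{def:L-lambda-smooth} directly for the stochastic gradient of the generic GTD update, $g(\theta) = \tilde{A}_{m_1}^\tr(\tilde{A}_{m_2}\theta + \tilde{b}_{m_2})$, where $\tilde{A}_{m_1}$ is built from the $m_1$ samples of $B_1$ and $(\tilde{A}_{m_2},\tilde{b}_{m_2})$ from the $m_2$ samples of $B_2$. Two structural facts drive everything. First, by Lemma \ref{lem:independence} the $B_1$ batch is independent of the $B_2$ batch, so $\tilde{A}_{m_1}$ is independent of $(\tilde{A}_{m_2},\tilde{b}_{m_2})$. Second, since $A$ is non-singular, $\theta^*$ satisfies $A\theta^*+b=0$, so that $f(\theta)-f(\theta^*)=\norm{A\theta+b}^2=\norm{A(\theta-\theta^*)}^2$. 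I would then carry out a double bias--variance split: write $\tilde{A}_{m_1}=A+E_1$ with $\EE E_1=0$, and $\tilde{A}_{m_2}\theta+\tilde{b}_{m_2}=A(\theta-\theta^*)+w$, where $w=(\tilde{A}_{m_2}-A)\theta+(\tilde{b}_{m_2}-b)$ is exactly the \emph{centered} sampled TD update, $\EE w=0$.

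Substituting and expanding $\EED\norm{g(\theta)}^2$, every cross term vanishes because $E_1$ and $w$ are independent and both mean-zero, so each mixed expectation factors and isolates a $\EE E_1=0$ or $\EE w=0$. This leaves the clean identity
\[
\EED\norm{g(\theta)}^2 = \norm{A^\tr u}^2 + \EED\norm{E_1^\tr u}^2 + \EED\norm{A^\tr w}^2 + \EED\norm{E_1^\tr w}^2, \qquad u=A(\theta-\theta^*).
\]
The first two are the signal terms: using $\norm{A^\tr u}\le\norm{A}\norm{u}$ and $\EED\norm{E_1^\tr u}^2\le \EED\norm{E_1}_F^2\,\norm{u}^2 = \tfrac{\norm{\Sigma_A}^2}{m_1}\norm{u}^2$ (the entrywise variance of the rank-one sample $X=\phi(\gamma\phi'-\phi)^\tr$ is $\Sigma_A(i,j)^2$, and averaging $m_1$ i.i.d.\ copies divides it by $m_1$, which is exactly where Assumption \ref{assumption:phir} and the i.i.d.\ guarantee of Lemma \ref{lem:independence} enter), they sum to $\big(\norm{A}^2+\tfrac{\norm{\Sigma_A}^2}{m_1}\big)\norm{A(\theta-\theta^*)}^2 = \tfrac{L_1}{4}\,(f(\theta)-f(\theta^*))$, comfortably dominated by $2L(f(\theta)-f(\theta^*))$.

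The last two terms carry the variance $V_2:=\EED\norm{w}^2=\tfrac{1}{m_2}\mathrm{Var}(\delta\phi)$, the total variance of the sampled TD update $\delta\phi=X\theta+\phi r$. I would bound $\mathrm{Var}(\delta\phi)$ by splitting $\delta\phi$ around $\theta^*$ into a floor part (its variance at $\theta^*$) and a part proportional to $\theta-\theta^*$, then bounding each coordinate variance by Cauchy--Schwarz against the entrywise standard deviations $\Sigma_A,\Sigma_b$, giving $\mathrm{Var}(X(\theta-\theta^*))\le\norm{\Sigma_A}^2\norm{\theta-\theta^*}^2$ and a floor of order $\norm{\Sigma_A}^2\norm{\theta^*}^2+\norm{\Sigma_b}^2$. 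The decisive routing is then: $\EED\norm{E_1^\tr w}^2\le \tfrac{\norm{\Sigma_A}^2}{m_1}V_2$ contributes the pure variance--variance factor $\tfrac{\norm{\Sigma_A}^2}{m_1}\cdot\tfrac{\norm{\Sigma_A}^2}{m_2}$ on $\norm{\theta-\theta^*}^2$, which is precisely $\lambda=\tfrac{2\norm{\Sigma_A}^4}{m_1m_2}$; the floor pieces, multiplied by $\norm{A}^2+\tfrac{\norm{\Sigma_A}^2}{m_1}$ from the $\tilde{A}_{m_1}$ side, assemble into $\sigma^2=16\big(\tfrac{\norm{\Sigma_A}^2}{m_1}+\norm{A}^2\big)\big(\tfrac{\norm{\Sigma_A}^2}{m_2}\norm{\theta^*}^2+\tfrac{\norm{\Sigma_b}^2}{m_2}\big)$; and the $B_2$-only smoothness of $w$ is what the extra constant $L_2=\tfrac{\norm{\Sigma_A}^2}{m_2}$ in $L=L_1+L_2$ accounts for.

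I expect the main obstacle to be exactly this variance bookkeeping, specifically keeping the coefficient of $\norm{\theta-\theta^*}^2$ free of $\norm{A}^2$. The naive route---bounding $\EED\norm{g}^2\le\big(\norm{A}^2+\tfrac{\norm{\Sigma_A}^2}{m_1}\big)\EED\norm{\tilde{A}_{m_2}\theta+\tilde{b}_{m_2}}^2$ via the operator norm of $\EE[\tilde{A}_{m_1}\tilde{A}_{m_1}^\tr]$ and then expanding---multiplies the whole of $V_2$ (including its $\norm{\theta-\theta^*}^2$ content) by $\norm{A}^2$, producing a spurious $\norm{A}^2\tfrac{\norm{\Sigma_A}^2}{m_2}$ term in $\lambda$. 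The exact four-term split is what isolates the clean variance--variance cross term; care is then needed to charge the $\theta$-dependent part of $\EED\norm{A^\tr w}^2$ to the $2L(f-f^*)$ and $L_2$ budget rather than to $\lambda$, and to verify the entrywise variance identities (e.g.\ $\EED\norm{E_1}_F^2=\norm{\Sigma_A}^2/m_1$ and $\EED\norm{\tilde{b}_{m_2}-b}^2=\norm{\Sigma_b}^2/m_2$) using the independence of feature coordinates from Assumption \ref{assumption:phir}.
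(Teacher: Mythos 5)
Your decomposition is essentially the paper's: the paper also centers $\tilde{A}_{m_1}=A+\Delta^A_{m_1}$ and writes $\tilde{A}_{m_2}\theta+\tilde{b}_{m_2}=\Delta^A_{m_2}(\theta-\theta^*)+\Delta^A_{m_2}\theta^*+(A\theta+b)+\Delta^b_{m_2}$, and it accounts for variances entrywise through $\norm{\Sigma_A}^2/m$ exactly as you do; your exact four-term expansion (cross terms vanishing by independence and zero mean) is a cleaner start than the paper's two nested Jensen splits and would give smaller constants on the terms you do close. The genuine gap is the step you defer with ``care is then needed'': bounding the $\theta$-dependent part of $\EED\norm{A^\tr w}^2$, i.e.\ $\EED\norm{A^\tr\Delta^A_{m_2}(\theta-\theta^*)}^2$, by $2L_2\left(f(\theta)-f(\theta^*)\right)$ with $L_2=\norm{\Sigma_A}^2/m_2$. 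Every elementary route fails to produce that constant: the operator-norm bound gives $\norm{A}^2\,\EED\norm{\Delta^A_{m_2}(\theta-\theta^*)}^2\le\norm{A}^2\frac{\norm{\Sigma_A}^2}{m_2}\norm{\theta-\theta^*}^2$, which either re-inflates $\lambda$ by the very $\norm{A}^2\norm{\Sigma_A}^2/m_2$ term you set out to avoid, or, after converting $\norm{\theta-\theta^*}^2$ into $f-f^*$ via $\mu=\sigma^2_{\min}(A)$, inflates $L_2$ by the factor $\norm{A}^2/\mu\ge 1$. You correctly diagnose the obstacle but never supply the mechanism that removes it.

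It is worth knowing that the paper's own proof closes this step by asserting $\norm{Hx}=\norm{H^\tr x}$ for an arbitrary real matrix $H$, applied with $H=A^\tr\Delta^A_{m_2}$ so that $\norm{A^\tr\Delta^A_{m_2}x}$ becomes $\norm{(\Delta^A_{m_2})^\tr Ax}\le\norm{\Delta^A_{m_2}}\norm{Ax}$ --- which is exactly what keeps $\norm{A}$ out of $L_2$. That identity is inferred from the true scalar identity $x^\tr Hx=x^\tr H^\tr x$, but it does not follow from it: $\norm{Hx}^2=x^\tr H^\tr Hx$ while $\norm{H^\tr x}^2=x^\tr HH^\tr x$, and these differ unless $H$ is normal (for $H$ with a single nonzero entry $H_{12}=1$ and $x=e_1$ one has $Hx=0$ but $H^\tr x\neq 0$). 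So the step your plan leaves open is precisely the step the paper patches with an invalid identity; neither argument establishes the lemma with the stated $L_2$ and $\lambda$, and a correct proof along these lines would have to let $\norm{A}^2\norm{\Sigma_A}^2/m_2$ appear in $\lambda$, or a condition-number factor appear in $L_2$.
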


\begin{proof}

First we have $x^\tr H x = x^\tr H^\tr x$ holds even for a non-symmetric matrix $H$. This is because 
\begin{align*}
x^\tr H x  = \sum_{i}\sum_j H_{i,j}x_ix_j= \sum_{j}\sum_i H_{i,j}x_ix_j= \sum_{i}\sum_j H_{j,i}x_ix_j=x^\tr H^\tr x,
\end{align*}
where equality 2 is by switching the order of the two sums, and equality 3 is by swapping $i$ and $j$. Thus $\norm{Hx} = \norm{H^\tr x}$ holds for any real matrix $H$ and real vector $x$. This equality is crucial in the analysis below. 

We have
\begin{align*}
&\EED\norm{\tilde{A}_{m_1}^\tr (\tilde{A}_{m_2} \theta_t + \tilde{b}_{m_2})}^2 \\
&= \EED\norm{(\tilde{A}_{m_1}-A+A) ^\tr \left((\tilde{A}_{m_2}-A) \theta_t + {A} \theta_t + b + (\tilde{b}_{m_2}-b)\right)}^2\\
&= \EED\norm{(\Delta^A_{m_1} + A)^\tr \left(\Delta^A_{m_2}(\theta_t-\theta^*) + \Delta^A_{m_2}\theta^* + (A\theta_t +b) + \Delta^b_{m_2} \right)}^2\\
&\le 2\EED\norm{(\Delta^A_{m_1} + A)^\tr \Delta^A_{m_2}(\theta_t-\theta^*)}^2 + 2\EED \norm{(\Delta^A_{m_1} + A)^\tr}^2 \norm{ \Delta^A_{m_2}\theta^*  +  (A\theta_t +b) + \Delta^b_{m_2} )}^2
\end{align*}
where we define $\Delta^A_{m}= \tilde{A}_m -A$, and $\Delta^b_{m}= \tilde{b}_m -b$.
Let's first examine the second term in the above equation: 
\begin{align*}
&\EED\norm{(\Delta^A_{m_1} + A)}^2 \norm{ \Delta^A_{m_2}\theta^*  +  (A\theta_t +b) + \Delta^b_{m_2} )}^2\\
&=\EED\norm{(\Delta^A_{m_1} + A)}^2 \EED \norm{ \Delta^A_{m_2}\theta^*  +  (A\theta_t +b) + \Delta^b_{m_2} )}^2\\
&\le 2
\EED \norm{\Delta^A_{m_1} + A}^2 \left(\EED \norm{ \Delta^A_{m_2}\theta^* + \Delta^b_{m_2}}^2   + \norm{A\theta_t +b}^2 \right)\\
&\le 8 \left(\frac{\norm{\Sigma_A}^2}{m_1} +  \norm{A}^2\right) \left(2\EED\norm{\Delta^A_{m_2}}^2 \norm{\theta^*}^2 + 2\EED \norm{\Delta^b_{m_2}}^2     + f(\theta_t) \right)\\
&\le \underbrace{16\left(\frac{\norm{\Sigma_A}^2}{m_1} +  \norm{A}^2\right) \left(\frac{\norm{\Sigma_A}^2}{m_2} \norm{\theta^*}^2 + \frac{\norm{\Sigma_b}^2}{m_2}\right)}_\text{$\sigma^2$}     + \underbrace{8 \left(\frac{\norm{\Sigma_A}^2}{m_1} +  \norm{A}^2\right)}_\text{2$L_1$}\left(f(\theta_t) -f(\theta^*) \right)\\
& = \sigma^2 + 2L_1 \left(f(\theta_t)-f(\theta^*) \right), 
\end{align*}
where the equality is due to the independence sampling. The first inequality uses Jensen's inequality. The second inequality uses Jensen's inequality, $Var(\frac{1}{m_1}\sum_{i=1}^m X_i) = \frac{1}{m_1}Var(X)$, where $\{X_i\}$ are i.i.d. samples of the random variable $X$; and the triangle inequality. The third inequality uses the above variance relationship again. 

The first term is 
\begin{align*}
&\quad\EED\norm{(\Delta^A_{m_1} + A)^\tr \Delta^A_{m_2}(\theta_t-\theta^*)}^2\\
&\le 2\EED\norm{{\Delta^A_{m_1}}^\tr \Delta^A_{m_2}(\theta_t-\theta^*)}^2 + 2\EED \norm{A^\tr \Delta^A_{m_2}(\theta_t-\theta^*)}^2\\
&\le 2\EED\norm{{\Delta^A_{m_1}}}^2 \norm{ \Delta^A_{m_2}}^2 \norm{\theta_t-\theta^*}^2 + 2\EED \norm{ {\Delta^A_{m_2}}^\tr A(\theta_t-\theta^*)}^2\\
&\le 
2\EED\norm{{\Delta^A_{m_1}}}^2 \EED\norm{ \Delta^A_{m_2}}^2 \norm{\theta_t-\theta^*}^2 + 2\EED \norm{\Delta^A_{m_2}}^2\norm{A(\theta_t-\theta^*)}^2\\
&= 2\frac{\norm{\Sigma_A}^2}{m_1}\frac{\norm{\Sigma_A}^2}{m_2}\norm{\theta_t-\theta^*}^2  + 2\frac{\norm{\Sigma_A}^2}{m_2}\norm{A\theta_t+b}^2\\
&= \underbrace{2\frac{\norm{\Sigma_A}^4}{m_1m_2}}_\text{$\lambda$}\norm{\theta_t-\theta^*}^2  + 2\underbrace{\frac{\norm{\Sigma_A}^2}{m_2}}_\text{$L_2$}\left(f(\theta_t) -f(\theta^*)  \right)\\
&= \lambda \norm{\theta_t-\theta^*}^2  + 2L_2 \left(f(\theta_t) -f(\theta^*)\right). 
\end{align*}
The first inequality uses Jensen's inequality. The second inequality uses the triangle inequality, and $\norm{Hx}= \norm{H^\tr x}$ due to that $x^\tr H x = x^\tr H^\tr x$. The third inequality uses the independence sampling and the triangle inequality. The first equality uses the variance equality that was used in proving the second term, and $b= -A\theta^*$. 

Therefore, $\EED\norm{\tilde{A}_{m_1}^\tr (\tilde{A}_{m_2} \theta_t + \tilde{b}_{m_2})}^2 \le 2(L_1+L_2) \left(f(\theta_t) -f(\theta^*)\right) + \lambda \norm{\theta_t-\theta^*}^2  +\sigma^2$. This proves that the NEU objective function and the independence sampling satisfy the $L$-$\lambda(\sigma^2)$ smoothness with the specified constants. 

\end{proof}

The following theorem is shows that, for Impression GTD, the linear rate can be obtained by large batch sizes, and we show a sufficient choice is $m_1=m_2\ge \lceil \frac{1}{\sqrt{2\mu}}\frac{\norm{\Sigma_A}^2}{\norm{A}^2} \rceil$. 
For Expected GTD, linear rate can be achieved after a key metric, $\frac{\norm{\Sigma_A}^2}{t^2}$ gets small for Expected GTD. For A$^\tr$TD and \oneexptd, the rate becomes linear once $\frac{\norm{\Sigma_A}^2}{t}$ gets small. This can be understood as that after we have a big enough number of samples that is proportional to the variance of the problem (or simply put, our buffers are {\em representative} of the true data distribution in the sense of the variances), the algorithms converge fast. The results also show that Expected GTD is faster than \oneexptd and A$^\tr$TD, \oneexptd is faster than A$^\tr$TD. 

\begin{thm}[Conv. Rates of Impression GTD, Expected GTD, A$^\tr$TD, \oneexptd]\label{thm:rates_all}  

We have the following convergence rate results. 

\begin{enumerate}

\item \label{thm_item:imGTD}
Impression GTD (\ref{eq:imgtd}). 
With batch sizes $m_1=m_2 \ge \lceil \frac{1}{\sqrt{2\mu}}\frac{\norm{\Sigma_A}^2}{\norm{A}} \rceil =m$,\footnote{The $1/\norm{A}$ can be roughly interpretted as the condition number of NEU. Thus this shows that the batch sizes should increase with the condition number of NEU and the variances of the feature transitions. The constant $\frac{1}{\sqrt{\mu}}$ is a good sign because it is much smaller than $1/\mu$, if $\mu$ is very small. }
the algorithm converges linearly and
the rate is given by Theorem \ref{thm:sgd_rate} by using a step-size
\[
\alpha \le \frac{1}{5\frac{\norm{\Sigma_A}^2}{m} +  4\norm{A}^2}.
\]  

\item \label{thm_item:expGTD}
Expected GTD (\ref{eq:expectedGTD}). 
There exists $t_0$, such that $t>t_0$, we have
$\frac{2\norm{\Sigma_A}^2}{t}\le \epsilon$, and with $\alpha \le \frac{1}{4\norm{A}^2}$, 
\begin{align*}
\EE \norm{x_{t+1}-x^*}^2  &\le   q \EE\norm{x_{t}-x^*}^2  + \frac{\alpha }{4\norm{A}^2}(\sigma^2-\sigma^2_v) + \frac{4\alpha^2\sigma^2_v}{t^2},
\end{align*}
where 
\[
q= 1- \mu \alpha - \mu^2  \alpha\left(\frac{1}{4\norm{A}^2+5\epsilon} -\alpha \right)  + \frac{2\alpha}{4\norm{A}^2}\epsilon^2.   
\]

\item \label{thm_item:attd} 
A$^\tr$TD (\ref{eq:attd}). For $t>t_0$ such that
$\frac{\norm{\Sigma_A}^2}{t}\le \epsilon$, with $\alpha\le \frac{1}{\max\{ 4\norm{A}^2, \norm{\Sigma_A}^2\}}$, we have
\begin{align*}
\EE \norm{x_{t+1}-x^*}^2  &\le   q \EE\norm{x_{t}-x^*}^2  + \frac{\alpha }{\max\{ 4\norm{A}^2, \norm{\Sigma_A}^2\}}(\sigma^2-\sigma^2_v) + \frac{\alpha^2\sigma^2_v}{t},
\end{align*}
where 
\[
q=  1- \mu \alpha - \mu^2  \alpha\left(\frac{1}{4\norm{A}^2 + {\norm{\Sigma_A}^2} + 4\epsilon} -\alpha \right)  + \frac{\norm{\Sigma_A}^2}{\max\{ 4\norm{A}^2, \norm{\Sigma_A}^2\}}\alpha\epsilon.  
\]

\item \label{thm_item:r1etd} 
\oneexptd (\ref{eq:R1-GTD}).\footnote{It is straightforward to extend this result to the shrinked R1-GTD algorithm that is discussed in Section \ref{sec:minibatchPE}. The result remains the same by just replacing $t$ with $m_2$ and requiring $m_2$ to be sufficiently large. Similarly, this can be done for a shrinked version of A$^\tr$TD. }
After $t>t_0$ such that
$\frac{\norm{\Sigma_A}^2}{t}\le \epsilon$, the algorithm converges linearly with $\alpha \le \frac{1}{4\left({\norm{\Sigma_A}^2} +  \norm{A}^2\right)}$: 
\begin{align*}
\EE \norm{x_{t+1}-x^*}^2  &\le   q \EE\norm{x_{t}-x^*}^2  + \frac{\alpha }{4\left({\norm{\Sigma_A}^2} +  \norm{A}^2\right)}(\sigma^2-\sigma^2_v) + \frac{\alpha^2\sigma^2_v}{t},
\end{align*}
where 
\[
q= 1- \mu \alpha - \mu^2  \alpha\left(\frac{1}{4\left({\norm{\Sigma_A}^2} +  \norm{A}^2\right) + \epsilon} -\alpha \right)  + \frac{\norm{\Sigma_A}^2}{4\left({\norm{\Sigma_A}^2} +  \norm{A}^2\right)}\alpha\epsilon.   
\]

\end{enumerate}

\end{thm}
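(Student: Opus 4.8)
My plan is to obtain all four rates as instances of the SGD bound in Theorem~\ref{thm:sgd_rate}, supplying it the $L$-$\lambda$ smoothness constants of Lemma~\ref{thm_item:Llambdasmoooth} evaluated at the batch sizes $(m_1,m_2)$ that define each algorithm in the generic update~\ref{alg:generic_gtd}. Theorem~\ref{thm:sgd_rate} has two hypotheses: $L$-$\lambda(\sigma^2)$ smoothness, which Lemma~\ref{thm_item:Llambdasmoooth} already grants for NEU under independence sampling, and $\mu$-strong quasi-convexity of $f$. So the one preliminary fact I need to record is that $f(\theta)=\norm{A\theta+b}^2$ is $\mu$-strongly quasi-convex with $\mu=\sigma_{\min}^2(A)$. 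This is immediate: since $A\theta^*=-b$ we have $A\theta+b=A(\theta-\theta^*)$, hence $\nabla f(\theta)^\tr(\theta-\theta^*)=2(A\theta+b)^\tr A(\theta-\theta^*)=2f(\theta)$, while $f(\theta^*)=0$ and $f(\theta)=\norm{A(\theta-\theta^*)}^2\ge \sigma_{\min}^2(A)\norm{\theta-\theta^*}^2$; combining these yields $\nabla f(\theta)^\tr(\theta-\theta^*)\ge f(\theta)-f(\theta^*)+\tfrac{\mu}{2}\norm{\theta-\theta^*}^2$, which is exactly the inequality invoked inside Theorem~\ref{thm:sgd_rate}.

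With this in hand, each part is mechanical: substitute the algorithm's $(m_1,m_2)$ into $L=L_1+L_2$, $\lambda$, $\sigma^2$; verify the linear-rate condition $\lambda\le L\mu$; and read off $q$ together with the bias floor $\tfrac{\alpha}{L}(\sigma^2-\sigma_v^2)+\tfrac{\alpha^2\sigma_v^2}{m_1m_2}$, recalling from the proof of Theorem~\ref{thm:rates_1_over_t} that the effective SGD batch size here is $m_1m_2$. For Impression GTD I set $m_1=m_2=m$, which gives $L=5\norm{\Sigma_A}^2/m+4\norm{A}^2$ and $\lambda=2\norm{\Sigma_A}^4/m^2$, so $\alpha\le 1/L$ reproduces the stated step-size exactly. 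For the condition $\lambda\le L\mu$ I discard the positive $\norm{\Sigma_A}^2/m$ part of $L$ and demand the stronger $2\norm{\Sigma_A}^4/m^2\le 4\mu\norm{A}^2$; this rearranges to $m\ge \norm{\Sigma_A}^2/(\sqrt{2\mu}\,\norm{A})$, precisely the batch-size threshold in the statement, and Theorem~\ref{thm:sgd_rate} then delivers the linear rate.

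For Expected GTD, A$^\tr$TD and R1-GTD I substitute $(t/2,t/2)$, $(t,1)$ and $(1,t)$ respectively. In each case the $t$-free part of $L$ is $4\norm{A}^2$, $4\norm{A}^2+\norm{\Sigma_A}^2$, and $4(\norm{\Sigma_A}^2+\norm{A}^2)$, the remaining part is a constant multiple of $\norm{\Sigma_A}^2/t$, and $\lambda$ is a multiple of $\norm{\Sigma_A}^4/t$. I therefore fix $\epsilon$, choose $t_0$ so that $\norm{\Sigma_A}^2/t$ (resp.\ $2\norm{\Sigma_A}^2/t$) stays below $\epsilon$ for $t>t_0$, and plug the exact $L,\lambda$ into the generic $q=1-(\mu-\lambda/L)\alpha-\mu^2\alpha(1/L-\alpha)$, bounding $\lambda/L$ and $1/L$ by the displayed $\epsilon$-expressions to recover each listed $q$. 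The effective batch size $m_1m_2$ equals $t^2/4$, $t$, $t$, which reproduces the floors $4\alpha^2\sigma_v^2/t^2$, $\alpha^2\sigma_v^2/t$, $\alpha^2\sigma_v^2/t$; and because for R1-GTD the large-buffer index is $m_2=t$ while the $\sigma^2$ constant carries an overall $1/m_2$ factor, its neighborhood shrinks with $t$ whereas A$^\tr$TD's stays bounded away from zero, which is the sense in which R1-GTD beats A$^\tr$TD.

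The step I expect to need the most care is reconciling the clean, $t$-free step-size conditions stated in parts~2--4 (for example $\alpha\le 1/(4\norm{A}^2)$) with the requirement $\alpha\le 1/L$ that Theorem~\ref{thm:sgd_rate} actually imposes: since $L$ exceeds its limiting value by an $O(1/t)$ amount, $1/L$ sits just below the displayed bound, so these statements are genuinely asymptotic and the small gap must be absorbed into the $\epsilon$-slack of $q$. Concretely I would argue that for $t>t_0$ the contraction factor $q$ stays in $[0,1]$ — which is exactly where $\lambda\le L\mu$ is used — and that replacing $1/L$ by the smaller $\epsilon$-expression only enlarges $q$, so the bound remains a valid upper bound throughout the range; verifying monotonicity in $t$ of the quantities involved is what turns this from a merely asymptotic claim into a rigorous one.
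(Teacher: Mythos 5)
Your proposal follows essentially the same route as the paper's own proof: establish $\mu=\sigma_{\min}^2(A)$ for the strong quasi-convexity hypothesis, then instantiate Theorem~\ref{thm:sgd_rate} with the $L$-$\lambda(\sigma^2)$ constants of Lemma~\ref{thm_item:Llambdasmoooth} at each algorithm's $(m_1,m_2)$, verify $\lambda\le L\mu$ (your sufficient condition $2\norm{\Sigma_A}^4/m^2\le 4\mu\norm{A}^2$ yields exactly the paper's batch-size threshold), and bound $\lambda/L$ and $1/L$ by the $\epsilon$-expressions to read off each $q$ and bias floor. The one point you flag — that the clean step-size conditions in parts 2--4 are slightly larger than the $1/L$ that Theorem~\ref{thm:sgd_rate} strictly requires — is a genuine looseness that the paper itself glosses over, and your plan to absorb it into the $\epsilon$-slack is if anything more careful than the original.
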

\begin{proof}
\ref{thm_item:Llambdasmoooth}.

\ref{thm_item:imGTD}. Impression GTD.  
Consider $m_1=m_2=m$.  
With 
\[
m \ge \left\lceil \frac{1}{\sqrt{2\mu}}\frac{\norm{\Sigma_A}^2}{\norm{A}} \right\rceil, 
\]
we have 
\[
\frac{4\norm{A}^2}{\norm{\Sigma_A}^2}m^2  + 5m -\frac{2\norm{\Sigma_A}^2}{\mu}> 0 
\]
This gives
\begin{align*}
 \frac{2\norm{\Sigma_A}^4}{\mu} &< 4\norm{A}^2m^2  + 5{\norm{\Sigma_A}^2}m \\
 &= 4\left(\norm{A}^2 + \frac{\norm{\Sigma_A}^2}{m} \right)m^2+\frac{\norm{\Sigma_A}^2}{m}m^2 \\
&=
{(L_1 + L_2)m^2}, 
\end{align*}
or equivalently, $\lambda < (L_1+L_2)\mu=L\mu$. Thus Theorem \ref{thm:sgd_rate} is applicable. The step-size condition can be derived by requiring that $\alpha \le \frac{1}{L}$. 

Next let's get the $\mu$ constant in the context of Impression GTD. 
Because $f'(\theta) = A^\tr (A\theta + b)$, 
we have  
\begin{align*}
 (x-y)^\tr (f'(x) -f'(y)) &= (x-y)^\tr  A^\tr A (x-y)\ge \sigma_{\min}^2(A)\norm{(x-y)}^2. 
\end{align*}
Thus $\mu = \sigma_{\min}^2(A)$. Thus we can apply Theorem \ref{thm:sgd_rate} and completes the proof for Impression GTD. 

\ref{thm_item:expGTD}. Expected GTD.  
For a sufficiently large $t>t_0$, we have
$\frac{2\norm{\Sigma_A}^2}{t}\le \epsilon$.
Note that 
\[
4\norm{A}^2\le L_1<L=L_1+L_2\le L_1 + \epsilon\le  4\norm{A}^2 + 5\epsilon,
\] 
which gives
\begin{align*}
\frac{\lambda}{L} &\le \frac{8\norm{\Sigma_A}^4}{4\norm{A}^2t^2} \le \frac{\epsilon^2}{2\norm{A}^2}; \quad -\frac{1}{L} \le - \frac{1}{4\norm{A}^2+5\epsilon}.
\end{align*}
According to Theorem \ref{thm:sgd_rate}, with $\alpha \le \frac{1}{4\norm{A}^2}$, 
\begin{align*}
\EE \norm{x_{t+1}-x^*}^2  &\le   q \EE\norm{\Delta_t}^2  + \frac{\alpha_t }{L}(\sigma^2-\sigma^2_v) + \frac{4\alpha_t^2\sigma^2_v}{t^2},
\end{align*}
where for the linear rate we have
\begin{align*}
q&=1-\left(\mu -\frac{\lambda}{L} \right)\alpha  - \mu^2 \alpha\left( \frac{1}{L} -\alpha\right) \\
&\le 1- \mu \alpha - \mu^2  \alpha\left(\frac{1}{4\norm{A}^2+5\epsilon} -\alpha \right)  + \frac{\alpha}{2\norm{A}^2}\epsilon^2.   
\end{align*}

\ref{thm_item:attd}. A$^\tr$TD. $m_1=t$ and $m_2=1$.
Note that $L_1$ still has a diminishing term but $L_2$ itself does not any more: 
\[
\quad L_1 = 4\left(\frac{\norm{\Sigma_A}^2}{t} +  \norm{A}^2\right), \quad 
L_2 = {\norm{\Sigma_A}^2}, \quad \lambda = \frac{2\norm{\Sigma_A}^4}{t},
\]
For $t>t_0$ such that
$\frac{\norm{\Sigma_A}^2}{t}\le \epsilon$, we have
\[
\max\{ 4\norm{A}^2, \norm{\Sigma_A}^2\}<L=L_1+L_2 \le  4\norm{A}^2 + {\norm{\Sigma_A}^2} + 4\epsilon.
\] 
Thus 
\begin{align*}
\frac{\lambda}{L} &< \frac{2\norm{\Sigma_A}^4}{\max\{ 4\norm{A}^2, \norm{\Sigma_A}^2\}t} \le \frac{2\norm{\Sigma_A}^2}{\max\{ 4\norm{A}^2, \norm{\Sigma_A}^2\}}\epsilon; \quad -\frac{1}{L} \le - \frac{1}{4\norm{A}^2 + {\norm{\Sigma_A}^2} + 4\epsilon}.
\end{align*}
Therefore, with $\alpha\le \frac{1}{\max\{ 4\norm{A}^2, \norm{\Sigma_A}^2\}}$,  the linear rate for A$^\tr$TD satisfies
\[
q= 1- \mu \alpha - \mu^2  \alpha\left(\frac{1}{4\norm{A}^2 + {\norm{\Sigma_A}^2} + 4\epsilon} -\alpha \right)  + \frac{2\norm{\Sigma_A}^2}{\max\{ 4\norm{A}^2, \norm{\Sigma_A}^2\}}\alpha\epsilon.   
\]
The bias in the rate can be bounded according to the lower bound of $L$. 

\ref{thm_item:r1etd}. \oneexptd. $m_1=1$ and $m_2=t$. The constants are now 
\[
L_1= 4\left({\norm{\Sigma_A}^2} +  \norm{A}^2\right), \quad L_2 =\frac{\norm{\Sigma_A}^2}{t}; \quad \lambda =  \frac{2\norm{\Sigma_A}^4}{t}.
\]
and the lower and upper bounds of $L$ are thus
\[
4\left({\norm{\Sigma_A}^2} +  \norm{A}^2\right)\le L_1<L=L_1+L_2\le L_1 + \epsilon\le  4\left({\norm{\Sigma_A}^2} +  \norm{A}^2\right) + \epsilon,
\] 
which gives
\begin{align*}
\frac{\lambda}{L} &< \frac{2\norm{\Sigma_A}^4}{4\left({\norm{\Sigma_A}^2} +  \norm{A}^2\right)t} \le \frac{\norm{\Sigma_A}^2}{2\left({\norm{\Sigma_A}^2} +  \norm{A}^2\right)}\epsilon; \quad -\frac{1}{L} \le - \frac{1}{4\left({\norm{\Sigma_A}^2} +  \norm{A}^2\right) + \epsilon}.
\end{align*}
With $\alpha \le \frac{1}{4\left({\norm{\Sigma_A}^2} +  \norm{A}^2\right)}$, the linear rate of \oneexptd  is thus
\[
q= 1- \mu \alpha - \mu^2  \alpha\left(\frac{1}{4\left({\norm{\Sigma_A}^2} +  \norm{A}^2\right) + \epsilon} -\alpha \right)  + \frac{\norm{\Sigma_A}^2}{2\left({\norm{\Sigma_A}^2} +  \norm{A}^2\right)}\alpha\epsilon.   
\]
\end{proof}
Comparing Theorem \ref{thm:rates_all} and Theorem \ref{thm:rates_1_over_t}, we can see that in Theorem \ref{thm:rates_1_over_t}, the step-size is much smaller, because in practice $L_{\max}$ can be very large. In that case, we are guaranteed to converge to the optimal solution, but with a slower rate. By using a much larger step-size in Theorem \ref{thm:rates_all}, we are only guaranteed to converge to a neighborhood of the optimal solution, however, with a much faster, linear rate.

Theorem \ref{thm:rates_all}.\ref{thm_item:imGTD} shows that the step-size of Impression GTD for the fastest convergence depends on three factors: the variance in the transition, the $\ell_2$ norm of $A$ and the batch size. In particular, the higher is the variance of the transition or the bigger is $\norm{A}$, the smaller the step-size we need to use for Impression GTD. A bigger batch size enables a larger step-size and faster convergence. The side effect of a larger step-size, though, is that the bias term in the convergence rate increases, which means the final convergence point may be located in a larger neighborhood of the optimal solution.    

\begin{table}
\centering
\begin{tabular}{l|l| l| l| l|l} 
 \hline
   & Batch size & $L_1-  4\norm{A}^2$ & $L_2$ & $\lambda$ & Bias\\
 \hline
 Im.GTD &$m^2$ & ${4\norm{\Sigma_A}^2}/{m} $ & ${\norm{\Sigma_A}^2}/{m}$ & ${2\norm{\Sigma_A}^4}/{m^2}$ & ${\alpha^2\sigma^2_v}/{m}$ \\ 
 Expected GTD & $t^2/4$ & ${8\norm{\Sigma_A}^2}/{t} $ & ${2\norm{\Sigma_A}^2}/{t}$ & ${8\norm{\Sigma_A}^4}/{t^2}$ & ${4\alpha^2\sigma^2_v}/{t^2}$\\
 A$^\tr$TD & $m_1=t, m_2=1$ & ${4\norm{\Sigma_A}^2}/{t}$ & $\norm{\Sigma_A}^2$ & ${2\norm{\Sigma_A}^4}/{t}$  & ${\alpha^2\sigma^2_v}/{t}$ \\
 R1-GTD& $m_1=1, m_2=t$ & $4{\norm{\Sigma_A}^2} $ & ${\norm{\Sigma_A}^2}/{t}$ & ${2\norm{\Sigma_A}^4}/{t}$& ${\alpha^2\sigma^2_v}/{t}$\\ 
 \hline
\end{tabular}
\caption{GTD Algorithm factors. For Impression GTD, we consider $m_1=m_2=m$. The first column is the effective batch size. }
\label{table:alg_L_mu_lambda}
\end{table}

The constants are summarized in Table \ref{table:alg_L_mu_lambda} for comparison.
Let's take a look at the smoothness constants of A$^\tr$TD and \oneexptd. 
For A$^\tr$TD, 
\[
\quad L_1 = 4\left(\frac{\norm{\Sigma_A}^2}{t} +  \norm{A}^2\right), \quad 
L_2 = {\norm{\Sigma_A}^2}, \quad \lambda = \frac{2\norm{\Sigma_A}^4}{t},
\]
For \oneexptd,
\[
L_1= 4\left({\norm{\Sigma_A}^2} +  \norm{A}^2\right), \quad L_2 =\frac{\norm{\Sigma_A}^2}{t}; \quad \lambda =  \frac{2\norm{\Sigma_A}^4}{t}.
\]
Clearly, the two algorithms have the same $\lambda$. In one extreme (A$^\tr$TD), $L_1$ is small but $L_2$ is large. In the other extreme (\oneexptd), $L_1$ is big but $L_2$ is small and in fact diminishing. Thus Impression GTD can be viewed as a balance between the two algorithms in $L_1$ and $L_2$. Its complexity is much lighter than the two algorithms, but it is still linear in the number of features. Though still higher than GTD, the order is the same, both in $O(d)$. The storage of Impression GTD is much higher due to the buffers. However, memory is not usually not a concern in modern computers, with a wide application in deep learning and deep reinforcement learning. 
After a sufficiently large number of learning steps, Impression GTD converges slower than A$^\tr$TD and \oneexptd, but the rate is still a linear rate, which is much faster than GTD, GTD2 and TDC. Our result also shows that A$^\tr$TD is slower than R1-GTD and with a larger bias term.     

Comparing Expected GTD, A$^\tr$TD and \oneexptd, we can see that there is wait time for the algorithms to be converge linearly. In particular, the wait time is proportional to $\lambda/L$, i.e., the ratio of the perturbation to the expected smoothness. For Expected GTD, this perturbation in the rate $q$ is $\frac{\norm{\Sigma_A}^4}{4\norm{A}^2t^2}$.\footnote{
The constant of the perturbation is also interesting. In particular, this ratio shows that the condition number of NEU and the variances in the feature transitions all contribute to the perturbation.} For the latter two algorithms, the perturbations are  
\[
\mbox{A$^\tr$TD: }
\frac{2\norm{\Sigma_A}^4}{\max\{ 4\norm{A}^2, \norm{\Sigma_A}^2\}t}; \quad \mbox{\oneexptd: } \frac{2\norm{\Sigma_A}^4}{4\left({\norm{\Sigma_A}^2} +  \norm{A}^2\right)t}. 
\]
We can see that the perturbation is diminishing in time. For the case of Expected GTD, the diminishing rate is very fast, which is $O(1/t^2)$. 
Thus the wait time for Expected GTD to converge linearly is much shorter than A$^\tr$TD and \oneexptd.\footnote{One can show that the wait time of Expected GTD is $1/\sqrt{\epsilon}$ for achieving a bias proportional to $\epsilon$. In the theorem, we let the algorithm wait the same amount of time as A$^\tr$TD and \oneexptd, for which case, the bias of Expected GTD is $O(\epsilon^2)$. The two presentation forms are equivalent.} A$^\tr$TD and \oneexptd have similar perturbation, both in the order of $O(1/t)$. At a constant scale, the perturbation in R1-GTD is smaller, and thus it waits shorter than A$^\tr$TD for the linear rate to arrive. 
The results also show that there is no guarantee that the three GTD algorithms (and Impression GTD as well) would converge fast before a sufficiently large number of samples in the buffers. This can be understood as that we need a sufficient amount of statistics built in our buffers and it takes time to grow it.  

Note that if we use mini-batch versions for GTD, GTD2 and TDC, their convergence rate may be expected to converge faster as well. Algorithm 1 by \citet{xu2021sample_twotimescale} shows how such update can be done for TDC. They showed that this mini-batch TDC also converges at a linear rate. The rate was established by requiring that the two step-sizes are smaller than some upper bound number. The number for $\alpha$ is fairly complex, containing quite a few terms from the minimum eigenvalues of $A^\tr C^{-1}A$ and $C$, the maximum importance sampling ratio, the ergodicity factor of the underlying Markov chain and $\beta$, the other step-size as well. On one hand, their result and ours show that mini-batch training is indeed a very useful tool for accelerating stochastic approximation, and effective for both single-time scale and two-time scale algorithms. However, on the other hand, regardless of that the mini-batch TDC algorithm also has two step-sizes, which is hard to use in practice just like TDC, the rate they proved is a fairly slow one even though it is linear. To be concrete, in their Theorem 1, let $\lambda_1 = \lambda_{\min} (A^\tr C^{-1}A)$ and $\lambda_2 = \lambda_{\min}(C)$. The $\lambda_1$ and $\lambda_2$ factors correspond to $\mu$, the strong convexity factor for solving the underlying O.D.Es of the main iterator and helper iterator, respectively. Also let $\rho_{\max} = \max_{s, a} \frac{\pi(a|s)}{\pi_b(a|s)}$, the maximum importance sampling ratio across all state-action pairs. The condition for the theorem requires that $\alpha$ should be at least as small as the minimum of $\frac{\lambda_1\lambda_2}{12}$ and $\frac{\lambda_1\lambda_2^2}{256 \rho_{\max}^2} $. Both numbers are extremely small because in practice the minimum eigenvalues are usually small. The factor $\rho_{\max}$ is very large in off-policy learning, and scales like $1000$  or even much higher aren't uncommon. 

In contrast, our result does not depend on $\rho_{\max}$ (at least not explicitly, it may still play a role in the conditioning number of $A$). In addition to the condition number of $A$, the (single) step-size in our result depends on the ratio between the variances in the feature transitions and the batch size(s), which means we can increase the step-size for larger batch sizes and also for problems in which feature transition variances are small.         

\footnote{This paragraph is due to a discussion with Csaba Szepesvari.} In literature, there is a result of $O(1/t)$ rate established for linear stochastic approximation algorithms, which also holds for a variant of GTD \citep*{csaba_lin_stochastic18}. The technique they used is iterate averaging \citep{polyak1992acceleration}, which iteratively averages the weight vector over all historical steps. Later, by adapting the step-size or using constant step-sizes that require prior knowledge of certain problem-dependent data structures, the same rate is also established for TDC with iterative averaging (over both the main and the helper iterators) \citep*{csaba_gtd_22}. This $O(1/t)$ is known to be information-theoretically near-optimal, e.g., see \citep{1_over_t_information_optiaml}. Thus it appears that our linear rate is contradictory to this well-known result. The catch is that our result has a bias because of the use of constant step-sizes. Although the results by \citeauthor{csaba_gtd_22} also contain the case of a special constant step-size, the averaging on the top of iterations provides a similar effect to the diminishing step-size, which enables their solution to converge to the true solution without a bias. To have a closer look of why our result is not contradictory, take the main result of \citeauthor{1_over_t_information_optiaml} (their Theorem 1) for example. The result states that, for any algorithm that comes up with a solution $\hat{x}$, there exists a data distribution (underlying the expectation operator in $f$) such that 
\begin{equation}\label{eq:worst_rate_1_over_t}
f(\hat{x}) - f(x^*) \ge c\, \min \left\{Y^2, \frac{B^2+dY^2}{t}, \frac{BY}{\sqrt{t}}
\right\}
\end{equation}
holds, 
where $B$ and $Y$ are some constants, $B\ge 2Y$, and $c$ is some positive constant. Now if $t$ is sufficiently large, the $O(1/t)$ term is the minimum of the three. Thus the result quantifies {\em the worst convergence rate to the optimal solution}. Precisely, the distance from any algorithmic solution to the optimal solution (in terms of the loss) cannot be anywhere closer than $O(1/t)$ for certain data distributions. For our result in Theorem \ref{thm:sgd_rate}, when $t$ is sufficiently large, for a constant step-size $\alpha\le \frac{1}{L}$, the first term becomes negligible, and we are left with the bias term, which is usually bigger than zero. Thus our theorem states that SGD converges to a {\em neighborhood} of the optimal solution $x^*$ {\em linearly fast}, but caution that it does not necessarily converge to $x^*$ linearly fast. The $O(1/t)$ rate to $x^*$ still applies to SGD and Impression GTD with diminishing step-size or iterate averaging. Note that the $O(1/t)$ information-theoretically near-optimal rate is the worst case, and it is realized on certain data distributions. For example, the proof of Theorem 1 in \citep{1_over_t_information_optiaml} is constructed by using an example in which the data distributions depend on the sample size. In practice, we are usually not that unlucky that our data distributions are screwed like such, and we may often get a faster rate than $O(1/t)$ when using SGD with mini-batch update. 

There is a special case that SGD will converge to the optimal solution $x^*$ linearly fast, no longer to just a neighbourhood of $x^*$. This corresponds to $B=Y=0$ in equation \ref{eq:worst_rate_1_over_t}. In this case, this bound is only an obvious fact instead of a rate. 
Our bound such as Theorem \ref{thm:sgd_rate} correctly covers a subclass of this case, with $L$-$\lambda$ smoothness for the loss and the sampling, the convergence of SGD is linear, with a zero bias. 
Interestingly, in Baird counterexample (\ref{exp:baird}), we actually see this linear rate to the optimal solution in experiments, because the bias term there is zero due to that $B=Y=0$. 

\section{Experiments}\label{sec:experiments}
This section contains empirical results of Impression GTD, for on-policy learning on Boyan chain, and off-policy learning on Random Walks (with tabular representation). Experiments on the inverted- and the dependent- representation for Random Walks are in Appendix \ref{exp:rwinv} and Appendix \ref{exp:rwdep}, respectively. Baird counterexample is in Appendix \ref{exp:baird}. All  the curves reported were averaged over 100 independent runs.

\subsection{Boyan Chain}
The problem is the same as \citep{lstd}. It has 13 states and the rewards are all -3.0 except that the transition from state 1 to state 0 incurs a reward of -2.0. The features are generated by a linear interpolation from four unit basis vectors at states $4i$, $i=0, 1, 2, 3$. Each episode starts from state $12$, and from state $i$ it goes to either $i+1$ or $i+1$, with an equal probability of 0.5. The features can represent the value function for this policy accurately. 

The compared algorithms include GTD, HTD, Vtrace, GTD2, TDC, TDRC, Impression GTD and mini-batch TD. 
At time step $t$, an algorithm gives $\theta_t$, and the metrics is computed by
\[
\mbox{RMSVE}(\theta_t) = \sqrt{\frac{1}{N}\sum_{s=1}^N (V^\pi(s) - \phi(s)^\tr \theta_t)^2}. 
\]

\begin{figure}[t]
\caption{Boyan chain: algorithm comparisons.}\label{fig:boyan_rmse}
\centering
\includegraphics[width=0.8\textwidth]{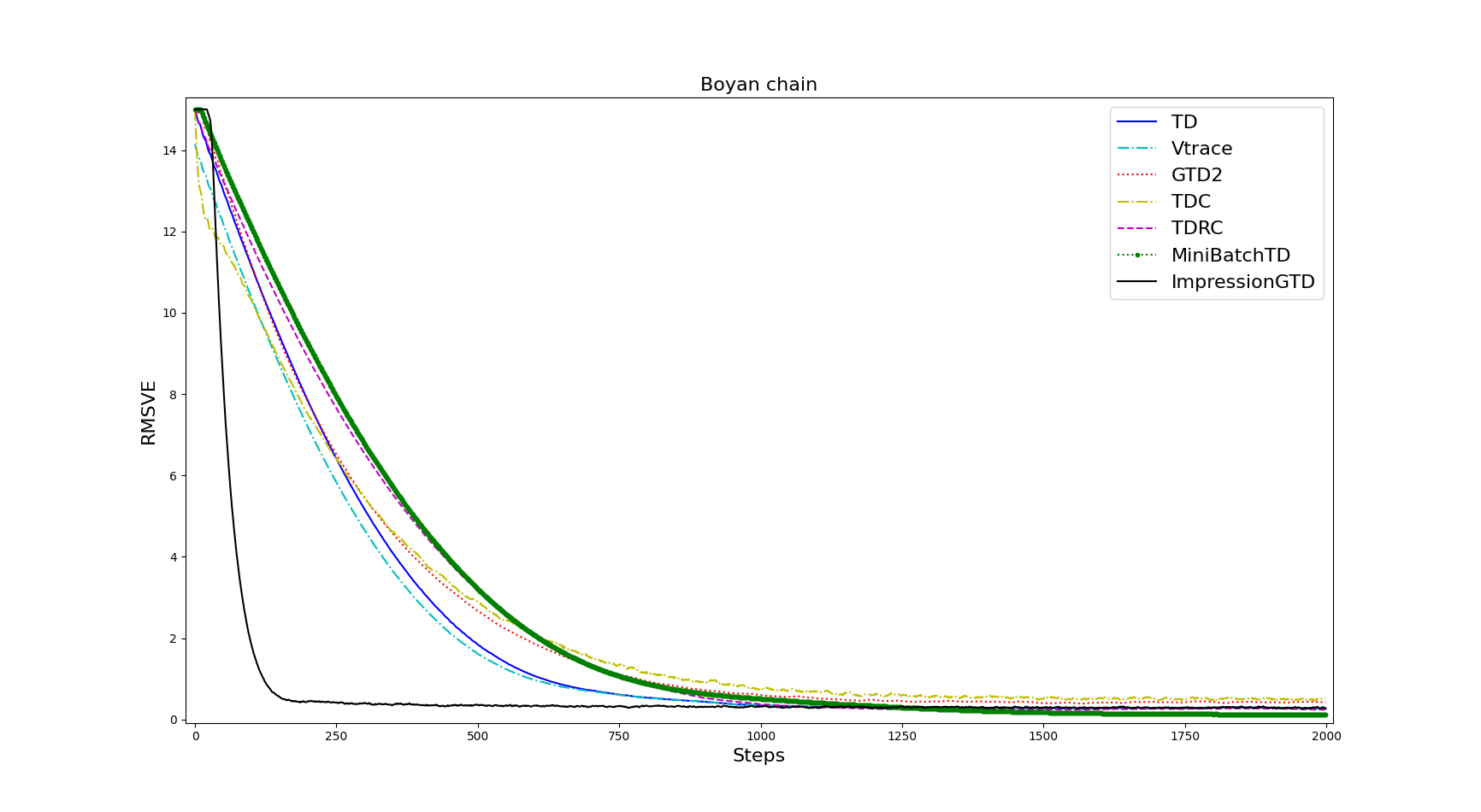}
\end{figure}

Figure \ref{fig:boyan_rmse} compares the RMSVE of the algorithms.
The batch size for mini-batch TD and Impression GTD are both 10. For Impression GTD, it converges with large step-sizes for this example. So a step-size of 10.0 is used. MiniBatchTD used a step-size of 0.05. All the hyper-parameter of the other algorithms were the same as in \citep{martha2020gradient}. Impression GTD waited until both buffers are bigger than the batch size. So there is a flat curve in the beginning. 
HTD’s curve was almost the same as TD and thus it is not shown.

Figure \ref{fig:boyan_rmse_imGTD_bsz} shows the effect of the batch size for Impression GTD. We select the top two baselines after about 1,500 steps in Figure \ref{fig:boyan_rmse}, which are TD and TDRC. Because this problem is on-policy learning, TD converges fast and it stands for the ceiling  for $O(n)$ gradient TD methods in the convergence rate. 
Comparing to TD and TDRC, all the impression GTD algorithms have a steeper drop in the loss, though bigger batch sizes need to wait a bit longer to kick start learning.    Impression GTD with bigger batch sizes (e.g, 32, 64, and 128) is able to learn significantly faster than TD and TDRC. The acceleration in convergence rate seems to decrease after batch size 32.  

\begin{figure}[t]
\caption{Boyan chain: Batch size effect for Impression GTD.}\label{fig:boyan_rmse_imGTD_bsz}
\centering
\includegraphics[width=0.9\textwidth]{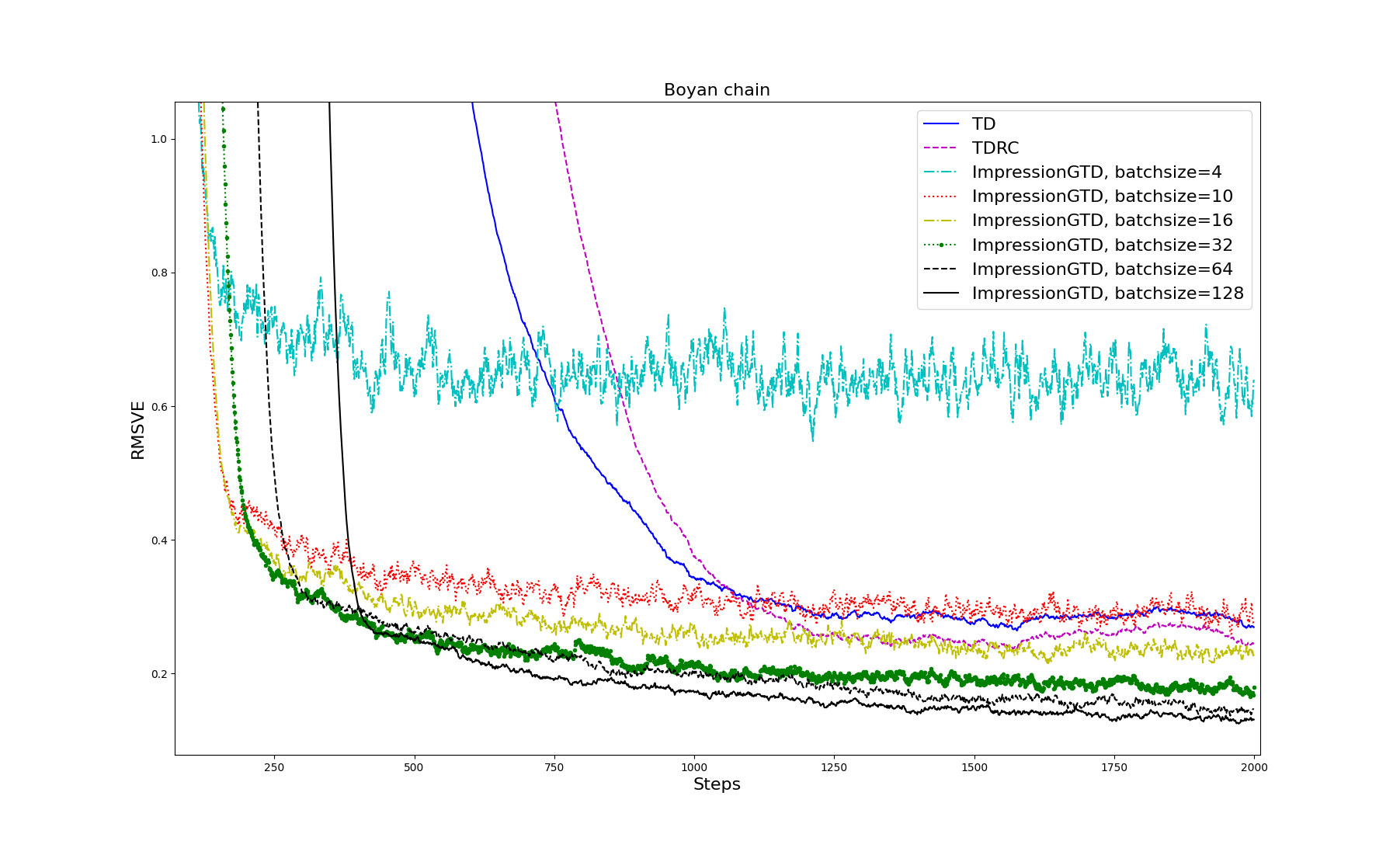}
\end{figure}

Figure \ref{fig:boyan_rmse_imGTD_stepsize} shows the effect of the step-size for Impression GTD, which all used batch sizes equal to 16. All the four step-sizes performed faster than TD whose step-size was tuned near optimal by \citeauthor{martha2020gradient}, which was 0.0625. The four step-sizes used for plotting this figure were 0.1, 1.0, 5.0 and 10.0. Their value range being big whilst learning all faster than TD means that tuning the step-size for Impression GTD is not as sensitive as the GTD algorithms.  

\begin{figure}[t]
\caption{Boyan chain: Step-size effect for Impression GTD.}\label{fig:boyan_rmse_imGTD_stepsize}
\centering
\includegraphics[width=0.9\textwidth]{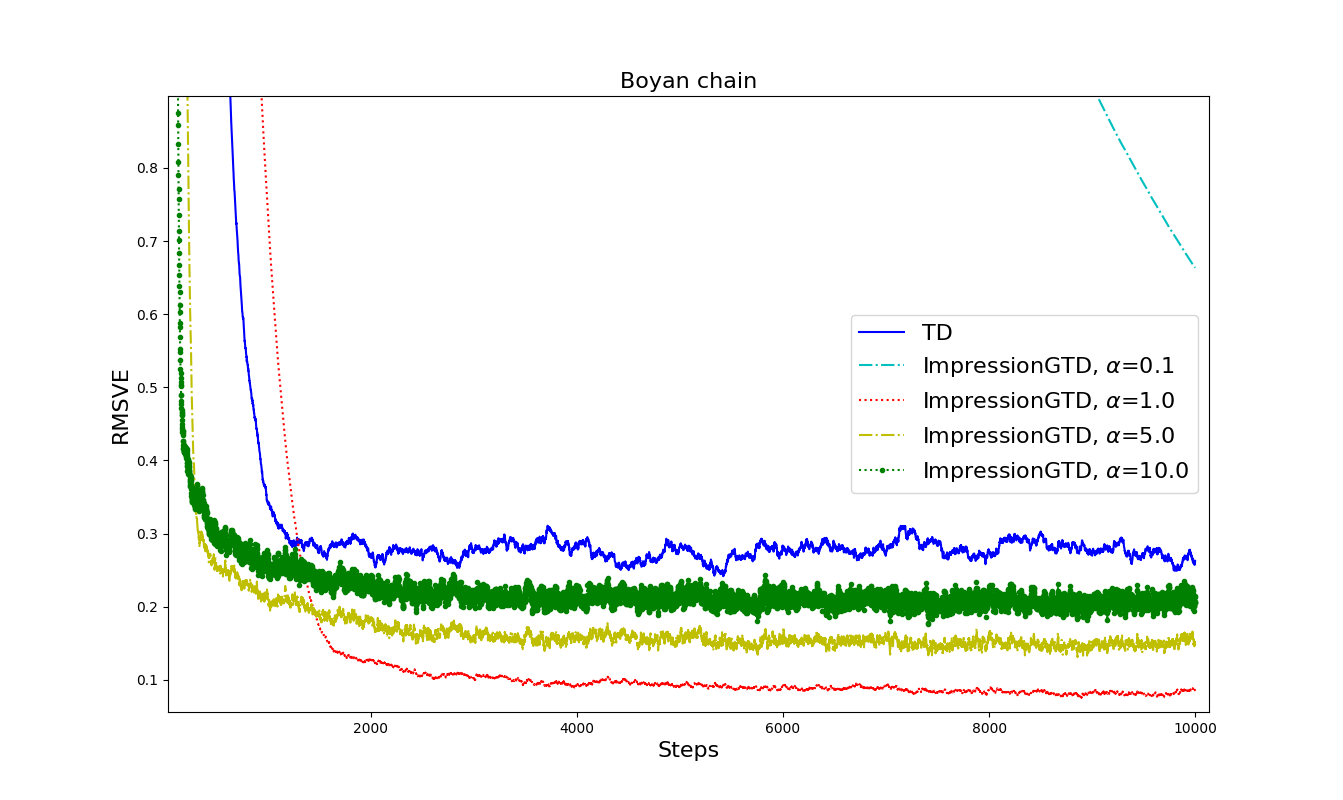}
\end{figure}

\subsubsection{\bfseries Are we getting a linear rate?} By just looking at the rate plots of Impression GTD, e.g., Figure \ref{fig:boyan_rmse_imGTD_bsz} and Figure \ref{fig:boyan_rmse_imGTD_stepsize}, it is hard to see whether the algorithm converges at a linear rate or a rate of $O(1/t)$. The overall curve looks more like the rate of $O(1/t)$ actually.  
However, note that there is a nonzero bias that plays a role in the end. Thus the convergence rate curves shown in the figures can still be a linear convergence, to a biased solution. 

To investigate which rate Impression GTD is getting, we re-plot Figure \ref{fig:boyan_rmse_imGTD_bsz}, using exactly  the same data. The only change in the re-plotting is that we subtracted a bias estimate, which is taken to be the average of the last 100 steps for each curve, discounted by a factor of 0.8, to consider the errors shown in our running steps are still decreasing in the end. Thus this just shifts the curves of the algorithms in Figure \ref{fig:boyan_rmse_imGTD_bsz} by an algorithm-dependent constant, in the $y$-axis. Then finally, we use the log scale for the $y$-axis. This is shown in Figure \ref{fig:rmsve_boyan_batchsize_linear_rate}. First, we can take a look at the curve of TD. After the subtraction, there are still significant errors (around $0.08$) throughout the most learning steps. Thus {\em TD does not have a linear convergence rate}.  Now take a look at Impression GTD. For small batch sizes like 4, the algorithm has a similar rate to TD. As we increase the batch size, the rate becomes approximately linear. This is reflected in that in linear-$x$ and log-$y$ plot, the error curve is almost linear. To confirm this in another way, we additionally plotted the learning curve of Expected GTD, using also a step-size of 5.0. After some number of sufficient samples, Expected GTD basically iterates using a good matrix that is close to the one underlying the NEU objective, and thus the rate is approximately linear. We see in Figure \ref{fig:rmsve_boyan_batchsize_linear_rate} that Impression GTD with a large batch size 128 gets very close to the rate of Expected GTD. For clarity of the presentation, only a subset of curves from Figure \ref{fig:boyan_rmse_imGTD_bsz} are shown in Figure \ref{fig:rmsve_boyan_batchsize_linear_rate}. The omitted curves of the other batch-sizes are between the shown ones, and they support this conclusion as well.

\begin{figure}[t]
\caption{Boyan chain: a re-presentation of Figure \ref{fig:boyan_rmse_imGTD_bsz}. An algorithm-dependent bias is subtracted from Figure 
\ref{fig:boyan_rmse_imGTD_bsz}. This shows Impression GTD algorithms converged with a linear rate (to a biased solution). }\label{fig:rmsve_boyan_batchsize_linear_rate}
\centering
\includegraphics[width=0.8\textwidth]{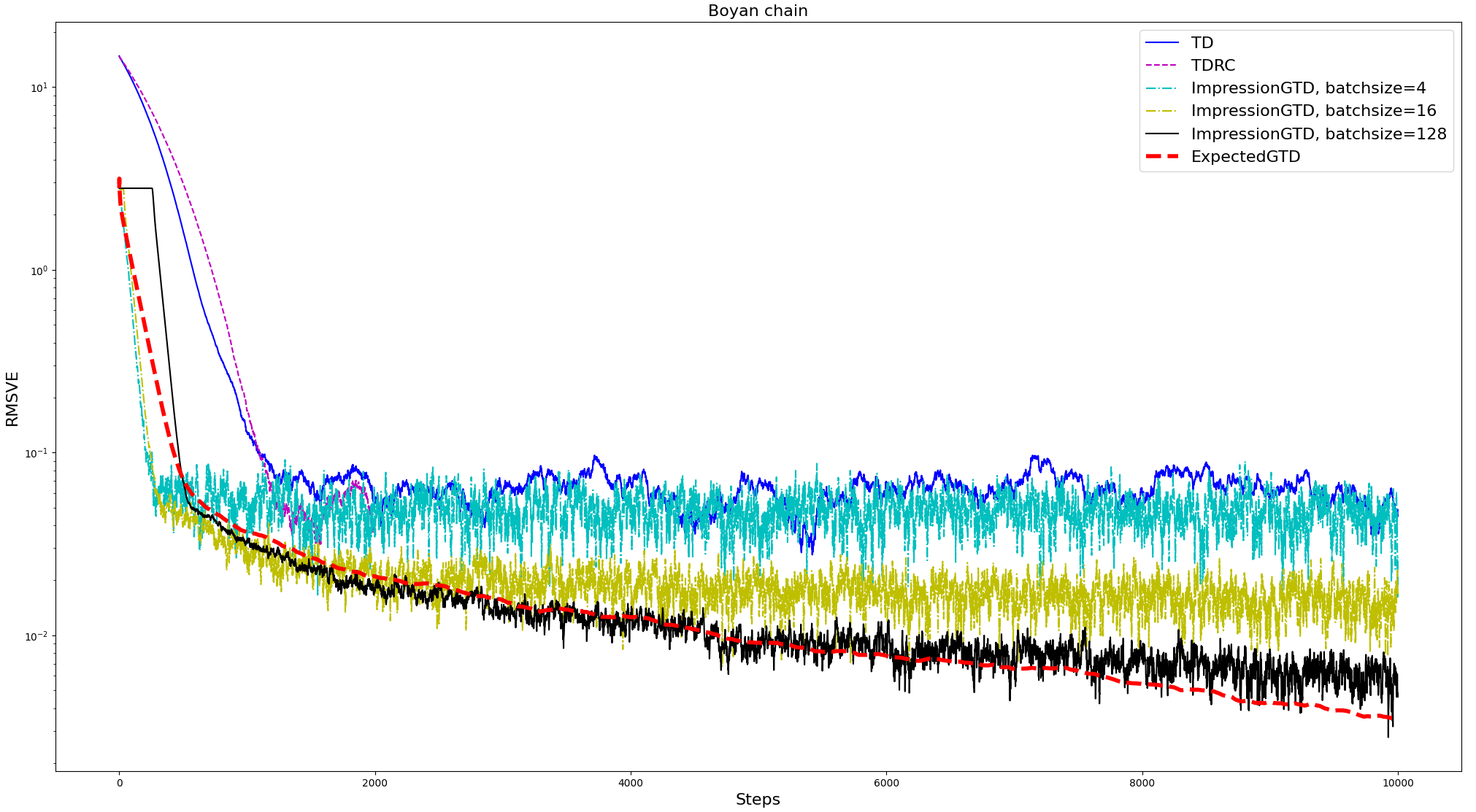}
\end{figure}

\subsection{Random Walks}
There are five intermediate states, and two terminal states (which can be treated as one terminal state). The problem is off-policy learning. The target policy goes to left with probability 40\% and to the right with 60\% probability. The behavior policy chooses the left and right actions with equal (50\%) probabilities. For the experiments in this section, the tabular representation is used.  

\begin{figure}[t]
\caption{Results on Random Walk with tabular representation (RW-tabular).}\label{fig:rwtab}
\centering
\includegraphics[width=0.9\textwidth]{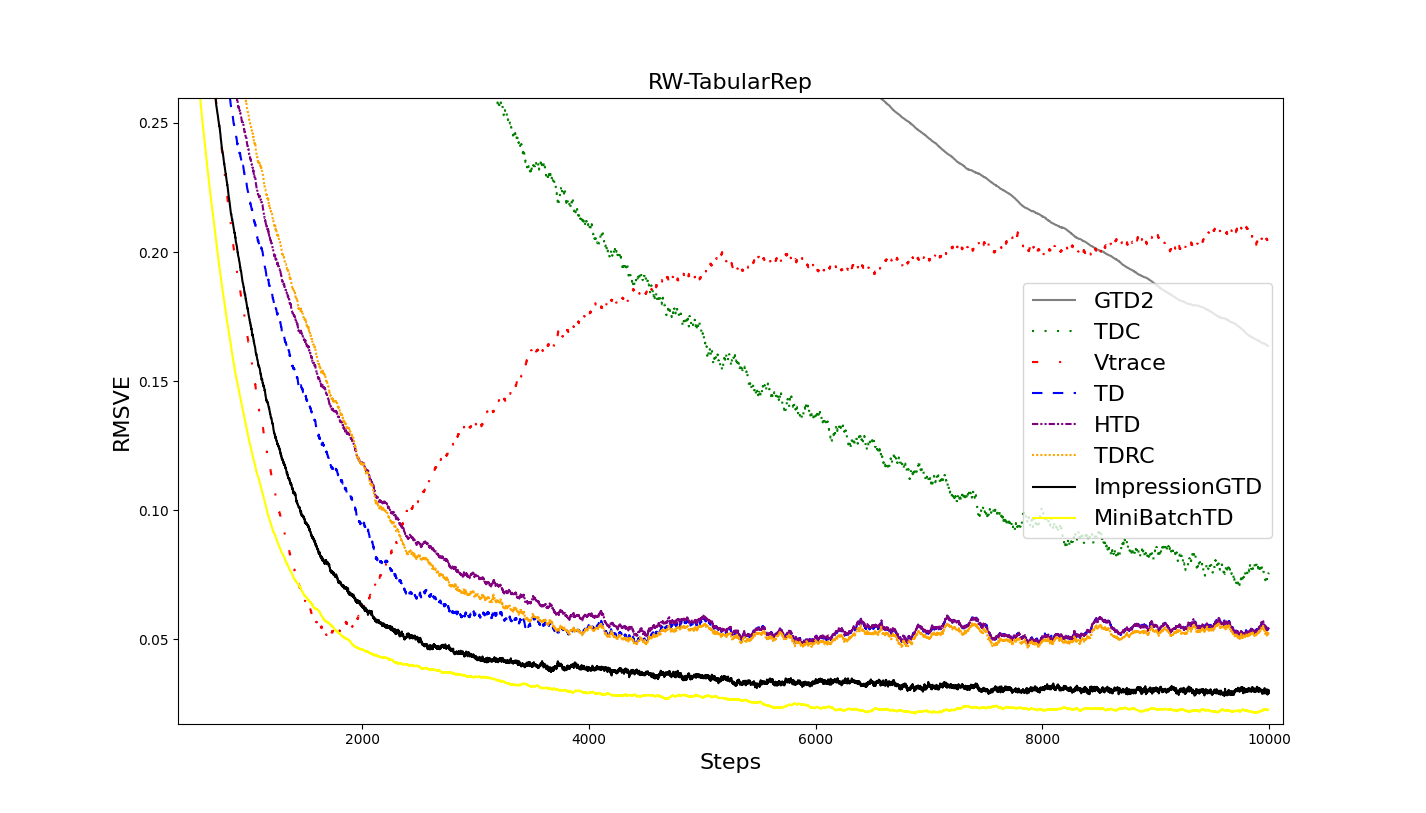}
\end{figure}

Figure \ref{fig:rwtab} shows the results. 
All the algorithms were run with the same, near-optimal hyper-parameters as used in the git repository provided by \citet{martha2020gradient}. Impression GTD used step-size 1.0, and MiniBatchTD used a step-size of 0.05.  The batch size for both algorithms is 32. Impression GTD converges much faster than the baselines including TDRC. Vtrace is a very simple algorithm. It just modifies the importance sampling ratio so that it is upper clipped at 1.0. The motivation of the algorithm is to control the variances caused by importance sampling ratios. It looks Vtrace introduces a bias with the variance reduction. 
TD, HTD and TDRC are  faster than the other baselines, and the gap among the three are small. 

\begin{figure}[t]
\caption{Batch size effect of Impression GTD on RW-tab.}\label{fig:rwtab_batch}
\centering
\includegraphics[width=0.9\textwidth]{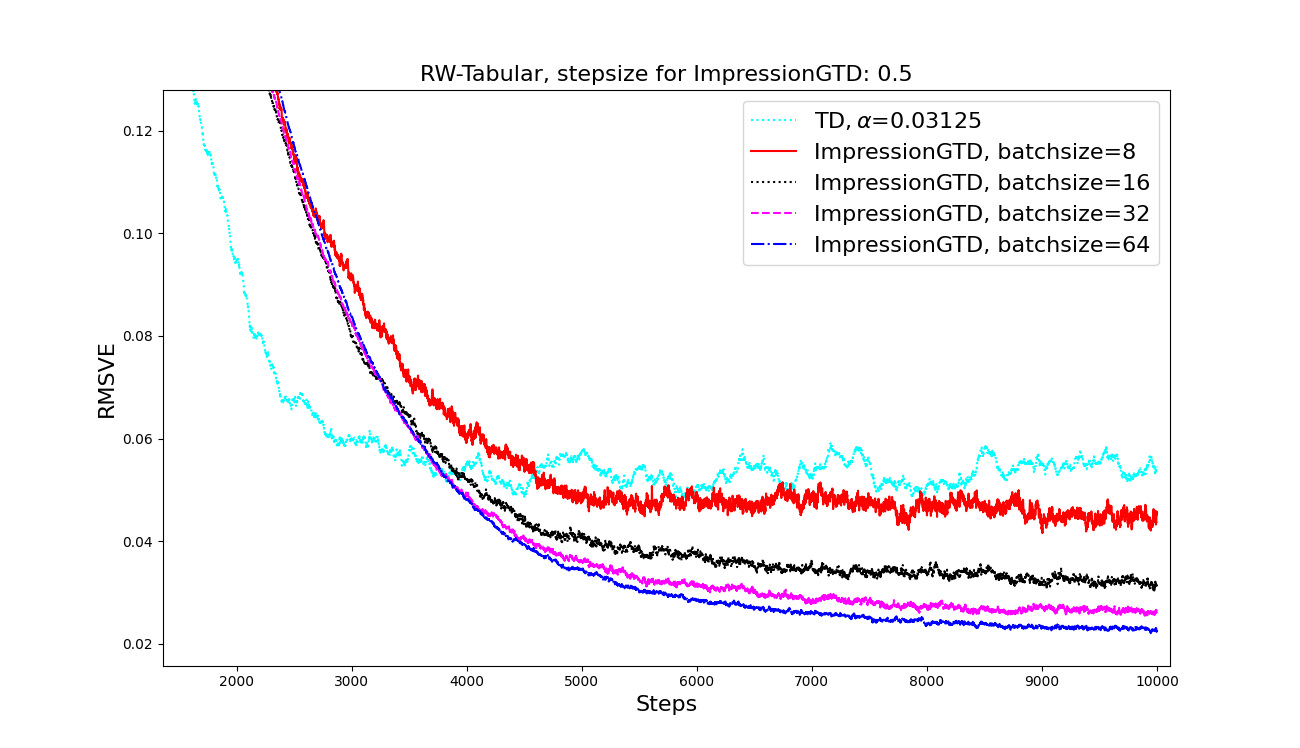}
\end{figure}
The effect of the batch size for Impression GTD is shown in Figure \ref{fig:rwtab_batch}. The step-size of Impression GTD is uniformly 0.5.   After 4500 steps, all the Impression GTD agents were faster than TD. The acceleration is more with a bigger batch size. However, note that a bigger batch size also means more computation complexity per time step.  This can be accelerated with GPU computation for the mini-batch policy evaluation procedure.

\begin{figure}[t]
\caption{Step-size effect for Impression GTD on RW-tab.}\label{fig:rwtab_alpha}
\centering
\includegraphics[width=0.9\textwidth]{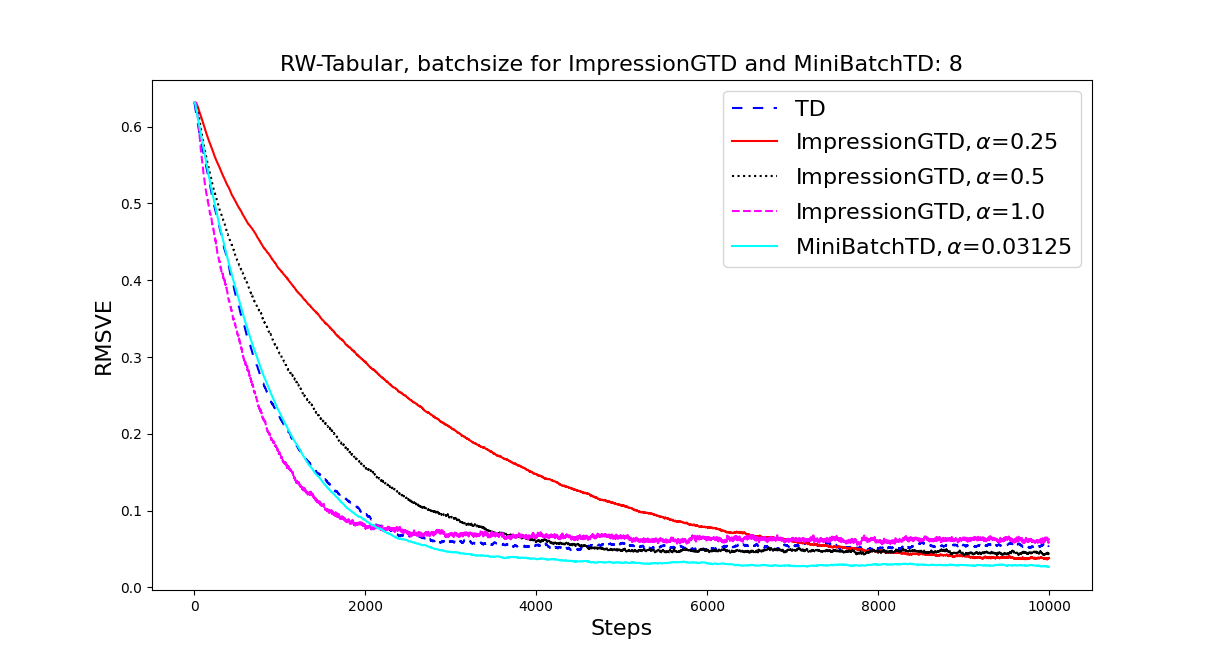}
\end{figure}

Figure \ref{fig:rwtab_alpha} shows the effect of the step-size for Impression GTD. This shows that a bigger step-size is faster in the beginning. However, there is a {\em convergence rate  overturn}. For example, Impression GTD with $\alpha=1.0$ crosses with $\alpha=0.25$ and 0.5 at about 8100 steps and 7000 steps, respectively.  After the crossing points, $\alpha=1.0$ is slower than a smaller step-size.  This is consistent with the Boyan chain results. We also plotted the MiniBatchTD, which used the same step-size as the TD algorithm. It converges faster than TD after about 2000 steps. In the end, MiniBatchTD was fastest algorithm. However, keep in mind that TD and MiniBatchTD just converges for this off-policy learning task by chance, and they may diverge for general off-policy learning. The batch size used for this plot is 8.

\subsection{Discussion on the Empirical Results}
In a summary, Impression GTD is fastest in all the compared algorithms, with a single step-size that is much easier to tune than the two-time scale GTD algorithms. Our results show that MSPBE should be interpreted with care, as the literature seems to favor this measure over NEU or RMSVE \citep{sutton2018reinforcement,martha2020gradient}. Our results show that a larger MSPBE does not necessarily mean a worse RMSVE. Vice versa. A small MSPBE does not necessarily imply good learning either. 
What this finding means is that {\em MSPBE is a surrogate loss for NEU}. 

At the time of writing this, we are not sure which one is better or not than the other. However, there is one thing we are sure: If the preconditioner is not well conditioned,  MSPBE does not delegate NEU well. In numerical analysis and iterative algorithms, people would normally avoid choosing an ill-conditioned preconditioner, at least not worsening the conditioning of the original problem. To accelerate TD and GTD, the spectral radius of the underlying iteration should be reduced \citep{ptd_yao}. With ill-conditioned preconditioner, this is hardly achieved.  \citet{ran_gtd} had similar comments on the influence of the conditioning of matrix $C$ on learning objectives.
In the off-policy learning setting, the preconditioner in MSPBE is not directly chosen by the users like in numerical analysis (where the preconditioner matrix is chosen on the fly). Instead, the matrix $C$ is dependent on the behavior policy and the features, and its preconditioning effect is realized in a {\em stochastic} way (this is very different from iterative algorithm in numerical analysis), because only samples of $C$ are applied in the learning update. Given this distinctiveness, the preconditioner is often not obvious to algorithm users in off-policy learning.  
We should perhaps avoid possible pitfalls of using preconditioning such as MSPBE in off-policy learning, and also take caution in interpreting the learning results. An example is a debug analysis of TDC on Baird counterexample performed by the authors \citep{}, which thoroughly revealed why it performed so slow as illustrated by \citep{sutton2018reinforcement}. We found it was due to the use of a singular preconditioner.   

\section{Conclusion and Discussion}\label{sec:conclusion}
We proposed a new Gradient TD algorithm for off-policy learning in reinforcement learning. 
Because of the use of two or more independent TD errors, weighted by the similarity between the feature vectors of the states where the TD errors happen, off-policy learning becomes a fully supervised learning problem once the data has been collected. 
In this paper, we have focused on the formulation and optimization parts of the problem, and established the convergence rates of the resulting {\em truly single-time-scale} SGD algorithm for off-policy learning. With only one step-size parameter, the new algorithm is much easier to use for practitioners.  


Under the constant step-size case, our first rate shows that Impression GTD converges at a rate of $O(1/t)$. The rates of GTD, GTD2 and TDC were established previously in a saddle-point formulation  \citep{bo_gtd_finite}, and a two-time-scale formulation \citep{dalal2018finite_twotimescale,dalal2020tale_twotimescale}. Both analysis show the three GTD algorithms converge slower than $1/\sqrt{t}$. 

Our second rate analysis is a linear rate result for Impression GTD under constant step-sizes. The analysis draws on a SGD result that is novel in two ways. First, the expected smoothness condition \citep{gower2019sgd_general} is relaxed to a new class of loss functions, in the sense that the smoothness measure is allowed to have some extra noise. Second, our rate yields a tighter bound, introducing a squared term of the strong quasi-convexity factor. This technique can be used to improve bounds of the SGD rate in literature, e.g., Theorem 3.1 of \citep{gower2019sgd_general}, as we showed in the paper. 
Besides Impression GTD, we also proved the convergence rates of three other GTD algorithms, including the one by \citet{ptd_yao} and another discussed by \citet{gtd}.

The empirical results of on/off-policy learning on three problems show that Impression GTD learns much faster than exiting off-policy TD algorithms. Our parameter studies of the step-size and the batch size shows that the new algorithm is very easy to use. Tuning the single step-size for Impression GTD is much easier than tuning the two step-sizes of existing GTD algorithms. 

Empirical results show that larger step-sizes make Impression GTD converge faster. However, after increasing to certain step-size values, the acceleration gets smaller. The phenomenon is well explained by our theory. In particular, Theorem \ref{thm:rates_all} shows that the step-size should be inverse proportional to the ratio between the variance of the transition features and the batch size. 
This means two things. First, larger batch sizes induce less disturbance into the smoothness measure and as a result, a larger step-size gives a faster convergence. 
Second, increasing the batch size beyond a certain threshold (proportional to the variance of the transition features) does not help with the convergence rate much any more. The rate will then be bounded by the convergence factors of the full gradient descent, assuming the true loss function is given. 

Another phenomenon we observed in the experiments is that larger step-sizes induced larger biases in the end, though faster convergence in the beginning. This can be explained by Theorem \ref{thm:rates_all} as well. In particular, the bias is proportional to 
\[
\frac{\alpha^2}{m}\propto \left(\frac{1}{5\frac{\norm{\Sigma_A}^2}{m} +  4\norm{A}^2}\right)^2 \frac{1}{m}.
\]
Thus the bias reduces as we decrease the batch size $m$. Unless the transition variance is small (comparing to $\norm{A}^2$), the bias also increases when we increase the batch size. 

A limitation of the present paper is that all the theory and experiments are about policy evaluation. It is interesting to extend the results to control and nonlinear function approximation in the future.



\acks{
We appreciate Thomas Walsh, James MacGlashan, and Peter Stone for insightful discussions on the topics of off-policy learning and deep reinforcement learning, who also helped improve the paper in many ways. Tom spotted a problem in an early draft of Theorem 2. Tom and Peter also gave lots of advice that greatly helped improve the presentation of the paper.   
We would like to thank Declan Oller for the pointer to the paper by \citet*{lihong_kernel_sim}, which helped improve our understanding of Impression GTD. 
We also thank Varun Kompella, Dustin Morrill and Ishan Durugkar for interesting questions and helpful discussions. 
We appreciate Sina Ghiassian, Andrew Patterson, Shivam Garg, Dhawal Gupta, Adam White and Martha White for making their TDRC code available, and Shangtong Zhang for the Baird counterexample, both of which greatly facilitate the experiment studies in this paper. We appreciate Shangtong Zhang also for helpful discussions on importance sampling and off-policy learning. We thank Csaba Szepesvari for helpful discussions on the information-theoretically near-optimal rate for linear stochastic approximation algorithms. We thank Bo Liu for helpful discussions on the saddle-point problem, the single-time-scale formulation of GTD, and ETD.    
}










\vskip 0.2in
\bibliography{ref}

\appendix
\section{}
To give the proofs of lemmas in the paper, we first introduce a few results including the definition of a convexity and a lemma for it. 
\begin{definition}[q-convex]\footnote{
The notion of q-convex here does not imply the convexity of $f$, nor is it limited to the uniqueness of $x^*$.
This definition is different from ``quasi-convex'', which means something else, e.g., see \citep{Kiwiel_quasi_convex,quasi_convex_hu2019convergence}. In particular, $f(\lambda x + (1-\lambda)y)\le \max\{f(x), f(y)\}$ holds for all $x,y\in \mathcal{D}(f)$ and any $\lambda \in[0,1]$. }  
Let a differentiable function $f$ be defined by $f:\RR^d \to \RR$. 
Let $x^*=\min_{x\in \RR^d }f(x)$. In addition, $f'(x^*)=0$.
If 
\begin{equation}\label{eq:qconvex1}
f(x^*) \ge f(x) + f'(x)^\tr (x^*-x)
\end{equation}
holds for all $x\in \RR^d$, then we call $f$ {\em q-convex}. 
\end{definition}

\begin{lem}\label{lem:qconvexiif}
Assume $f$ is q-convex. Recall that $x^*$ is the optimum with $f(x^*)\le f(x)$ for all $x$. 
For any $ x\in \RR^d$, let $y = \lambda x^* + (1-\lambda)x$. We have,
\[
-f'(y)^\tr (x^*-x)
\begin{cases}
        \ge 0, & \text{if}\ \lambda\in[0,1]; \\
      <0, & \text{otherwise.}
    \end{cases}
\]

\end{lem}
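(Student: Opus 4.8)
The natural route is to apply the q-convexity inequality \emph{at the interpolated point} $y$ rather than at $x$, and to exploit the fact that the displacement from $y$ to $x^*$ is a scalar multiple of the direction $x^*-x$. Concretely, the first thing I would record is the identity
\[
x^*-y = x^* - \bigl(\lambda x^* + (1-\lambda)x\bigr) = (1-\lambda)(x^*-x),
\]
which shows that the entire statement is really about the sign of the scalar $f'(y)^\tr(x^*-x)$ once it is reweighted by the factor $(1-\lambda)$.

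Next I would instantiate the defining q-convexity inequality \eqref{eq:qconvex1} at the point $y$, giving
\[
f(x^*) \ge f(y) + f'(y)^\tr(x^*-y) = f(y) + (1-\lambda)\,f'(y)^\tr(x^*-x).
\]
Rearranging, and using that $x^*$ is a global minimiser so that $f(x^*)-f(y)\le 0$, yields the single master inequality
\[
(1-\lambda)\,f'(y)^\tr(x^*-x) \le f(x^*)-f(y) \le 0 .
\]
Writing $P := -f'(y)^\tr(x^*-x)$, this reads $(1-\lambda)P \ge 0$, from which the case analysis is immediate. For $\lambda\in[0,1)$ the factor $1-\lambda>0$ forces $P\ge 0$, and the endpoint $\lambda=1$ is handled trivially and separately, since then $y=x^*$ and $f'(x^*)=0$ make $P=0$; this settles the first branch. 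For the second branch I would take $\lambda>1$, so that $1-\lambda<0$ and the master inequality gives $P\le 0$; to upgrade to the strict inequality $P<0$ I would sharpen the chain to $(1-\lambda)P \le f(x^*)-f(y) < 0$, which is valid precisely when $y$ fails to be a minimiser, i.e. $f(y)>f(x^*)$ (guaranteed for $x\ne x^*$ when the optimum is attained only at $x^*$).

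The genuine obstacle here is the strictness in the second branch together with the exact boundary between the two cases. Applying q-convexity only at $y$ delivers $P\ge 0$ for \emph{every} $\lambda<1$ and $P\le 0$ for every $\lambda>1$, so the clean dichotomy produced by this method is $\{\lambda\le 1\}$ versus $\{\lambda>1\}$; in particular, for $\lambda<0$ one still obtains $P\ge 0$ rather than $P<0$, as the quadratic model $f(x)=\norm{Ax+b}^2$ (which is q-convex, with $P=2(1-\lambda)\norm{Ax+b}^2$) confirms. I would therefore state the result as $P\ge 0$ on the convex-combination range and $P<0$ on the extrapolation range past $x^*$, with the strict sign coming from the non-optimality of $y$. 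Making that last point rigorous without assuming uniqueness of $x^*$ — ruling out $f(y)=f(x^*)$ for $\lambda>1$ — is the step that requires the most care.
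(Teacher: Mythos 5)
Your proof follows the same route as the paper's own: instantiate the q-convexity inequality at the interpolated point $y$, use $x^*-y=(1-\lambda)(x^*-x)$ together with $f(y)\ge f(x^*)$, and read off the sign of $(1-\lambda)\,f'(y)^\tr(x^*-x)\le 0$. Your additional observations are correct and worth recording. First, this argument yields the dichotomy $\{\lambda\le 1\}$ versus $\{\lambda>1\}$, not $\{\lambda\in[0,1]\}$ versus its complement; your quadratic example $f(x)=\norm{Ax+b}^2$, for which $-f'(y)^\tr(x^*-x)=2(1-\lambda)\norm{Ax+b}^2\ge 0$ whenever $\lambda<0$, shows that the second branch of the lemma as stated is actually false for $\lambda<0$. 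Second, you are right that upgrading $\le 0$ to the claimed strict $<0$ for $\lambda>1$ requires $f(y)>f(x^*)$, which does not follow from q-convexity alone without some non-degeneracy of the minimizer. The paper's own proof in fact stops at the weak inequality $-(1-\lambda)f'(y)^\tr(x^*-x)\ge 0$ and never addresses the strict sign or the $\lambda<0$ case, so your proposal proves everything the paper's proof proves, and your critique of the remaining claims is accurate. None of this damages the rest of the paper: only the first branch, with $\lambda\in[0,1]$, is invoked downstream, in the proof of Lemma~\ref{lem:utrongly_norm_grad}.
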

That is, any intermediate point between a current point $x$ and the optimum $x^*$ is guaranteed to have a negative gradient that is positively correlated with the direction of $x^*-x$. Extrapolation outside of the two points gives a reverse relationship. 

\begin{proof}
Setting $x=y$ in equation \ref{eq:qconvex1}, this still holds. Thus
\begin{align*}
    f(x^*) &\ge f(y) + f'(y)^\tr (x^*-y)\\
    &= f(y) + f'(y)^\tr (x^*-(\lambda x^* + (1-\lambda)x))\\
    &= f(y) + (1-\lambda) f'(y)^\tr (x^*-x).
\end{align*}
Note that $f(y)\ge f(x^*)$. 
Thus 
\[
 -(1-\lambda)f'(y)^\tr (x^*-x) \ge 0.
\]

\end{proof}

\begin{lem}\label{lem:g2andf2}
Refer to Definition \ref{def:L-lambda-smooth} for the generation process of $g_t(x)$. 
Let $\mathcal{D}$ be any distribution that satisfies $\EE_{\mathcal{D}} [g_t(x)|x] = f'(x)$ for all $x \in \RR^d$. Given $f$ and $\mathcal{D}$, we define $
\sigma_v^2 = \min_x \EE_{\mathcal{D}}\norm{g_t(x)-f'(x)}^2$.
For all $x$, we have 
\[
\EED \norm{g_t(x) - f'(x)}^2=\EED \norm{g_t(x)}^2 - \norm{f'(x)}^2;  \quad \EE_{\mathcal{D}}\norm{g_t(x)}^2\ge \norm{f'(x)}^2 + \sigma_v^2.
\]
\end{lem}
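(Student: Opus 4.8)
The plan is to treat both claims as consequences of a single bias--variance decomposition, exploiting the unbiasedness hypothesis $\EED[g_t(x)\mid x]=f'(x)$. This is a routine computation, so I expect no genuine obstacle; the only care needed is handling the cross term correctly and recognizing that the pointwise variance dominates its infimum.

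First I would establish the identity. Expand the squared norm inside the expectation,
\[
\norm{g_t(x)-f'(x)}^2 = \norm{g_t(x)}^2 - 2\, g_t(x)^\tr f'(x) + \norm{f'(x)}^2,
\]
and take $\EED$ of both sides. Since $f'(x)$ is a deterministic vector (given $x$) and $\EED[g_t(x)] = f'(x)$ by hypothesis, the cross term becomes $2\,\EED[g_t(x)]^\tr f'(x) = 2\norm{f'(x)}^2$, and the last term is unchanged. Collecting the terms yields
\[
\EED\norm{g_t(x)-f'(x)}^2 = \EED\norm{g_t(x)}^2 - 2\norm{f'(x)}^2 + \norm{f'(x)}^2 = \EED\norm{g_t(x)}^2 - \norm{f'(x)}^2,
\]
which is precisely the first claimed equality.

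For the second claim, I would simply rearrange this identity to read $\EED\norm{g_t(x)}^2 = \norm{f'(x)}^2 + \EED\norm{g_t(x)-f'(x)}^2$, and then bound the variance term from below. By the definition $\sigma_v^2 = \min_x \EED\norm{g_t(x)-f'(x)}^2$, the quantity $\EED\norm{g_t(x)-f'(x)}^2$ is at least $\sigma_v^2$ for every $x$, giving $\EED\norm{g_t(x)}^2 \ge \norm{f'(x)}^2 + \sigma_v^2$. The only subtlety worth flagging is that the minimum defining $\sigma_v^2$ is taken over $x$, so the lower bound holds uniformly; no measurability or attainment issues arise since the inequality only uses that each term exceeds the infimum.
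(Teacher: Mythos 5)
Your proposal is correct and follows essentially the same route as the paper: both expand $\norm{g_t(x)-f'(x)}^2$, use the unbiasedness hypothesis to evaluate the cross term as $2\norm{f'(x)}^2$, and then invoke the definition of $\sigma_v^2$ as a minimum to obtain the inequality. The only difference is presentational — you prove the identity first and then derive the bound, whereas the paper runs the chain starting from $\sigma_v^2 \le \EED\norm{g_t(x)-f'(x)}^2$ — but the mathematical content is identical.
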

\begin{proof}
This can be seen from 
\begin{align*}
\sigma_v^2 &\le 
\EE_{\mathcal{D}}\left[\norm{g_t(x) - f'(x)}^2|x\right] \\
&=
\EE_{\mathcal{D}}\left[\norm{g_t(x)}^2|x\right] - 2 \EE_{\mathcal{D}}\left[g_t(x)^\tr f'(x)|x\right] + \EE_{\mathcal{D}}\left[\norm{f'(x)}^2 |x\right]\\
&=
\EED\norm{g_t(x)}^2 - 2 \EED[g_t(x)]^\tr \EED f'(x) + \norm{f'(x)}^2 \\
&=
\EED\norm{g_t(x)}^2 - 2 \EED[g_t(x)]^\tr f'(x) + \norm{f'(x)}^2 \\
&=
\EED\norm{g_t(x)}^2 - 2 \norm{f'(x)}^2 + \norm{f'(x)}^2 \\
&=
\EED\norm{g_t(x)}^2 -  \norm{f'(x)}^2,  
\end{align*}
where the conditional on $x$ is omitted whenever there is no need to emphasize.
\end{proof}

\subsection{Proof for Lemma \ref{lem:ED_avg}}\label{appendix:ED_avg}
\begin{proof}
Using the familiar notation of $\bar{X}$ to denote the empirical mean of multiple i.i.d (gradient) samples and $X$ refers to the stochastic gradient in this context.  We have
\begin{align*}
\EED \norm{\avgx}^2 &=  \EED \norm{\frac{1}{m}\sum_{i=1}^m
g(x|X_i,Y_i)}^2 \\
&=  \EED   
\norm{\bar{X}}^2 \\
&= \EED   
\sum_{i=1}^d\bar{X}[i]^2 \\
&=  
\sum_{i=1}^d\EED \bar{X}[i]^2 \\
&=  
\sum_{i=1}^d Var(\bar{X}[i]) + (\EED\bar{X}[i])^2 \\
&=  
\sum_{i=1}^d \frac{1}{m}Var({X}[i]) + (\EED{X}[i])^2 
\end{align*}
where we used $Var(\bar{X})=\frac{1}{m}Var(X)$ because all the samples are i.i.d. It follows that 
\begin{align*}
\EED \norm{\avgx}^2&=  
\sum_{i=1}^d \frac{1}{m}\left[\EED{X}[i]^2 - (\EED X[i])^2\right] + (\EED{X}[i])^2 \\
&=  
\frac{1}{m}\sum_{i=1}^d \EED{X}[i]^2  + \frac{m-1}{m}\sum_{i=1}^d(\EED{X}[i])^2 \\
&= \frac{1}{m}\EED \norm{g_t(x)}^2 + \left(1-\frac{1}{m}\right) \norm{f'(x_t)}^2. 
\end{align*}
where in the third line we plugged in the original notations. 
\end{proof}
This lemma shows that the expected squared norm of the averaged gradient is not just shrinking by a factor of one over the batch size, but also it adds $(1-1/m)$ times the squared norm of the true gradient. The appearance is a convex sum of the two. 

\subsection{Proof of Lemma \ref{lem:utrongly_norm_grad}}\label{appendix:u_norm_grad}
\begin{proof}
Because $f$ is $\mu$-strongly quasi-convex, we have 
\[
f(x^*) \ge f(x) + \nabla f(x)^\top (x^* -x ) + \frac{\mu}{2} \norm{x^* -x}^2.
\]
Let $g(x) = f(x) -\frac{\mu}{2}\norm{x-x^*}^2$. Note $g'(x^*)=0$. 
We have, $g(x^*)\ge g(x)+g'(x)^\tr (x^*-x)$ holds if and only if $f$ is $\mu$-strongly quasi-convex. Thus $g$ is q-convex and we have $-g'(y)^\tr (x^*-x) \ge 0$, for $y=\lambda x^*+ (1-\lambda) x$, $\forall \lambda\in[0,1]$, according to Lemma \ref{lem:qconvexiif}. Note that $y-x^*=(1-\lambda) (x-x^*)$. Thus $g'(y)^\tr (y-x^*) \ge 0$. 
Expanding this inequality we have
\begin{align*}
    0&\le  g'(y)^\tr (y-x^*) \\
    &= (f'(y) - \mu(y-x^*))^\tr (y-x^*). 
\end{align*}
This gives $f'(y)^\tr (y-x^*)\ge \mu\norm{y-x^*}^2$. Using Cauchy–Schwarz inequality, we have 
\begin{align*}
\norm{f'(y)} \norm{y-x^*}\ge f'(y)^\tr (y-x^*)\ge \mu\norm{y-x^*}^2. 
\end{align*}
If $y=x^*$, the lemma holds; otherwise, 
dividing by $\norm{y-x^*}$, we have 
$\norm{f'(y)}\ge \mu\norm{y-x^*}$. Since this holds for $y$ generated for any $\lambda\in[0,1]$, setting $\lambda=1$ also holds. Thus this holds for any $x\in \RR^d$.   
\end{proof}

\subsection{Proof of Lemma \ref{lem:ES_qLsmooth}}

\begin{proof}\label{appendix:es_qls}
We have
\[
\sigma^2_v + \norm{f'(x)}^2 \le \EE_{\mathcal{D}}\norm{g_t(x)}^2 \le 2L (f(x) - f(x^*))  + \lambda \norm{x-x^*}^2+ \sigma^2, 
\]
where the first inequality uses Lemma \ref{lem:g2andf2} and the second the $L$-$\lambda$ smoothness condition.  
\end{proof}

\subsection{RW-inv}\label{exp:rwinv}

In this random-walk problem, RW-inv, the representation inverts the tabular representation, and switches zero for one, and one for zero for the features. Then normalization is applied row-wise to the feature matrix.
First we run the baselines and measured the RMSPBE, and the results are shown in Figure \ref{fig:rwinvpbe}. The hyper-parameters were the same as \citep{martha2020gradient}. If we compare with the Figure 1 for this problem in their paper,  they showed that TDRC performed the best. However, note that there the metrics were taken for 3000 steps. As learning extends more steps,  TDC gets much better than TDRC. HTD is pretty close to TD. GTD2 and Vtrace have the largest RMSPBE after 4000 steps.

\begin{figure}[t]
\caption{RW-inv: TDC is better than TDRC after 3000 steps (RMSPBE).}\label{fig:rwinvpbe}
\centering
\includegraphics[width=0.9\textwidth]{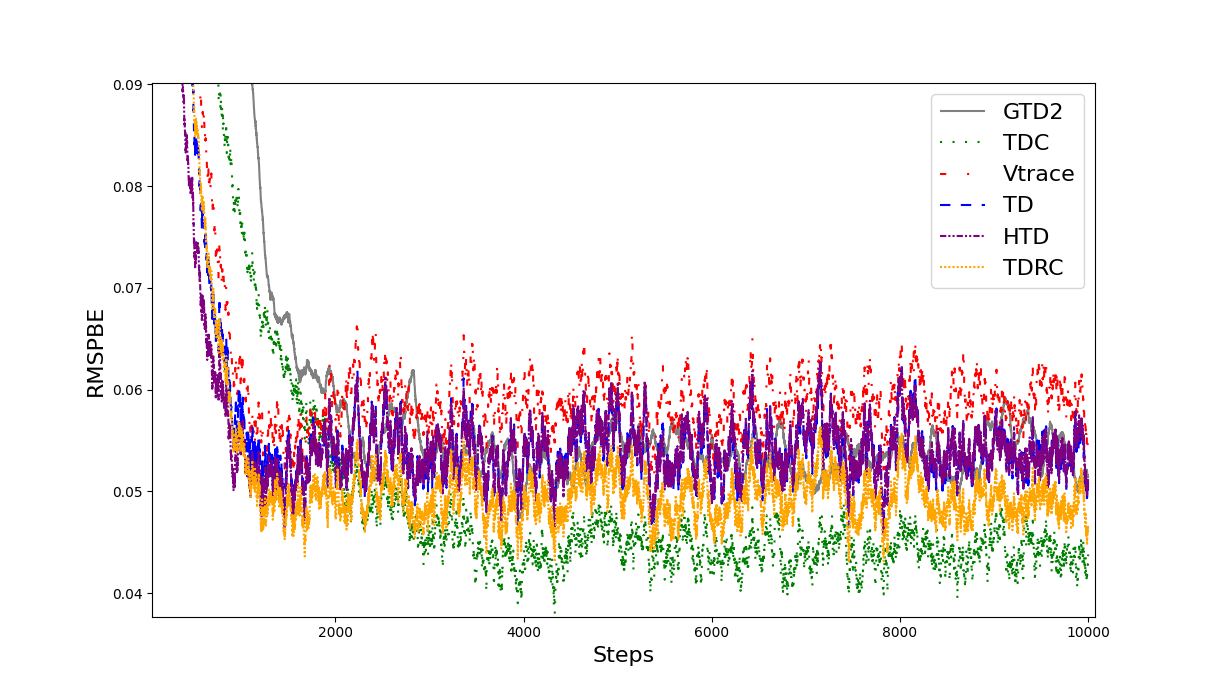}
\end{figure}

Next we run all the algorithms and measure the RMSVE in Figure \ref{fig:rwinvrmsve}. Surprisingly, after the initial learning stage, GTD2 and TDC perform the best in all the baselines, much better than TD, TDRC and the other baselines. Thus we should take care in interpreting MSPBE. A large MSPBE does not necessarily mean bad learning, e.g., GTD2, as shown in Figure \ref{fig:rwinvpbe}. Impression GTD used a batch size of 32 and a step-size of 1.0. After the initial learning stage, Impression GTD performs the best, even better than the unusually fast GTD2 and TDC in this case.

\begin{figure}[t]
\caption{RW-inv: Impression GTD is still faster than GTD2 and TDC,  although they are unusually fast for this problem.}\label{fig:rwinvrmsve}
\centering
\includegraphics[width=0.9\textwidth]{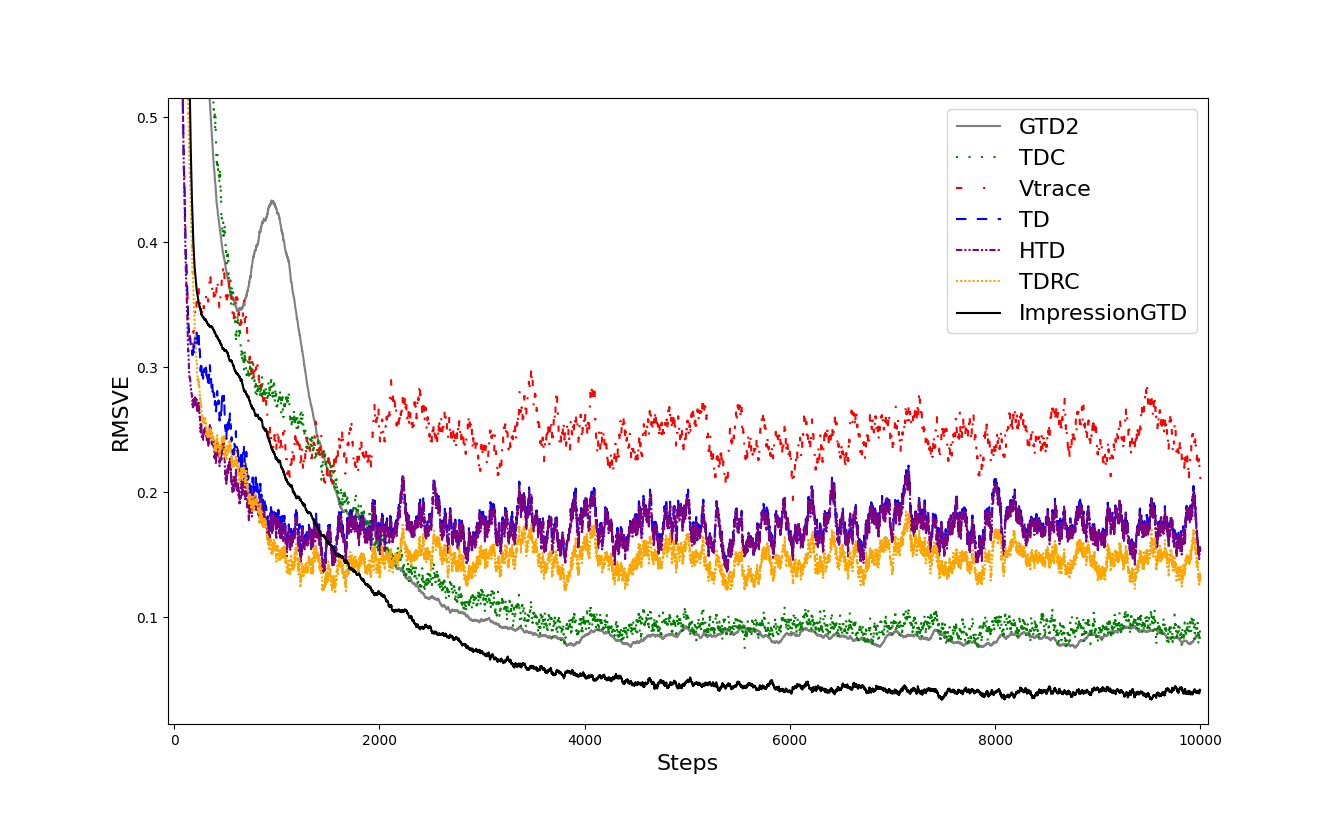}
\end{figure}

\begin{figure}[t]
\caption{RW-inv: The batch size effect for Impression GTD.}\label{fig:rwinv_batchsize}
\centering
\includegraphics[width=0.9\textwidth]{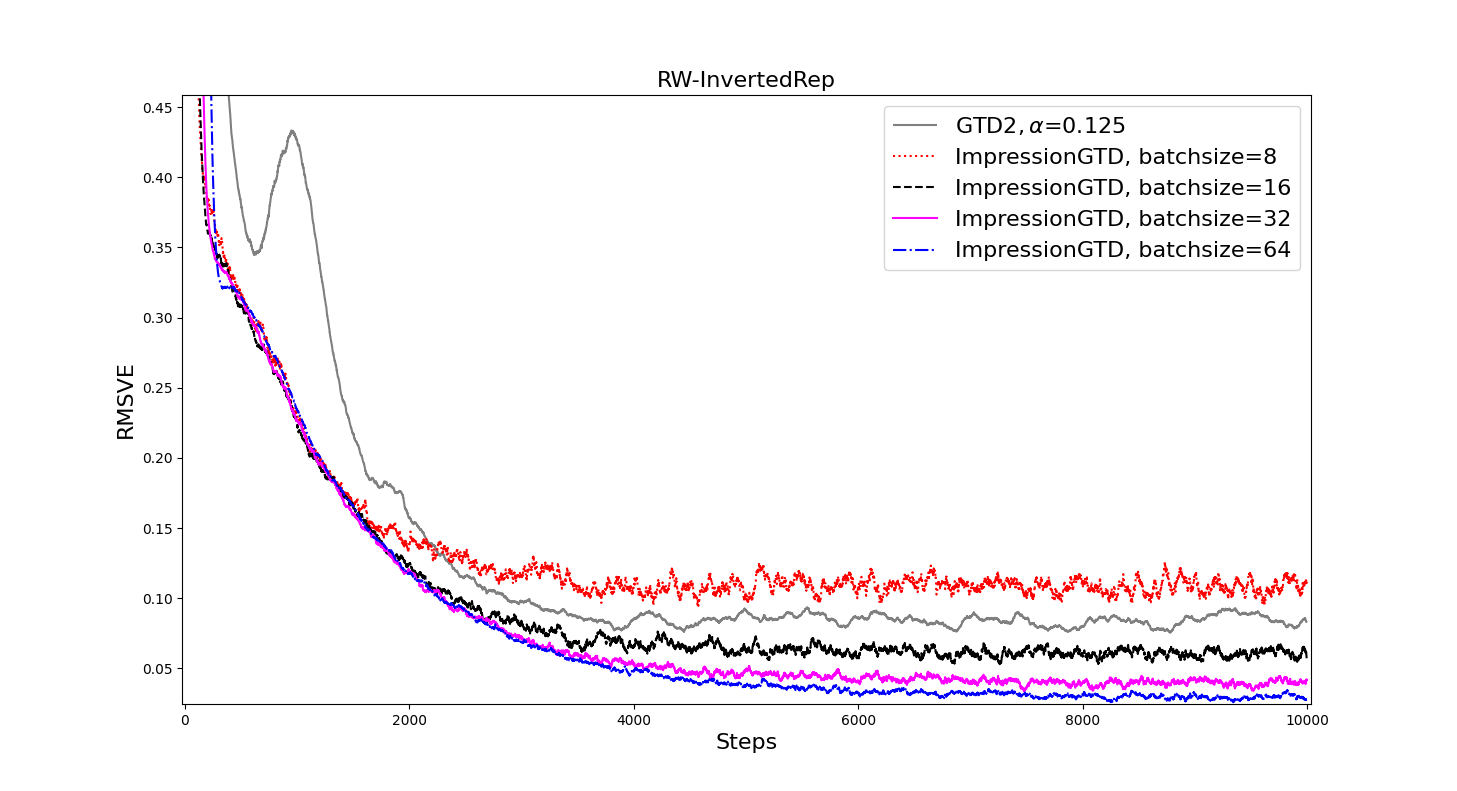}
\end{figure}

The batch size effect for Impression GTD is shown in Figure \ref{fig:rwinv_batchsize}. 
The step-size for Impression GTD agents is uniformly 1.0.   We also plotted GTD2, the best performing agent for this problem. 
This shows that a smaller batch size like 8 is slower in learning. Impression GTD agents with batch sizes 16, 32 and 64 all perform much better than GTD2 after the initial learning stage.

\begin{figure}[t]
\caption{RW-inv: The step-size effect for Impression GTD.}\label{fig:rwinv_alpha}
\centering
\includegraphics[width=0.9\textwidth]{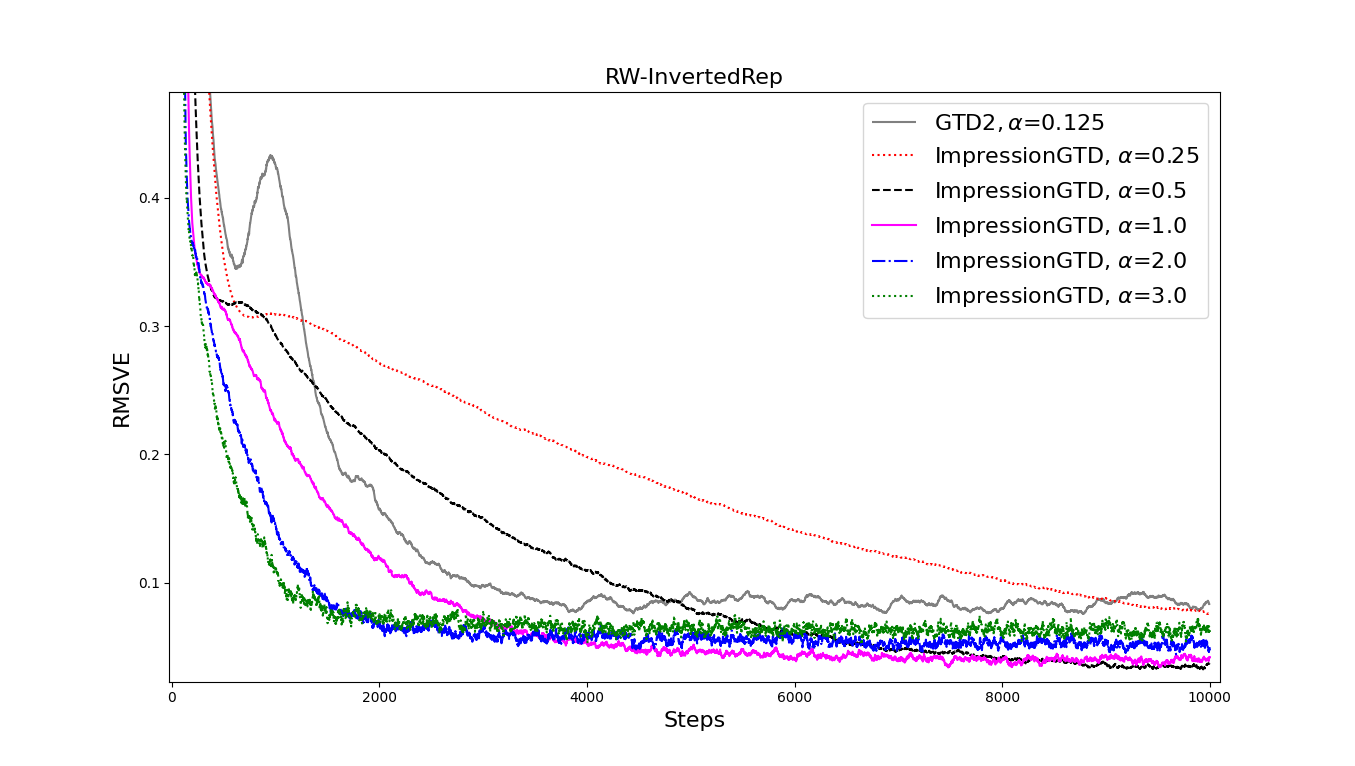}
\end{figure}

Figure \ref{fig:rwinv_alpha} shows the effect of step-size for Impression GTD. The batch size is 32. There is a convergence rate overturn similar to what we observed in the RW-tab problem. This suggests a decaying step-size  can further improve the convergence rate of Impression GTD.  

\subsection{RW-dep}\label{exp:rwdep}
In this random-walk problem, 
the representation for states 1 and 5 is the set of the two unit basis vectors. For states 2 and 4, the feature vectors are [1, 1, 0] and [0, 1, 1], respectively. State 3 is [1, 1, 1]. Finally, each feature vector is $\ell_2$ normalized.

First, we compared the algorithms in the MSPBE measure and the results are shown in Figure \ref{fig:rwdep_pbe}. Impression GTD used a batch size of 32 and a step-size of 0.05. The baseline algorithms used the same hyper-parameters as in the TDRC code base. TDRC performed better than the other baselines. However, the advantage of TDRC over TD/HTD is small. This is probably due to that TDRC mixes TD and TDC via regularizing the helper iterator \citep{martha2020gradient}. \footnote{Mixing the on-policy TD and the off-policy GTD algorithms is also the principle under which Hybrid TD (HTD) was designed \citep{HTD_leah,htd_marhta}.}. After about 2500 steps,  Impression GTD is much faster than TDRC and the others.

\begin{figure}[t]
\caption{RW-dep: Results using RMSPBE.}\label{fig:rwdep_pbe}
\centering
\includegraphics[width=0.9\textwidth]{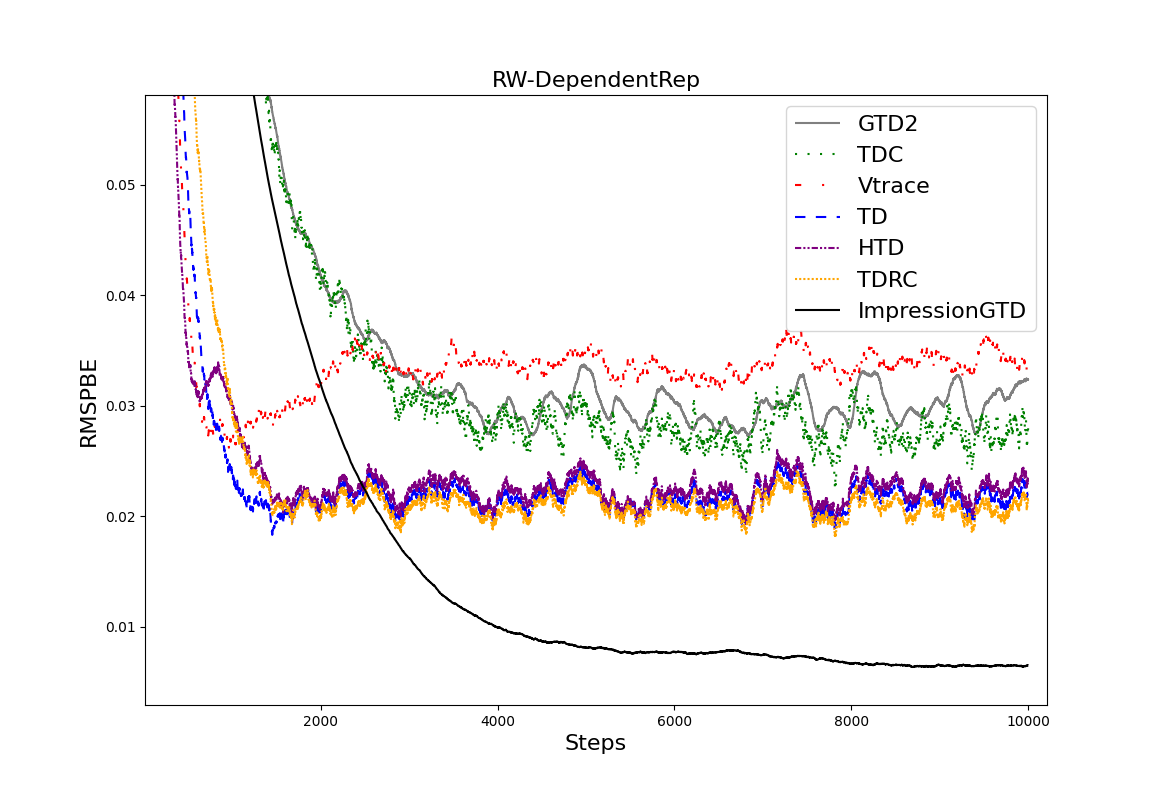}
\end{figure}

We then plotted the RMSVE metric in Figure \ref{fig:rwdep_rmsve}. It shows that Vtrace finds a solution that is far from the others, although under the MSPBE measure the solution is not very far, shown in Figure \ref{fig:rwdep_pbe}. This is another example that we should interpret the MSPBE measure carefully. GTD2 and TDC are faster than TDRC after about 2300 steps. GTD2 is also faster than TDC. This is surprising because usually GTD2 is slower than TDC in terms of the MSPBE. Impression GTD is still faster than GTD2 and the others after an initial learning time.

\begin{figure}[t]
\caption{RW-dep: Results using RMSPBE.}\label{fig:rwdep_rmsve}
\centering
\includegraphics[width=0.9\textwidth]{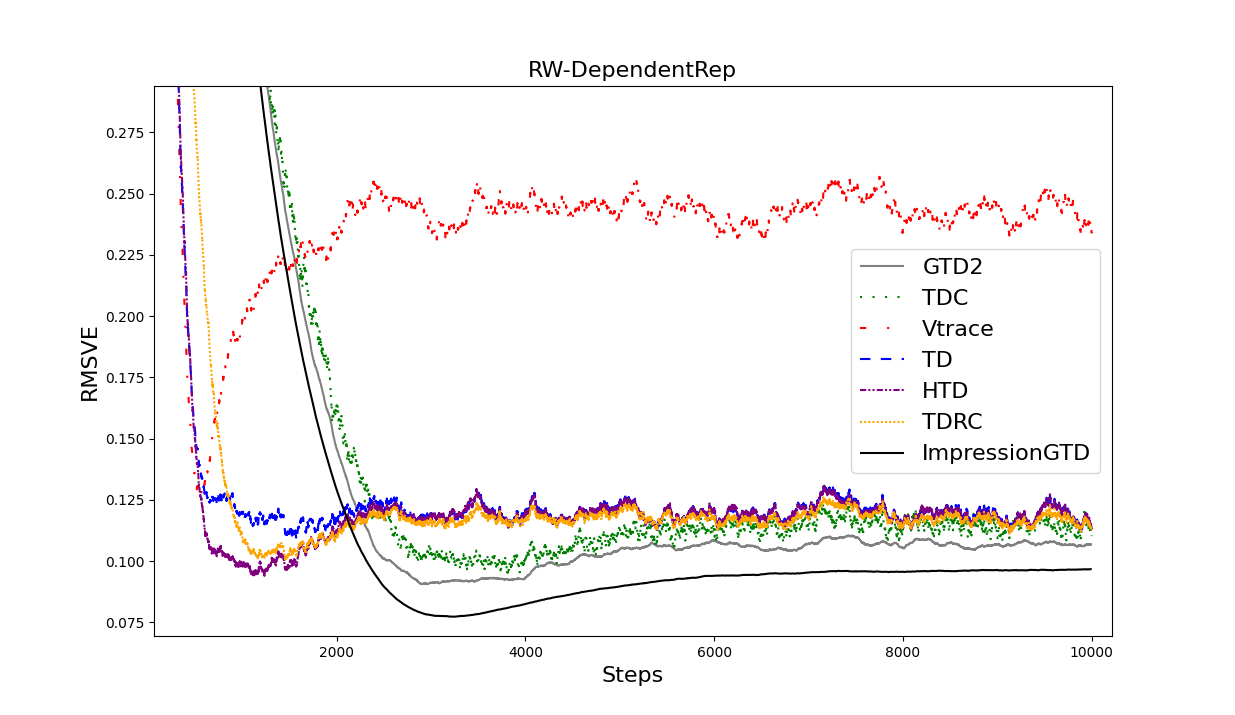}
\end{figure}

Figure \ref{fig:rwdep_batchsize} shows the effect of the batch size for Impression GTD. We also plotted the best performing GTD2. The step-size for all the ImpressoinGTD agents is 0.05. For this problem, the learning is not very hard in the beginning because of the generalization between the features. What’s interesting for this problem is that for all the algorithms, the learning deteriorates and the RMSVE metric makes a way back. This should be because the features are strongly correlated for this representation. Nonetheless,  ImpressoinGTD still learns much better than GTD2 whether in terms of the lowest RMSVE or the final plateau for all the batch sizes.  Bigger batch sizes (e.g., 32 and 64) perform slightly better than smaller ones  (e.g., 8, 16).

\begin{figure}[t]
\caption{RW-dep: Batch size effect for Impression GTD.}\label{fig:rwdep_batchsize}
\centering
\includegraphics[width=0.9\textwidth]{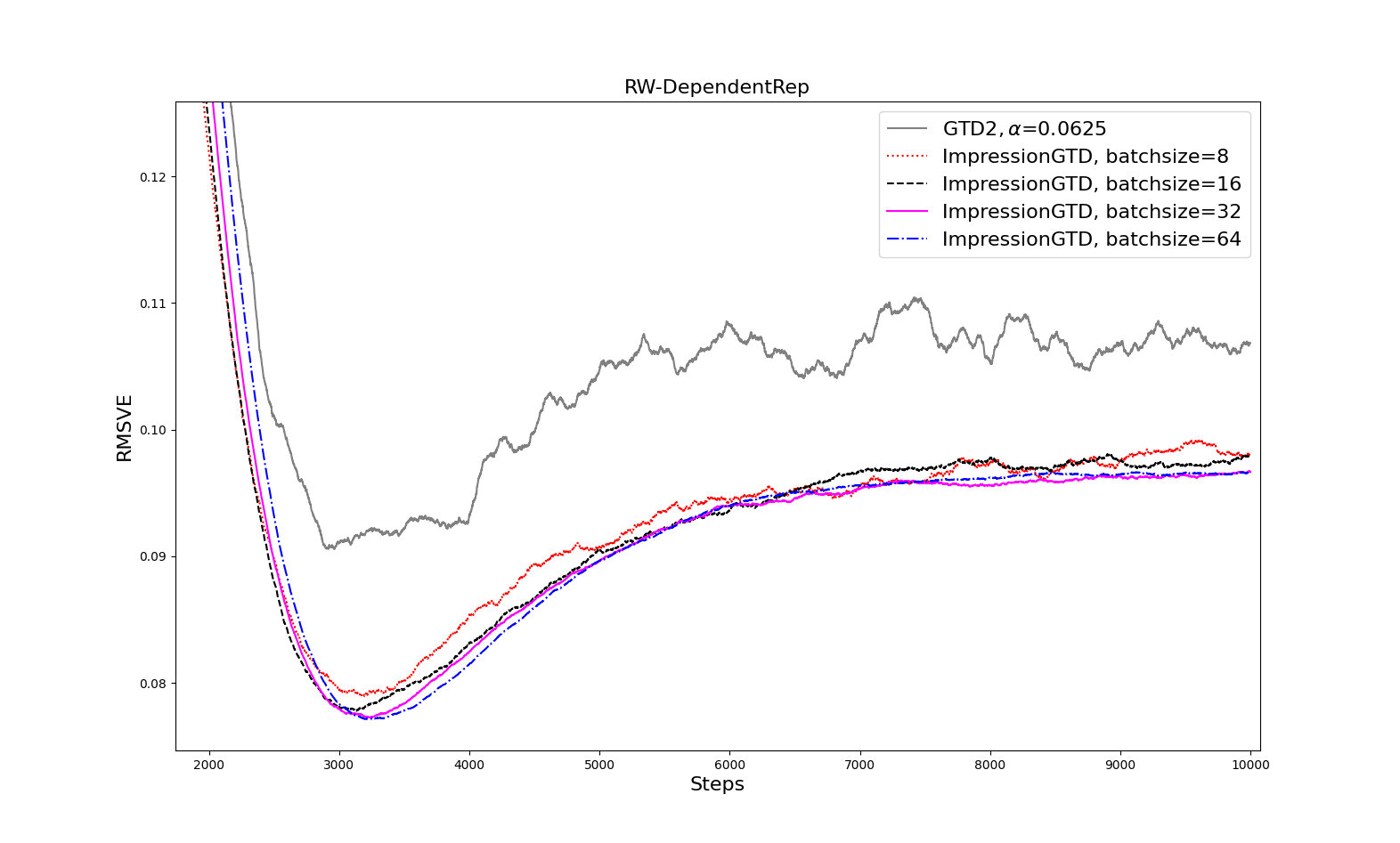}
\end{figure}

Figure \ref{fig:rwdep_stepsize} shows the effects of the step-size for Impression GTD. The batch size for all the Impression GTD agents is uniformly 32. A bigger step-size like 0.5 leads to fastest learning in the beginning. However, because this learning task benefits from the generalization in the representation, the other smaller step-sizes can also quickly minimize the learning error. We don’t know exactly why the lowest point for the blue line (step-size 0.5) in the beginning is higher than the others. Probably it is because a big step-size couldn’t go all the way to reach the bottom of the valley in the loss.  Smaller step-sizes like 0.1, 0.05 and 0.025 seems to have a lower low as we decrease the step-size. For example, the step-size 0.025 is slow before about 4000 steps. However, after that, the drop in the error is fast and the final solution is the best.

\begin{figure}[t]
\caption{RW-dep: Step-size effect for Impression GTD.}\label{fig:rwdep_stepsize}
\centering
\includegraphics[width=0.9\textwidth]{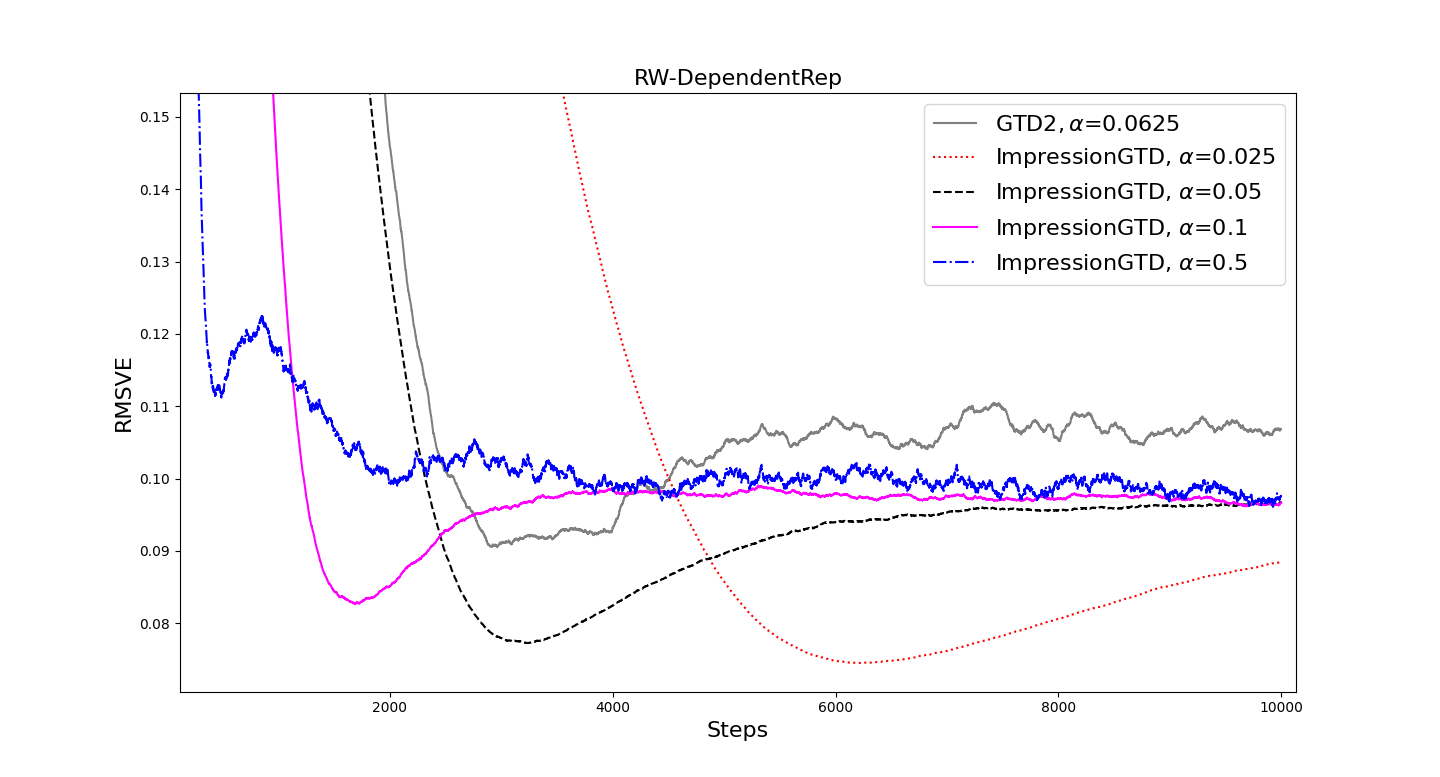}
\end{figure}

\subsection{Baird counterexample}\label{exp:baird}
We use the 7-state version of the problem by \citet{sutton2018reinforcement}. 
Although appearing simple, this problem in fact is very challenging for off-policy learning \citep{baird1995residual}. The discount factor is reduced from 0.99 to 0.9 to induce more contraction, and make it not so challenging in terms of the convergence rate. This rules out the possibility that GTD and TDC are slow because the problem is too challenging.  

The performance of Impression GTD algorithms is shown in Figure \ref{fig:baird_results}. For a clear visualization, we only show the curves of GTD, TDC and TDRC. The other baselines were not as fast as the chosen baselines. 
For TDRC, all the hyper-parameters are used the same as the TDRC paper, which were selected by the original authors from an extensive sweep search. The $\alpha$ was 0.03125, $\beta$ (the regularization factor) was 1.0, and $\eta$ was 1.0, too. We also tried bigger values of $\alpha$ (without changing $\beta$ or $\eta$), including 0.04 and 0.05. They had either much bigger variances or diverged. Impression GTD used a batch size of 10. All the algorithms are corrected by $\rho$, the importance sampling ratio.   

The Impression GTD didn’t start learning until 100 steps of following the behavior policy, filling the buffer with some content. Impression GTD agents learn very fast, with a steep drop in the value estimation error, all the way down to near zero. With a small $\alpha$ like 0.001, the algorithm converges slower, but it also drives the RMSVE down to near zero. 

The curves of Impression GTD exhibit the pattern of a  linear convergence of SGD to the optimal solution in Theorem \ref{thm:sgd_rate}, which derives the rate of Impression GTD in Theorem \ref{thm:rates_all}. Also see our discussions right before Section \ref{sec:experiments}, about why this time Impression GTD with a constant step-size converges to the optimal solution in a linear rate, instead of to a neighbourhood of it. In \citep{1_over_t_information_optiaml}, $B$ is defined to form a space of norm-bounded linear predictors, 
\[
\Theta = \{ \norm{\theta}\le B \}, 
\]
in which the solution of $\theta$ will be sought after. \citeauthor{1_over_t_information_optiaml} also assumes that the target signal $y$ is bounded: $\norm{y}\le Y$. For Baird counterexample, the true value function, or the target signal vector $y$ is zero. Also an optimal weight vector is zero. 
Thus $B=Y=0$ is a valid and tight choice for this problem, for which equation \ref{eq:worst_rate_1_over_t} does not state a valid worst case bound except an obvious fact. 

To see why Theorem \ref{thm:rates_all} correctly states the convergence of Impression GTD to the optimal solution, it suffices to examine the bias term, which has a numerator, $m(\sigma^2-\sigma^2_v) + L\alpha\sigma^2_v$. According to the definition of $\sigma^2_v$ in Lemma \ref{lem:ES_qLsmooth}, 
\[
\sigma^2_v=\min_{\theta} \EED\norm{g_t(\theta) -f'(\theta)}^2=\EED\norm{g_t(0) - f'(0)}^2=0,\]
because one optimal weight vector $\theta^*$ is equal to zero, which achieves zero TD error for every sample (an over-parameterization case). Note $f'(0)$ is zero too because $\theta^*=0$ is a stationary point of the NEU objective function. 

According to Lemma \ref{thm_item:Llambdasmoooth}, which defines the constants referred to in Theorem \ref{thm:rates_all},
\begin{align*}
\sigma^2 &= 16\left(\frac{\norm{\Sigma_A}^2}{m_1} +  \norm{A}^2\right) \left(\frac{\norm{\Sigma_A}^2}{m_2} \norm{\theta^*}^2 + \frac{\norm{\Sigma_b}^2}{m_2}\right)\\
&= 16\left(\frac{\norm{\Sigma_A}^2}{m_1} +  \norm{A}^2\right) \left(\frac{\norm{\Sigma_A}^2}{m_2} \cdot 0 + \frac{\norm{0}^2}{m_2}\right)\\
&=0, 
\end{align*}
because $\theta^*=0$. Also note that $\Sigma_b=0$, because $\Sigma_b(i)=\sqrt{Var(\phi(i)r)}$, according to the definition in Lemma \ref{thm_item:Llambdasmoooth}, and all the rewards are zero for this problem. This proves the bias term of Impression GTD in Theorem \ref{thm:rates_all} is zero for this example, and thus Impression GTD converges to the optimal solution with a linear rate for Baird counterexample. This matches the empirical results presented in Figure \ref{fig:baird_results}.  

\begin{figure}[t]
\caption{Baird counterexample results.}\label{fig:baird_results}
\centering
\includegraphics[width=1.06\textwidth]{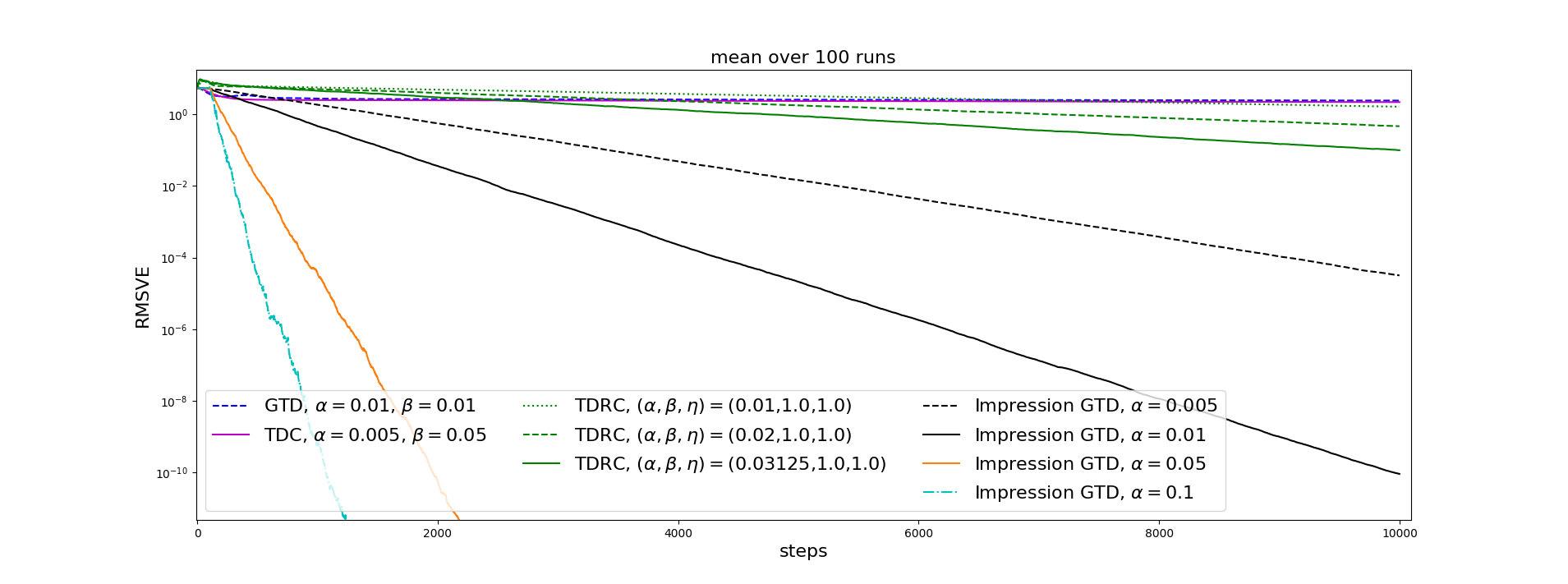}
\end{figure}

\end{document}